\documentclass[11 pt]{article}
\usepackage{graphicx} 
\usepackage{subcaption}
\usepackage{url}
\usepackage{tikz-cd}
\usepackage{tikz}
\usepackage{xcolor}
\usetikzlibrary{decorations.pathmorphing, calc}
\usepackage{amsmath}
\usepackage{amsthm}
\usepackage{amssymb}
\usepackage[rgb,dvipsnames]{xcolor}
\usepackage{titling}
\usepackage{diffcoeff,amssymb}
\usepackage{comment}
\usepackage{algorithm}
\usepackage{algpseudocode}
\usepackage{pifont}
\usepackage{amssymb}
\usepackage{mathtools}
\usepackage{mathrsfs}
\usepackage{dsfont}
\usepackage{bm}
\usepackage[title]{appendix}
\usepackage{thmtools,thm-restate}
\usepackage[utf8]{inputenc}
\usepackage[T1]{fontenc}
\usepackage[numbers]{natbib}
\usepackage[left=2.4cm,right=2.4cm,top=2.3cm,bottom=2cm]{geometry}
\usepackage{hyperref}
\usepackage{cleveref}

\newcommand{\probP}{\mathbb{P}}

\def \V {{\bm V}}

\def \c {{\bm c}}
\def \b {{\bm b}}

\def \e {{\bm e}}

\def \M {{\bm M}}
\def \a {{\bm a}}

\def \v {{\bm v}}
\def \w {{\bm w}}
\def \u {{\bm u}}
\def \x {{\bm w}}
\def \y {{\bm y}}
\def \z {{\bm x}}  
\def \h {{\bm h}}
\def \A {{\bm A}}

\def \r {{\bm r}}
\def \W {{\bm W}}

\def \X {{\bm X}}

\def \Z {{\bm X}}

\newcommand{\norm}[1]{\ensuremath{\left\|#1\right\|}}

\newtheorem{theorem}{Theorem}
\newtheorem{proposition}{Proposition}
\newtheorem{remark}{Remark}%
\newtheorem{corollary}{Corollary}
\newtheorem{definition}{Definition}
\newtheorem{lemma}{\bf Lemma}
\numberwithin{equation}{section}
\usepackage[parfill]{parskip}
\newcommand{\RSnew}[1]{{#1}}

\title{The Measure of Deception: An Analysis of Data Forging in Machine Unlearning}

\author{
  Rishabh Dixit\thanks{Department of Mathematics, UC San Diego 
  (ridixit@ucsd.edu).}
  \and
  Yuan Hui\thanks{Department of Mathematics, UC San Diego 
  (yuhui@ucsd.edu).}
  \and
  Rayan Saab\thanks{Department of Mathematics and Hal{\i}c{\i}o{g}lu Data Science Institute, UC San Diego 
  (rsaab@ucsd.edu).}
  \thanks{Authors listed in alphabetical order.}
}
\date{\vspace{-4ex}}

\begin{document}

\maketitle
\begin{abstract}
    Motivated by privacy regulations and the need to mitigate the effects of harmful data, machine unlearning seeks to modify trained models so that they effectively ``forget'' designated data. A key challenge in verifying unlearning is \emph{forging}---adversarially crafting data that mimics the gradient of a target point, thereby creating the appearance of unlearning without actually removing information. To capture this phenomenon, we consider the collection of data points whose gradients approximate a target gradient within tolerance $\epsilon$---which we call an $\epsilon$-forging set---and develop a framework for its analysis. For linear regression and one-layer neural networks, we show that the Lebesgue measure of this set is small. It scales on the order of $\epsilon$, and when $\epsilon$ is small enough, $\epsilon^d$. More generally, under mild regularity assumptions, we prove that the forging set measure decays as $\epsilon^{(d-r)/2}$, where $d$ is the data dimension and $r<d$ is the \RSnew{dimension of vector space of right singular vectors corresponding to ``small'' singular values} of a variation matrix defined by the model gradients. Extensions to batch SGD and almost-everywhere smooth loss functions yield the same asymptotic scaling. In addition, we establish probability bounds showing that, under non-degenerate data distributions, the likelihood of randomly sampling a forging point is vanishingly small. These results provide evidence that adversarial forging is fundamentally limited and that false unlearning claims can, in principle, be detected.
\end{abstract}

\section{Introduction}
Modern machine learning increasingly faces the requirement to forget specific training data---whether due to legal mandates such as the GDPR’s “right to be forgotten” \cite{mantelero2013eu} or user privacy requests. A widely adopted response to this challenge is machine unlearning \cite{bourtoule2021machine}\cite{nguyen2022survey}\cite{neel2021descent}\cite{gupta2021adaptive}\cite{cao2015towards}, which aims to modify a trained model as if certain data had never been seen. On the other hand, most of the existing machine unlearning algorithms rarely achieve true data erasure. Instead, they provide approximate guarantees---only ensuring that the updated model’s distribution resembles that of a model retrained without the data \cite{pawelczyk2024machine}\cite{thudi2022necessity}\cite{chowdhury2024towards}. As a result, fully retraining, with the target data removed from the training set, remains the rigorous solution in general. Since retraining a model from scratch is often prohibitively expensive, it creates a natural temptation to ``forge" a training trajectory, crafting an altered sequence that appears to comply with unlearning requests while leaving the final model largely unchanged \cite{thudi2022necessity}. 

From the perspective of a model trainer, the motivation to forge can be considerable. Reconstructing a trajectory that does not truly remove the targeted data but closely replicates the original gradient updates offers several advantages. First, the model’s utility is preserved, avoiding any degradation in performance due to stochastic retraining variability. Second, the computational cost of forging may be negligible compared to full retraining, especially in large-scale deep learning contexts where retraining costs can be immense. Thus, forging may appear to be a low-risk, high-reward, albeit unethical alternative to principled unlearning.

To further illustrate the incentives for forging, consider non-convex learning problems, which are ubiquitous in deep neural networks. Even minor changes in the training data can lead to qualitatively different models, as the optimization may converge to different local minima. This effect is particularly pronounced if the data to be removed occupies a meaningful subregion of the data space, such as a specific class or cluster, rather than being more uniformly distributed. In such cases, retraining without that data could easily yield a model that differs significantly from the original.

These two factors---the strong incentive to avoid retraining and the high likelihood of model drift due to principled unlearning---make forging a compelling albeit unethical strategy. While prior work has demonstrated that it is often possible to construct forged mini-batches that replicate original gradients with high precision \cite{thudi2022necessity}, we show that the set of such forging batches is vanishingly small in data space. That is, although forging is algorithmically feasible, it is statistically brittle: the probability of encountering forging batches under realistic data distributions is exceedingly low. Our work establishes the first quantitative framework for gradient-based data forging, thereby deepening the understanding of this phenomenon beyond recent results \cite{suliman2024data}. This has significant implications, both for the auditability of unlearning processes and for the potential to defend against deceptive forgeries---an area previously thought to be highly challenging \cite{zhang2024verification}.
Since the measure of forging batches (or a forging data point) is vanishingly small under any non-degenerate data distribution, an adversary attempting to forge must rely on highly atypical data points that deviate from the natural distribution. In a real-world unlearning audit, such deviations could be identified  through statistical distribution tests on purported training batches, for example. In effect, our results imply that gradient-forging attacks---while technically possible---require distributional anomalies that are inherently easy to identify, offering a potential line of defense previously  considered out of reach.

\subsection{Problem Setup}
To formalize \emph{data forging}, we consider a model trained to minimize an empirical loss function \( f(\bm{w}; \bm{x}) \), where \( \bm{w} \in \mathbb{R}^n \) denotes the model parameters and \( \bm{x} \in \mathbb{R}^d \) is a data point. Given a dataset \( D \), standard training via stochastic gradient descent (SGD) produces a sequence of iterates
\[
\bm{w}_{k+1} = \bm{w}_k - h_k \cdot \frac{1}{|B_k|} \sum_{\bm{x} \in B_k} \nabla_{\bm{w}} f(\bm{w}_k; \bm{x}),
\]
where \( B_k \subset D \) denotes the mini-batch used at step \( k \) and \( h_k \) is the learning rate. Suppose that a particular data point \( \bm{x}^* \in D \) must be removed (e.g., due to a deletion request). Instead of retraining from scratch on \( D \setminus \{ \bm{x}^* \} \), a model trainer may attempt to \emph{forge} a new sequence of mini-batches \( \{\widetilde{B}_k\} \), each disjoint from \( \bm{x}^* \), such that the resulting forged trajectory
\[
\widetilde{\bm{w}}_{k+1} = \widetilde{\bm{w}}_k - h_k \cdot \frac{1}{|\widetilde{B}_k|} \sum_{\bm{x} \in \widetilde{B}_k} \nabla_{\bm{w}} f(\widetilde{\bm{w}}_k; \bm{x})
\]
satisfies \( \|\widetilde{\bm{w}}_{k} - \bm{w}_{k}\| \leq \delta \) for all \( k \), with some small tolerance \( \delta \). A common strategy is \emph{gradient matching}, where each forged batch is selected to approximate the gradient of the original batch:
\[
\left\| \frac{1}{|\widetilde{B}_k|} \sum_{\bm{x} \in \widetilde{B}_k} \nabla_{\bm{w}} f(\RSnew{\widetilde{\bm{w}}}_k, \bm{x}) - \frac{1}{|B_k|} \sum_{\bm{x} \in B_k} \nabla_{\bm{w}} f(\bm{w}_k, \bm{x}) \right\| \leq \epsilon.
\]
with \( \epsilon \ll 1 \), ensuring that the forged update closely tracks the original trajectory. In particular, when the batch size is set to one, the \emph{gradient matching} condition reduces to the \emph{one-step forging problem} where one seeks \( \widetilde{\bm{x}}\neq \bm{x}^* \) while satisfying: 
\begin{equation}\label{grad-match}
    \left\| \nabla_{\bm{w}} f(\bm{w}_k, \bm{x}^*) - \nabla_{\bm{w}} f(\bm{w}_k, \widetilde{\bm{x}}) \right\| \leq \epsilon.
\end{equation}
The alternative mini-batches $\{\widetilde{B}_k\}$ or the data point $\widetilde{\bm x}$ need not belong to the original dataset $D$. In principle, a forger can choose the forging data from anywhere in the ambient space that contains the data distribution. Throughout the analysis, we condition on the original data and the model trajectory, even if they are obtained from SGD, since the forging process takes place entirely after training has concluded.

\subsection{Related Work and Contributions}
\paragraph{Related Work.} Previous work has primarily focused on developing unlearning algorithms with an emphasis on practical efficiency. In recent years, however, increasing studies have been focusing on the certification and verification of these methods. Thudi et al.~\cite{thudi2022necessity} argue that formally proving the absence of a specific data point after a claimed unlearning process is unrealistic, unless the process is subject to external scrutiny, such as an audit. This stems from a common assumption in the literature: that the model should not change significantly when the data is modified. As a result, it is often possible to construct an alternative dataset that produces a similar model, which renders exact verification of data removal infeasible. 

Baluta et al.~\cite{baluta2023unforgeability}  consider forging under a fixed-point model of computation and demonstrate that exact forging under that model is unrealistic. They show that even small floating-point errors can be amplified over the course of training, making precise replication infeasible. On the other hand, to fully understand the implications of forging in machine unlearning, we establish a quantifiable framework—one that supports rigorous analysis of more advanced, model-driven forgery attacks that are less reliant on numerical precision. Suliman et al.~\cite{suliman2024data} similarly argue that forging is both difficult and empirically detectable. Their results show that errors introduced by greedily constructed forged batches typically exceed those caused by benign sources of randomness during training. Their theoretical analysis in the setting of logistic regression provides insight into why forging is inherently challenging. This paper generalizes the analysis beyond logistic regression and extends insights to a broader class of models, aiming to establish a unified theoretical foundation for analyzing forging in modern architectures, including deep neural networks and large language models (LLMs).

\paragraph{Motivations.} Successful forging can offer two main advantages. First, by replicating a model’s trajectory, a forger can preserve the model’s utility and avoid the cost of retraining---especially when retraining is impractical. For instance, if the loss function exhibits local convexity, a small change in the data trajectory leaves the model nearly unchanged. Second, when the loss landscape is complex and sensitive to data, a forger can craft an alternative point that mimics the effect of the original data. This scheme keeps the model from drifting toward a different local optimum and helps maintain its original behavior. We present examples highlighting both motivations in \Cref{MOT}. Forging, therefore, poses a serious threat to genuine unlearning. One main goal of this paper is to deepen the theoretical understanding of forging, with the hope that this can assist in detecting forgery attempts and strengthening the robustness of unlearning algorithms.

\paragraph{Our Contributions.}
We develop a measure-theoretic framework for analyzing $\epsilon$-forging sets---the collection of data points whose gradients replicate the original update within tolerance $\epsilon$.  Beginning with linear regression, we show that the Lebesgue measure of the forging set scales on the order of $\epsilon$ (\Cref{prop:lr-eps-forging}), and establish the same scaling law for one-layer neural networks (\Cref{prop: nn-eps-forging}). We then generalize to smooth loss functions and, under mild regularity assumptions on the loss landscape and model gradient, prove in \Cref{seconvarthm1} that the forging set measure is bounded by $\epsilon^{(d-r)/2}$, where $d$ is the data dimension and $r<d$ is the \RSnew{dimension of vector space of right singular vectors corresponding to ``small" singular values} of a certain variation matrix introduced in our analysis. For simple problems such as linear regression, we show that \RSnew{$r = 2$} (Appendix~\ref{degenerateassumptionsec}). Applying the same reasoning, we extend these bounds to batch SGD. Finally, by invoking measure regularity, we obtain a general result for almost-everywhere smooth loss functions (\Cref{aeforgethm1}), which also yields an $\epsilon^{(d-r)/2}$ scaling provided $\epsilon$ is sufficiently small and satisfies a cover separation condition (Lemma~\ref{coverlemma00}).

In addition, under a non-degeneracy assumption on the data distribution, we show that the probability of randomly sampling a forging point is vanishingly small unless the data are adversarially engineered. We provide probability bounds in both simple settings (Corollaries~\ref{corlr-p}, \ref{cornn-p}) and general settings (Theorems~\ref{seconvarthm1p}, \ref{aeforgethm2}). Thus, our results not only align with empirical findings on forgery detectability \cite{baluta2023unforgeability,suliman2024data}, but also provide a rigorous quantitative framework that sheds light on some fundamental limitations of forgery-based attacks in unlearning.

\paragraph{Paper Organization.}
\Cref{NOTE} introduces the notation used throughout. \Cref{MOT} presents the motivation for studying forging-type adversarial attacks, illustrated with concrete examples. \Cref{case} analyzes the forging set in two fundamental settings---linear regression and one-layer neural networks. \Cref{generalforgeanalysismainsec} develops the general framework for smooth loss functions, and \Cref{forgeinbatch} extends the analysis to batch SGD. \Cref{sectionaeforge} extends the results further to almost-everywhere smooth loss functions. \Cref{conclusion} summarizes our findings and outlines directions for future work. The Appendix provides detailed proofs and additional technical material.

\subsection{Notation}\label{NOTE}
We use $\bm x\in\mathbb{R}^d$ to denote a data point and $y\in\mathbb{R}$ to denote its associated label. A collection of such samples is denoted by $D$. For a vector $\bm v \in \mathbb{R}^{d}$, we use $v_j \in \mathbb{R}$ to denote its $j$-th entry and $\|\bm v\| = \|\bm v\|_2 = \sqrt{\sum_{j} v_j^2}$. The standard basis vector in $\mathbb{R}^d$ with a $1$ in the $i-$th entry and zeros elsewhere is \(\bm{e}_i\). $\bm 1$ denotes the all-ones vector. For a matrix $\bm M\in\mathbb{R}^{n\times d}$, we use $\bm m_j \in \mathbb{R}^{n}$ for its $j$-th column, $\bm m_i^T \in \mathbb{R}^{d}$ for its $i$-th row, and $m_{ij}$ for the $(i,j)$-th entry of $\bm M$. The Frobenius norm is $ \|\bm M\|_{F} := \sqrt{\sum_{i,j} m_{ij}^{2}}$ and the operator norm is $\norm{\bm M} $. The indicator function of a set $\mathcal{X}$ is $\bm 1_{\mathcal{X}}$. \RSnew{The cardinality of a countable set $\mathcal{X}$ is $ \#\mathcal{X}$.}

We denote by $\mathcal{B}_r(\bm x)$ the open ball centered at $\bm x$ of radius $r$ and and $\mathcal{B}_r := \mathcal{B}_r(\bm{0})$ when centered at the origin. The unit sphere in $\mathbb{R}^d$ is $S^{d-1}$. For a set $A\subset\mathbb{R}^{d}$, we denote its diameter by $\text{diam}(A) := \sup_{x, y \in A} \|x - y\|_2$.  We denote the Lebesgue measure by $\mu$. For $A\subset\mathbb{R}^d$,  its Lebesgue measure, or volume, in $\mathbb{R}^d$ is $\text{vol}_{\mathbb{R}^d}(A)$, so that $\text{vol}_{\mathbb{R}^d}(\mathcal{B}_r)$ represents the volume of a ball centered at the origin with radius $r$. We write $p(\bm x)$ for a probability density function and $\mathbb{P}_{\mathcal{D}}(X = \bm x)$ for the probability of a random variable $X$ taking value $\bm x$ under distribution $\mathcal{D}$. The abbreviation ``a.e.'' stands for ``almost everywhere'' on a measurable space. 

The symbol $\bigoplus$ represents the direct or orthogonal sum of vector spaces, $\bigotimes$ is used for product of measures, $\otimes$ is used to for the Kronecker product and $\odot$ is the Hadamard product. For two sets $A,B$ the set $A+B$ is their Minkowski sum. $ker(\cdot)$ represents the kernel, $dim(\cdot)$ represents the dimension, and for any vector spaces $A,B$ with $A \subset B$, $A ^{\perp}$ represents the orthogonal complement of $A$ in $B$, where $B$ is understood from the context. The symbol $\mathcal{O}$ represents the Big-O notation, the symbol $o$ represents the little-o notation.   For any two sets $A,B$ in some topological space $X$, $A\Subset B$ means $A$ is compactly embedded in $B$ with respect to the topology on $X$. For a set $A$, $\partial A$ denotes its boundary when defined and for a continuous function $f$, $\partial f(x)$ represents the generalized sub-differential set of $f$ at $x$. Throughout the paper $\nabla f(\w;\z) $ denotes the gradient of $f$ with respect to the first argument $\w$. $\mathcal{C}^r$ represents the class of $r-$continuously differentiable functions.

\section{Motivation}\label{MOT}
We now illustrate two concrete scenarios in machine unlearning where forging introduces strong, realistic, and arguably perverse incentives: (1) forging to preserve the original model, and (2) forging to prevent significant model drift when the model is highly sensitive to  minor data modifications.

\paragraph{Forging can allow the model to remain unchanged.} A particularly compelling incentive arises when replacing a data point with a carefully chosen alternative induces negligible change in the model without incurring the cost of retraining from scratch. As a concrete illustration, \Cref{thm:forging_incentive} demonstrates that when a well chosen replacement point approximately preserves the gradient of a locally smooth, strongly convex loss function, the resulting model parameters remain approximately unchanged. Before stating the theorem, we introduce some notation. Let $(\bm x_0, \bm x_1, \bm x_2, ..., \bm x_{N-1})$ denote the sequence of data points used for  $N$ updates, initialized at parameter $\bm w_0 \in  \mathbb{R}^n$. The iterates evolve according to the standard SGD-type rule:
\begin{equation}\label{update}
    \bm w_k = \bm w_{k-1} - h_{k-1}\nabla f_{k-1}(\bm w_{k-1})
\end{equation}
where $h_{k-1}$ is the step size, and  $f_{k-1}(\bm w) \coloneqq f(\bm w; \bm x_{{k-1}})$ at step $k-1$ for $1 \leq k \leq N$. As is typical in SGD optimization, data points may be reused. 

Without loss of generality, we assume  forging occurs at the beginning of the trajectory, at $\bm x_0$, for a total of $m+1$ times---one for each appearance of $\bm x_0$. Then the original and forged sequences are
\begin{equation}\label{orig-path-1}
    (\bm x_0, ..., \bm x_{n_1-1}, \bm x_0, \bm x_{n_1+1}, ..., \bm x_{n_m-1}, \bm x_0, \bm x_{n_m+1}, ..., \bm x_{N-1})
\end{equation}
and 
\begin{equation}\label{altern-path-1}
    (\widetilde{\bm x}_0, ..., \bm x_{n_1-1},\widetilde{\bm x}_0, \bm x_{n_1+1}, ..., \bm x_{n_m-1}, \widetilde{\bm x}_0, \bm x_{n_m+1}, ..., \bm x_{N-1}).
\end{equation}
Applying the update rule \eqref{update}, the data trajectory \eqref{orig-path-1} induces the parameter sequences
\begin{equation}\label{orig-wpath-1}
    (\bm w_0, \bm w_1,..., \bm w_{n_1}, \bm w_{n_1+1}, ..., \bm w_{n_m}, \bm w_{n_m+1}, ..., \bm w_{N}).
\end{equation}
Define $\widetilde{f}_0(\bm w) = f(\bm w; \widetilde{\bm x}_0)$. Then the alternative model trajectory resulting from replacing $\bm x_0$ by $\widetilde{\bm x}_0$ as in \eqref{altern-path-1}, and correspondingly replacing $f_{0}$ by $\widetilde{f}_0$ in \eqref{update} is
\begin{equation}\label{altern-wpath-1}
    (\bm w_0, \widetilde{\bm w}_1,..., \widetilde{\bm w}_{n_1}, \widetilde{\bm w}_{n_1+1}, ..., \widetilde{\bm w}_{n_m}, \widetilde{\bm w}_{n_m+1}, ..., \widetilde{\bm w}_{N}).
\end{equation}

Before quantifying the difference of forged and original parameter trajectories, we \RSnew{recall the definitions for strong convexity and Lipschitz smoothness.}

\RSnew{\begin{definition}[$\mu$-strong convexity]
Let $O\subseteq \mathbb{R}^n$ be an open convex set and let $f:O\to\mathbb{R}$ be differentiable.
We say $f$ is $\mu$-strongly convex on $O$ (with $\mu>0$) if for all $\bm x,\bm y\in O$,
\(
f(\bm y)\ge f(\bm x)+\langle \nabla f(\bm x),\,\bm y-\bm x\rangle+\frac{\mu}{2}\|\bm y-\bm x\|^2.
\)
\end{definition}}
\RSnew{\begin{definition}[$L$-smoothness]
Let $O\subseteq \mathbb{R}^n$ be an open convex set and let $f:O\to\mathbb{R}$ be differentiable.
We say $f$ is $L$-smooth on $O$ (with $L>0$) if for all $\bm x,\bm y\in O$,
\(
\|\nabla f(\bm x)-\nabla f(\bm y)\|\le L\|\bm x-\bm y\|.
\)
Equivalently, for all $\bm x,\bm y\in O$,
\(
f(\bm y)\le f(\bm x)+\langle \nabla f(\bm x),\,\bm y-\bm x\rangle+\frac{L}{2}\|\bm y-\bm x\|^2.
\)
\end{definition}}
\RSnew{Next, we }introduce the following definition \cite{pach2008state}, \cite{driemel2010approximating}.
\vspace{1em}
\begin{definition}\label{def: epstube}
The \emph{discrete $\epsilon$-tube} around  trajectory \eqref{orig-wpath-1} is the union of open $\epsilon$-balls centered at each point:
  \[
T_\epsilon^{\mathrm{disc}}:=T_\epsilon^{\mathrm{disc}}(\bm w_0, ..., \bm w_N) =  \bigcup_{i=0}^N \mathcal{B}_\epsilon(\bm{w}_i), \quad \text{where } \mathcal{B}_\epsilon(\bm{w}_i) := \left\{ \bm{x} \in \mathbb{R}^n : \|\bm{x} - \bm{w}_i\| < \epsilon \right\}.
  \]
The \emph{interpolated (or continuous) $\epsilon$-tube} is  the union of $\epsilon$-balls centered along the line segments between successive points:
  \begin{equation}
  T_\epsilon^{\mathrm{cont}}:=T_\epsilon^{\mathrm{cont}}(\bm w_0, ..., \bm w_N) = \bigcup_{i=0}^{N-1} \bigcup_{t \in [0,1]} \mathcal{B}_\epsilon\left( (1-t)\bm{w}_i + t\bm{w}_{i+1} \right).
  \end{equation}
$T_\epsilon^{\mathrm{disc}} \subseteq T_\epsilon^{\mathrm{cont}}\subset \mathbb{R}^n$, and the inclusion is strict when adjacent points are separated by more than $2\epsilon$.
\end{definition}

We now state our first result showing that the resulting model can remain nearly unchanged even when a data point in the training trajectory is replaced by a far-away point.
\vspace{1em}
\begin{theorem}\label{thm:forging_incentive}
Let the functions $f_k$, \( 1 \leq k \leq N \), be \( \mu_k \)-strongly convex and \( L_k \)-smooth on $S\subset \mathbb{R}^n$
and let \( \{\bm w_i\}_{i=0}^N \subset  T^{\mathrm{cont}}_\epsilon \subset S \) be the SGD trajectory as given in \eqref{orig-wpath-1}. 
Denote the gradient deviation caused by replacing \( \bm x_0 \) with \( \widetilde{\bm x}_0 \) at $k=1$ by $\delta_0 \coloneqq \|\nabla f_0(\bm w_0) - \nabla \widetilde{f}_0(\bm w_0)\| \leq \epsilon$
where $\widetilde{f}_0( \cdot ) = f(\cdot; \widetilde{\bm x}_0 )$. Assume that \( f_0 \in \mathcal{C}^2 \), and that for each subsequent replacement step \( k > 1 \)
\begin{equation}\label{lip-in-data}
    \|\nabla f_0(\widetilde{\bm w}_k) - \nabla \widetilde{f}_0(\widetilde{\bm w}_k)\| \leq \mu_0 \|\widetilde{\bm w}_k - \bm w_k\|.
\end{equation}
Then, if the step sizes satisfy \( h_k \leq \frac{1}{L_k} \) for all \( k \), the final model parameters satisfy $\|\widetilde{\bm w}_N - \bm w_N\| < \delta_0$.

\end{theorem}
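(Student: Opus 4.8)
The plan is to bound, step by step, the deviation $\bm e_k := \widetilde{\bm w}_k - \bm w_k$ between the original parameter trajectory \eqref{orig-wpath-1} and the forged one \eqref{altern-wpath-1}, starting from $\bm e_0 = \bm 0$, and to show that $\|\bm e_k\|$ never exceeds the value it takes right after the first forging step. The workhorse is the contraction property of a gradient-descent step on a strongly convex smooth function: if $g$ is $\mu$-strongly convex and $L$-smooth and $0 < h \le 1/L$, then $\Phi_{g,h}(\bm w) := \bm w - h\nabla g(\bm w)$ obeys $\|\Phi_{g,h}(\bm u) - \Phi_{g,h}(\bm v)\| \le (1-h\mu)\|\bm u - \bm v\|$. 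For $g = f_0 \in \mathcal{C}^2$ this is immediate from $\nabla f_0(\bm u) - \nabla f_0(\bm v) = \big(\int_0^1 \nabla^2 f_0(\bm v + t(\bm u-\bm v))\,dt\big)(\bm u - \bm v)$ and the fact that the averaged Hessian has spectrum in $[\mu_0, L_0]$, so $\bm I$ minus $h$ times it has operator norm $\le 1 - h\mu_0$; for the other $f_k$ it is the standard gradient-descent contraction bound. (One also carries, as part of the same induction, the claim that the forged iterates remain in $S$, using the $\epsilon$-tube containment $\{\bm w_i\}\subset T^{\mathrm{cont}}_\epsilon\subset S$ together with the deviation bound, so the curvature estimates are applicable.)

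Next I would split the updates $k = 1,\dots,N$ into three types. (i) At a \emph{non-forging} step, both trajectories apply the same $\Phi_{f_{k-1},h_{k-1}}$, so $\bm e_k = \Phi_{f_{k-1},h_{k-1}}(\widetilde{\bm w}_{k-1}) - \Phi_{f_{k-1},h_{k-1}}(\bm w_{k-1})$ and hence $\|\bm e_k\| \le (1 - h_{k-1}\mu_{k-1})\|\bm e_{k-1}\| \le \|\bm e_{k-1}\|$. (ii) At the \emph{first forging} step $k=1$, since $\widetilde{\bm w}_0 = \bm w_0$ we get $\bm e_1 = -h_0\big(\nabla\widetilde f_0(\bm w_0) - \nabla f_0(\bm w_0)\big)$, so $\|\bm e_1\| = h_0\,\delta_0 \le \delta_0$. (iii) At a \emph{subsequent forging} step $k$ (with $k-1 = n_i$, so $f_{n_i} = f_0$ and $h_{n_i}\le 1/L_0$), the two trajectories apply different maps, so I would write
\[
\bm e_k = \Big(\Phi_{f_0,h_{n_i}}(\widetilde{\bm w}_{n_i}) - \Phi_{f_0,h_{n_i}}(\bm w_{n_i})\Big) - h_{n_i}\big(\nabla\widetilde f_0(\widetilde{\bm w}_{n_i}) - \nabla f_0(\widetilde{\bm w}_{n_i})\big),
\]
bound the first group by $(1-h_{n_i}\mu_0)\|\bm e_{n_i}\|$ via the contraction property of $f_0$, and bound the second by $h_{n_i}\mu_0\|\bm e_{n_i}\|$ via hypothesis \eqref{lip-in-data} (applied at the pre-update forged iterate $\widetilde{\bm w}_{n_i}$, paired with $\bm w_{n_i}$). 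Adding, the $h_{n_i}\mu_0$ terms cancel and $\|\bm e_k\| \le \|\bm e_{n_i}\| = \|\bm e_{k-1}\|$.

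Chaining these bounds, $\{\|\bm e_k\|\}_{k\ge 1}$ is non-increasing with $\|\bm e_1\| = h_0\delta_0$, so $\|\bm e_N\| \le h_0\delta_0 \le \delta_0$; the strict inequality follows because at least one traversed step contracts with a factor strictly below one (any non-forging step contributes $1 - h_{k-1}\mu_{k-1} < 1$, and $h_0 \le 1/L_0$ already gives $h_0\delta_0 < \delta_0$ unless $L_0 \le 1$).

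The step I expect to be the main obstacle is case (iii). Without \eqref{lip-in-data}, each reuse of $\widetilde{\bm x}_0$ would inject a fresh error of size $h_{n_i}\|\nabla\widetilde f_0(\widetilde{\bm w}_{n_i}) - \nabla f_0(\widetilde{\bm w}_{n_i})\|$, and there is no reason this gradient gap remains comparable to $\delta_0$ once $\widetilde{\bm w}_{n_i}$ has drifted away from $\bm w_{n_i}$, so the errors could accumulate over the $m$ reuses. Condition \eqref{lip-in-data} is exactly calibrated so that the injected error is at most $h_{n_i}\mu_0\|\bm e_{n_i}\|$ --- precisely the contraction margin $(1-h_{n_i}\mu_0)$ versus $1$ that the $f_0$-step provides on the same iteration --- so the two effects cancel and the deviation stays flat instead of growing. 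Identifying the correct iterate for \eqref{lip-in-data} and threading the ``iterates remain in $S$'' part of the induction are the only other bookkeeping subtleties.
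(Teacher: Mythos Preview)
Your proposal is correct and follows essentially the same route as the paper: the paper's Lemma~\ref{lemma2} is your case~(i) (contraction at non-forging steps, though the paper derives the factor $|1-h_kL_k|$ via co-coercivity rather than your $(1-h_k\mu_k)$), and its Lemma~\ref{lemma3} is exactly your case~(iii) decomposition where the $(1-h\mu_0)$ contraction absorbs the $h\mu_0\|\bm e\|$ gradient gap from \eqref{lip-in-data}. The paper then slices the trajectory at the forging indices $n_1,\dots,n_m$ and inducts, just as you do.
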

\proof{Please see Appendix \ref{pf-motiv} for the full proof. }
\vspace{1em}
\begin{remark}
    The alternative data point $\widetilde{\bm x}_0$ used to replace $\bm x_0$ only needs to yield a small norm difference between the original and new gradients. Notably, this does not require the two data points to be close in input space. For example, consider the function $f : \mathbb{R}^d \times \mathbb{R}^d \to \mathbb{R}$ defined by 
    \[f(\bm w; \bm x) = \frac{1}{4}\|\bm w\|^2 + e^{-\|\bm x\|^2}\mathbf{1}^T\bm w.\]
    This function is $\mu$-strongly convex and $L$-smooth in $\bm w$ with $\mu = L = \frac{1}{2}$. Fix $\bm w$ and let $\epsilon > 0$. For a sufficiently large real number $M$, define $\bm x = (M, 0, \ldots, 0)$, so that $e^{-\|\bm x\|^2} < \frac{\epsilon}{2\sqrt{d}}$. Let $\bm y = (0, M, 0, \ldots, 0)$, yielding
    \(\|\bm x - \bm y\| = \sqrt{2}M,\)
    and
    \[\|\nabla f(\bm w; \bm x) - \nabla f(\bm w; \bm y)\| = |e^{-\|\bm x\|^2} - e^{-\|\bm y\|^2}|\|\mathbf{1}\| \leq \left(e^{-\|\bm x\|^2} + e^{-\|\bm y\|^2}\right)\sqrt{d} < \epsilon.\]
\end{remark}

Next, we present another aspect of how forging can benefit an adversary.

\paragraph{Not forging may cause the model to deviate.} The second incentive for forging arises when replacing a single data point may lead to significantly different model parameters. Non-convex models are often highly sensitive to small perturbations, which can cause them to shift toward entirely different local minima and produce qualitatively distinct outcomes. To illustrate this effect, let \(\bm{w} \RSnew{ =(w_1,...,w_n)}\in \mathbb{R}^n\) and \(\bm{x} \in \mathbb{R}^d\), and define $\bm{a}(\bm{x}) := A \bm{x} \in \mathbb{R}^n$,
where \(A \in \mathbb{R}^{n \times d}\).  Let \(\bm{\mu} := c \cdot \bm{e}_1 \in \mathbb{R}^n\) for some constant \(c > 0\), which defines the centers of two attraction basins in parameter space. We define 
\begin{align*}
    g_1(\bm w; \bm x) &= \|\bm{w} - \bm{\mu}\|^2 + \log\left(1 + \exp\left(- \bm{a}(\bm{x})^\top \bm{w}\right)\right), \\
    g_2(\bm w; \bm x) &= \|\bm{w} + \bm{\mu}\|^2 + \log\left(1 + \exp\left(- \bm{a}(\bm{x})^\top \bm{w}\right)\right).
\end{align*}

Let the overall loss be a smooth interpolation between \(g_1\) and \(g_2\), defined by
\[
f(\bm w; \bm x) = \alpha(\bm{w}) \cdot g_1(\bm{w}; \bm{x}) + \left(1 - \alpha(\bm{w})\right) \cdot g_2(\bm{w}; \bm{x}),
\]
where the interpolation weight is given by the logistic function
\[
\alpha(\bm{w}) := \frac{1}{1 + \exp\left(-5 \cdot \bm{w}^\top \bm{\mu} / \|\bm{\mu}\|\right)} = \frac{1}{1 + \exp(-5 w_1)}.
\]
\RSnew{where the second equality follows from $\frac{\bm w^\top\bm\mu}{\|\bm\mu\|} = \frac{c\,\bm w^\top\bm e_1}{c\|\bm\e_1\|}=\bm w^\top\bm e_1=w_1$}.
This construction produces a nonconvex loss landscape with two basins of attraction approximately centered at \(\bm{w} = \pm \bm{\mu}\). In each basin, the loss behaves locally like a convex function. However, when training is initialized near the saddle point (e.g., \(\bm{w}_0 = \bm 0\)), small perturbations to the input \(\bm{x}\), such as replacing \(\bm{x}\) with a nearby \(\widetilde{\bm{x}}\), can cause the gradient to point in different directions, leading to divergent parameter trajectories.

To provide a visualization, consider  \(n = d = 1\), and use the training data \(X = (x_0, x_1, \ldots, x_{19})\) for 20 updates according to \eqref{update}, with a fixed learning rate of \(0.3\). When initialized at \(w_0 = 10^{-4}\), the model converges toward the local minimum at \(w^* = -2\). In contrast, replacing the first data point \(x_0 = -0.5\) with \(\widetilde{x}_0 = 0.2\) results in an alternative trajectory that drives the model toward the opposite basin at \(w^* = 2\), as illustrated in \Cref{fig:div}. In such cases, a forger may be strongly tempted to carefully choose a replacement point that preserves the model output.

\begin{figure}[H]
    \centering
    \includegraphics[width=0.7\linewidth]{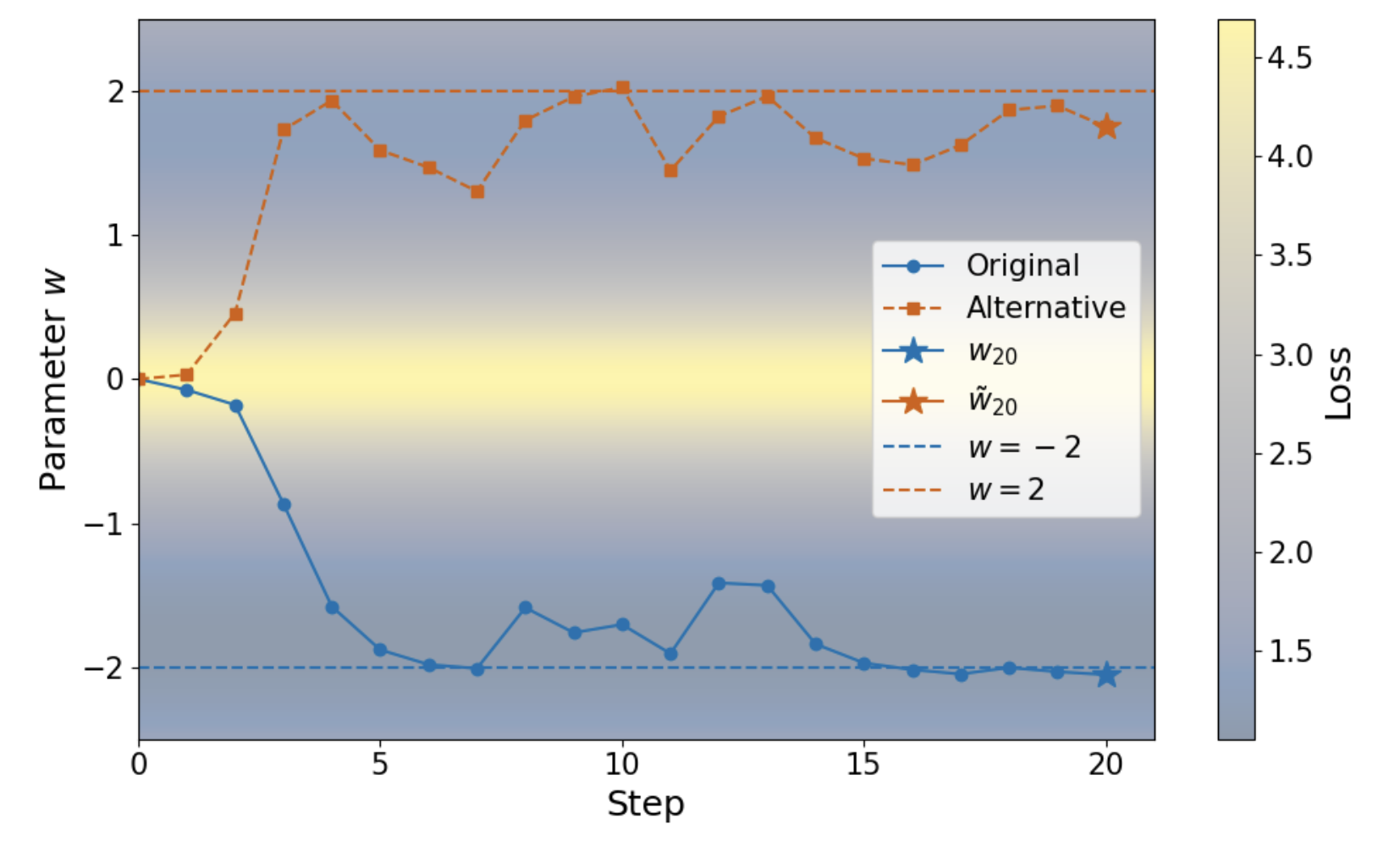}
    \caption{Model trajectories with the original dataset (\RSnew{orange squares}) and the forged dataset (\RSnew{blue dots}). Forging is applied at the first step, immediately after initialization.}
    \label{fig:div}
\end{figure}

\section{Case Study: Linear Regression and Shallow Neural Networks}\label{case}
We now examine forging in the context of simple models: linear regression and one-layer neural networks. By explicitly analyzing the gradient-matching condition defined in \Cref{grad-match}, we  bound the Lebesgue measure of the forging set. 

\subsection{Linear Regression}\label{LR}
To better understand the forging phenomenon, one of the simplest  loss functions from which we can gain intuition is linear regression. 
Linear regression uses the loss function $f$ evaluated at the parameter $\bm w$, associated with a data point \((\bm x, y)\) given by 
\begin{align}f\left(\bm w; (\bm x,y)\right) = \frac{1}{2}(\bm x^T\bm w-y)^2.\label{eq:lr_loss}\end{align} For any  $(\bm x,y)$ and $\epsilon>0$, the corresponding $\epsilon$-forging set $S_\epsilon$ is defined as
\begin{equation}\label{def: lr-eps-forging-set}
S_\epsilon(\bm w, \bm x, y) \coloneqq \{(\bm z,t): \|\nabla_{\bm w}f\left(\bm w; (\bm x,y)\right) - \nabla_{\bm w}f\left(\bm w; (\bm z,t)\right)\| \leq \epsilon \}.
\end{equation}
When $\epsilon=0$, this corresponds to exact-forging, where one seeks a data point whose gradient exactly matches that of the target point under a one-step gradient descent update. Explicitly,
\begin{equation}\label{def:lr-ex-forging}
S_{0}(\bm w, \bm x, y)  := \{(\bm z,t): \|\nabla_{\bm w}f\left(\bm w; (\bm x,y)\right) - \nabla_{\bm w}f\left(\bm w; (\bm z,t)\right)\| = 0 \}
\end{equation}

For notational simplicity, we omit the dependence on $(\bm w, \bm x, y)$ and refer to the set as $S_\epsilon$ or $S_0$ when the context is clear. We start by analyzing the exact-forging set.

\vspace{1em}
\begin{proposition}\label{prop:lr-ex}
    Let $f$ be as in \eqref{eq:lr_loss}. For any  $(\bm x,y)\in \mathbb{R}^d\times\mathbb{R}$ with $\nabla_{\bm w}f\left(\bm w; (\bm x,y)\right)\neq 0$, the exact-forging set defined in \eqref{def:lr-ex-forging} has Lebesgue measure zero. 
\end{proposition}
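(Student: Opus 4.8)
The plan is to exploit the rank–one structure of the linear‑regression gradient. Differentiating \eqref{eq:lr_loss} in the first argument gives $\nabla_{\bm w} f(\bm w;(\bm x,y)) = (\bm x^T\bm w - y)\,\bm x$, i.e.\ the gradient is always a scalar multiple of the data vector. Write $\bm g := \nabla_{\bm w} f(\bm w;(\bm x,y))$, which is nonzero by hypothesis. A pair $(\bm z,t)$ lies in $S_0$ iff $(\bm z^T\bm w - t)\,\bm z = \bm g$. Since the left‑hand side is a scalar multiple of $\bm z$ and $\bm g \neq \bm 0$, any solution must have $\bm z\neq\bm 0$ and $\bm z\in\mathrm{span}(\bm g)$, and the scalar $s:=\bm z^T\bm w - t$ must be nonzero; conversely, given $s\neq 0$, setting $\bm z=\bm g/s$ and $t = \bm z^T\bm w - s = \bm g^T\bm w/s - s$ yields $(\bm z^T\bm w-t)\bm z = s\cdot(\bm g/s)=\bm g$. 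Hence $S_0$ is \emph{exactly} the image of the real‑analytic curve
\[
\phi:\mathbb{R}\setminus\{0\}\longrightarrow\mathbb{R}^{d+1},\qquad \phi(s) = \Bigl(\tfrac{1}{s}\,\bm g,\ \tfrac{1}{s}\,\bm g^T\bm w - s\Bigr).
\]

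From here the argument is a dimension count. First I would record the standard fact that the image of a $C^1$ (indeed locally Lipschitz) map from an open subset of $\mathbb{R}^k$ into $\mathbb{R}^m$ with $k<m$ has $m$‑dimensional Lebesgue measure zero — e.g.\ by padding $\phi$ to a map $\mathbb{R}\times\mathbb{R}^{d}\to\mathbb{R}^{d+1}$ whose Jacobian has rank $\le 1$ everywhere and applying Sard's theorem, or simply because locally Lipschitz maps do not increase Hausdorff dimension, and $\mathbb{R}\setminus\{0\}$ is $\sigma$‑compact. Applying this with $k=1$ and $m=d+1\ge 2$ (since $d\ge 1$) gives $\mu(S_0)=\mu\bigl(\phi(\mathbb{R}\setminus\{0\})\bigr)=0$, which is the claim.

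An equally valid, more elementary route is to split on $d$. For $d\ge 2$, the projection of $S_0$ onto the $\bm z$–coordinates is contained in the one‑dimensional subspace $\mathrm{span}(\bm g)$, which is Lebesgue‑null in $\mathbb{R}^d$; by Tonelli, $\mathrm{span}(\bm g)\times\mathbb{R}$ is null in $\mathbb{R}^{d+1}$, and $S_0$ is a subset of it. For $d=1$, for each $z\neq 0$ there is a unique admissible $t$ (namely $t=zw-g/z$) and no solution with $z=0$ since $g\neq 0$, so $S_0$ is the graph of a continuous function on $\mathbb{R}\setminus\{0\}$, hence null in $\mathbb{R}^2$. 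There is no real obstacle in either approach; the only points needing care are (i) using $\bm g\neq\bm 0$ to force the collinearity $\bm z\in\mathrm{span}(\bm g)$ and to rule out the fiber $\bm z=\bm 0$, and (ii) the edge case $d=1$, where the ``projection into a proper subspace'' argument degenerates and must be replaced by the graph argument. I would also note in passing that the hypothesis $\bm g\neq\bm 0$ is what makes the set a curve; if the target gradient vanished, $S_0$ would instead be the hyperplane $\{(\bm z,t):\bm z^T\bm w=t\}$ together with the line $\{\bm 0\}\times\mathbb{R}$, still Lebesgue‑null but of larger dimension, which is why the statement is phrased with that exclusion.
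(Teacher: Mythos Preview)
Your proof is correct and follows essentially the same approach as the paper: both identify $S_0$ as a one-dimensional curve in $\mathbb{R}^{d+1}$ and conclude by dimension counting. Your parametrization by $s = \bm z^T\bm w - t$ is in fact slightly cleaner than the paper's, which parametrizes by $t$ and must solve a quadratic in $\alpha$ (with a case split on whether $\bm x^T\bm w = 0$), obtaining two branches rather than your single curve $\phi$.
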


\begin{proof}
    Fix $(\bm x,y)$. 
    Taking derivatives with respect to $\bm w$, the statement $(\bm z,t) \in S_0$ is equivalent to 
    $(\bm z^T\bm w-t)\,\bm z = (\bm x^T\bm w-y)\,\bm x$.
    Since $\bm x$ and $y$ are given, then denoting $\bm x^T\bm w-y \in \mathbb{R}$ by $A$ and defining $s(\bm z, t) \coloneqq \bm z^T\bm w-t$, we see that $(\bm z,t) \in S_0$ is equivalent to 
    \begin{equation}\label{eq:exactLR1}
         s(\bm z, t)\,\bm z = A\,\bm x.
    \end{equation}
    Given that $\nabla_{\bm w}f\left(\bm w; (\bm x,y)\right)\neq 0$, we conclude that  $A\neq0$ and $\bm x\neq\bm 0$, and also that neither $s(\bm z, t)$ nor $\bm z$ can be zero. So we can further define 
    $\alpha(\bm z, t)\coloneqq\frac{A}{s(\bm z, t)}$
    so that according to \eqref{eq:exactLR1}
    \begin{equation}\label{eq:exactLR3}
        \bm z = \alpha(\bm z, t)\,\bm x,
    \end{equation}
    which essentially forces $\bm z$ to be parallel to $\bm x$. Substitute in \eqref{eq:exactLR1} to obtain 
    \begin{equation}\label{eq:exactLR2}
        A\,\bm x = \Big(s(\bm z, t)\,\alpha(\bm z, t)\Big)\,\bm x.
    \end{equation}
    Further substituting $\bm z = \alpha(\bm z, t)\,\bm x$ in $s(\bm z, t) = \bm z^T\bm w-t$, we derive
    \[s(\bm z, t)\,\alpha(\bm z, t) = \Big(\alpha(\bm z, t)\,\bm x^T\bm w-t\Big)\,\alpha(\bm z, t) = \alpha(\bm z, t)^2\,(\bm x^T\bm w)-\alpha(\bm z, t)\, t.\]
    Then from \eqref{eq:exactLR2}, we have 
    \begin{align}\label{quadr}A = \alpha(\bm z, t)^2\,c-\alpha(\bm z, t)\, t\quad \text{with}\quad c\coloneqq\bm x^T\bm w.\end{align}
    For each fixed $t\in\mathbb{R}$, if $c\neq 0$, this is a quadratic equation in $\alpha(\bm z, t)$, and the solution is
    $\alpha(\bm z, t) = \frac{t\pm\sqrt{\,t^2+4\,c\,A}}{2\,c}$.
    By \eqref{eq:exactLR3}, $\bm z$ can thus be expressed as a function of $t$ via
    \(\bm z = \frac{t\pm\sqrt{\,t^2+4\,c\,A}}{2\,c}\,\bm x,\)
    which indicates that $S_0$ is formed by two separate continuous curves in $\mathbb{R}^d\times\mathbb{R}$.
    On the other hand, if $c=\bm x^T\bm w = 0$, the equation reduces to
    $y = \alpha(\bm z, t)\,t$ since $A = \bm x^T\bm w - y$.
    This provides a solution 
    $\bm z = \frac{y}{t}\,\bm x$
    which is a continuous curve in $\mathbb{R}^d \times \mathbb R$. Note that in this case $t\neq 0$, because otherwise $A=0$,  contradicting our assumption that the gradient is non-zero.
    Therefore, $\mu(S_0)=0$.
\end{proof}

The result above can be extended to  $\epsilon$-forging with $\epsilon > 0$. The next proposition does exactly this, providing a bound on the Lebesgue measure of the $\epsilon$-forging set, demonstrating that even with the relaxation, the set is highly constrained. 
Specifically, for any non-zero radius, we bound 
\RSnew{the measure of $S_\epsilon$ in a cylinder $\mathcal{C}_R$ of $\mathbb{R}^{d+1}$} and outline the main proof ideas, deferring the full details to Appendix~\ref{pf-sec3}. Note that while the result is stated for the \RSnew{object} centered at the origin, it holds regardless of center. 
\vspace{1em}
\begin{proposition}
\label{prop:lr-eps-forging}
    Let $R>0$, 
    then for any  $(\bm x,y)\in\mathbb{R}^{d}\times\mathbb{R}$ with $d>1$ and $\nabla_{\bm w}f\left(\bm w; (\bm x,y)\right)\neq \mathbf{0}$ , the $\epsilon$-forging set defined in \eqref{def: lr-eps-forging-set} restricted to 
    \RSnew{$\mathcal{C}_R = \mathcal{B}_R\times[-R,R]\subset\mathbb{R}^{d+1}$
    \begin{equation}\label{lr-eps-1}\frac{\mu(S_\epsilon\cap \mathcal{C}_R)}{\mathrm{vol}_{\mathbb{R}^{d+1}}(\mathcal{C}_R)} \leq \frac{d}{d-1}\frac{\epsilon}{R^2}.\end{equation}}
    Furthermore, 
    if $\frac{\epsilon}{A}<\sin(c\epsilon)$ for some $c\in\left[\frac{1}{A}, \frac{\pi}{2A}\right]$, where $A = \|\nabla_{\bm w} f\left(\bm w; (\bm x,y)\right)\|$, then
     \begin{equation}\label{lr-eps-2}
     \RSnew{\frac{\mu(S_\epsilon\cap \mathcal{C}_R)}{\mathrm{vol}_{\mathbb{R}^{d+1}}(\mathcal{C}_R)} \le \frac{\sqrt{2}d\sqrt{d+1}c^{d-1}}{\sqrt{\pi}(d-1)^2}\frac{\epsilon^d}{R^2}}.
     \end{equation}
\end{proposition}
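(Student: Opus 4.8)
The explicit gradient here is $\nabla_{\bm w}f(\bm w;(\bm x,y)) = (\bm x^{T}\bm w - y)\bm x =: \bm g$, with $A=\|\bm g\|$, so $(\bm z,t)\in S_\epsilon$ exactly when $\|(\bm z^{T}\bm w-t)\bm z-\bm g\|\le\epsilon$. The plan is to collapse this into a statement about a fixed Euclidean ball by two successive changes of variables. First, for each fixed $\bm z$ the map $t\mapsto u:=\bm z^{T}\bm w-t$ is a translation, so $(\bm z,t)\mapsto(\bm z,u)$ is a volume-preserving shear of $\mathbb{R}^{d+1}$; in the new coordinates the forging condition is just $u\bm z\in\mathcal{B}_\epsilon(\bm g)$. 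Second, on $\{u\neq0\}$ (the complement is a null hyperplane) substitute $\bm p:=u\bm z$ while keeping $u$; since $\bm z=\bm p/u$ the Jacobian is $|u|^{-d}$, and the condition becomes simply $\bm p\in\mathcal{B}_\epsilon(\bm g)$ with $u$ unconstrained. This also makes the role of $\mathcal{B}_R$ transparent: $S_\epsilon$ itself has infinite Lebesgue measure (the scaling $\bm z\mapsto u\bm z$ degenerates as $u\to0$, producing an unbounded tail), and intersecting with $\mathcal{B}_R$ forces $\|\bm z\|=\|\bm p\|/|u|<R$, i.e. $|u|>\|\bm p\|/R$, which is exactly the truncation that makes the measure finite.

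Carrying this out,
\[
\mu(S_\epsilon\cap\mathcal{B}_R)\;\le\;\int_{\mathcal{B}_\epsilon(\bm g)}\Bigl(\int_{|u|>\|\bm p\|/R}|u|^{-d}\,du\Bigr)\,d\bm p\;=\;\frac{2R^{d-1}}{d-1}\int_{\mathcal{B}_\epsilon(\bm g)}\|\bm p\|^{-(d-1)}\,d\bm p,
\]
and the inner $u$-integral converges precisely because $d>1$ — this is where that hypothesis is used. For \eqref{lr-eps-1} it remains to bound $\int_{\mathcal{B}_\epsilon(\bm g)}\|\bm p\|^{-(d-1)}\,d\bm p$. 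Switching to polar coordinates about the origin, the weight $\|\bm p\|^{-(d-1)}$ cancels the Jacobian $r^{d-1}$, so this integral equals $\int_{S^{d-1}}\ell(\bm v)\,d\sigma(\bm v)$ where $\ell(\bm v)$ is the length of the chord of $\mathcal{B}_\epsilon(\bm g)$ along the ray $\mathbb{R}_{>0}\bm v$; since a chord of a radius-$\epsilon$ ball has length at most $2\epsilon$ and, by radial rearrangement (Hardy--Littlewood), $\int_{\mathcal{B}_\epsilon(\bm g)}\|\bm p\|^{-(d-1)}d\bm p$ is largest when $\bm g=\bm 0$, where it equals $\epsilon\,\sigma(S^{d-1})$, we obtain $\int_{\mathcal{B}_\epsilon(\bm g)}\|\bm p\|^{-(d-1)}d\bm p\le\epsilon\,\sigma(S^{d-1})$. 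Inserting $\sigma(S^{d-1})=2\pi^{d/2}/\Gamma(d/2)$ and $\mathrm{vol}_{\mathbb{R}^d}(\mathcal{B}_R)=\pi^{d/2}R^d/\Gamma(d/2+1)$ turns this into the stated constant $\tfrac{2d}{d-1}$.

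For the refined bound \eqref{lr-eps-2} I keep the same reduction but estimate the directional integral more carefully. Because $\|\bm g\|=A$, the ray $\mathbb{R}_{>0}\bm v$ meets $\mathcal{B}_\epsilon(\bm g)$ only if the angle between $\bm v$ and $\bm g$ is at most $\arcsin(\epsilon/A)$, so $\int_{S^{d-1}}\ell(\bm v)\,d\sigma(\bm v)$ is supported on a spherical cap of that angular radius; the hypothesis $\epsilon/A<\sin(c\epsilon)$ together with $c\epsilon<\pi/2$ (which holds since $c\le\pi/(2A)$ and $\epsilon<A$ forces $c\epsilon<\pi/2$) gives $\arcsin(\epsilon/A)<c\epsilon$, so the cap has angular radius $<c\epsilon$. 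Its surface area is at most $\sigma(S^{d-2})\int_0^{c\epsilon}(\sin\psi)^{d-2}d\psi\le\sigma(S^{d-2})(c\epsilon)^{d-1}/(d-1)$; combining with $\ell\le2\epsilon$ and $\sigma(S^{d-2})=2\pi^{(d-1)/2}/\Gamma(\tfrac{d-1}{2})$ and tidying the Gamma functions yields a bound of the form $C_d\,R^{d-1}(c\epsilon)^{d}$ (equivalently $C_d'\,R^{d-1}\epsilon(c\epsilon)^{d-1}$), which is \eqref{lr-eps-2}.

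I expect the real work to be bookkeeping rather than conceptual: justifying the two changes of variables and their Jacobians, checking that the null hyperplane $\{u=0\}$ and the unbounded tail of $S_\epsilon$ are genuinely harmless, and matching the spherical-cap surface-area estimate to the precise $\Gamma$-function constant in \eqref{lr-eps-2}. An equivalent, more hands-on route replaces the second substitution: write $\bm z=\rho\bm u$ in polar coordinates directly, note that for each fixed $(\rho,\bm u)$ the admissible $u$ form an interval of length $2\sqrt{\epsilon^2-A^2\sin^2\theta}/\rho$ obtained by solving a quadratic in $u$, and then $\int_0^R\rho^{d-2}d\rho=R^{d-1}/(d-1)$ (again using $d>1$) reproduces the same constants.
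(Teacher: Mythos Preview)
Your proposal is correct. The primary approach---the two-step change of variables $(\bm z,t)\mapsto(\bm z,u)\mapsto(\bm p,u)$ with $\bm p=u\bm z$---is a slick repackaging of the paper's argument, which instead fixes $\bm z$, solves the quadratic in $s(\bm z,t)=\bm z^T\bm w-t$ directly to obtain a feasible-$t$ interval of length $2\sqrt{\epsilon^2-A^2\sin^2\theta}/\|\bm z\|$, and then integrates in spherical coordinates over $\bm z\in\mathcal{B}_R$; this is exactly the ``more hands-on route'' you sketch at the end. For \eqref{lr-eps-1} your Hardy--Littlewood rearrangement step corresponds to the paper's cruder move of bounding $\sqrt{\epsilon^2-A^2\sin^2\theta}\le\epsilon$ and taking the angular threshold $\theta_0=\pi/2$; for \eqref{lr-eps-2} both arguments are identical (spherical-cap estimate with $\theta_0<c\epsilon$). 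What your substitution buys is a geometrically transparent reduction---the forging condition becomes literal membership in a fixed Euclidean ball $\mathcal{B}_\epsilon(\bm g)$, and the integral $\int_{\mathcal{B}_\epsilon(\bm g)}\|\bm p\|^{-(d-1)}d\bm p$ unifies both bounds---at the mild cost of tracking the Jacobian and excising $\{u=0\}$. The paper's route avoids that substitution but does the same radial-then-angular computation in slightly more explicit form.
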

\begin{proof}[Proof sketch]
The main idea in estimating the Lebesgue measure of $S_\epsilon\cap \RSnew{\mathcal{C}_R}$ is to first compute the feasible range of the label $t$ for a fixed data point $\bm{z}$, and then integrate over the data space. Fix $(\bm{x}, y)$ and let $\epsilon > 0$. 
Let $\bm{a} \coloneqq (\bm{x}^T \bm{w} - y) \bm{x}$ and define $A \coloneqq \|\bm{a}\|$. We also let $s(\bm{z}, t) \coloneqq \bm{z}^T \bm{w} - t$.
The membership condition for the $\epsilon$-forging set defined in \eqref{def: lr-eps-forging-set} is 
the norm inequality
\begin{equation}\label{ineq:epslr0}\|\bm a - s(\bm z,t)\bm z\| \leq \epsilon.\end{equation}
For $\bm{z}\neq 0$, squaring both sides of \eqref{ineq:epslr0} leads to a quadratic equation in $s(\bm{z}, t)$, from which the feasible range of $t$ (by translation invariance of Lebesgue measure) can be determined. The resulting measure of this interval in $\mathbb{R}$ is
\(
L(\bm{z}) = \frac{2\sqrt{\epsilon^2 - A^2 \sin^2\theta}}{\|\bm{z}\|},
\)
where $\theta$ is the angle between $\bm{x}$ and $\bm{z}$. This  introduces a constraint on $\theta$ arising from the non-negativity of the discriminant, namely $A|\sin\theta| \leq \epsilon$
which implies:
\begin{equation} \label{anglecond0}
\theta \in [-\theta_0, \theta_0], \quad \text{where} \quad \theta_0 = \arcsin\left(\min\left\{1,\; \frac{\epsilon}{A} \right\}\right).
\end{equation}

To compute the total volume, we integrate over $\bm{z} \in \mathbb{R}^d$, in a ball of radius $R$. The volume satisfies
\[
\RSnew{\mu(S_{\epsilon}\cap\mathcal{C}_R) \leq \mu(S_{\epsilon}\cap(\mathcal{B}_R\times\mathbb{R}}))\leq\int_{\bm{z} \in \mathcal{B}_R} \mathbf{1}_{\{A |\sin \theta| \leq \epsilon\}}\, L(\bm{z}) \,d\bm{z}.
\]
This can be explicitly calculated in spherical coordinates. 
Taking $\theta_0 = \arcsin(1) = \frac{\pi}{2}$ in \eqref{anglecond0} and simplifying, we recover the bound stated in \eqref{lr-eps-1}. Enforcing $\theta_0 = \arcsin\left(\frac{\epsilon}{A}\right) \leq c \epsilon$,
for some constant \(c\in \left[\frac{1}{A}, \frac{\pi}{2A}\right] \) and evaluating the integral gives the  bound in \eqref{lr-eps-2}. The full proof is in Appendix~\ref{pf-sec3}.
\end{proof}

\begin{remark}[Vanishing \RSnew{r}elative \RSnew{v}olume]
    Inequalities \eqref{lr-eps-1} and \eqref{lr-eps-2} show that the relative volume 
    $\frac{\mu(S_\epsilon \cap \RSnew{\mathcal{C}_R})}{\RSnew{\mathrm{vol}_{\mathbb{R}^{d+1}}(\mathcal{C}_R})}$
    tends to zero as \(R \to \infty\). This shows that, in the limit of a large ambient domain, the forging set occupies a negligible fraction of the space.
\end{remark}

\RSnew{
\begin{remark}[Small-$\epsilon$ regime]
The condition $\frac{\epsilon}{A} < \sin(c\epsilon)$ is not restrictive, as it captures the regime of interest for $\epsilon$. For instance, taking $c=\tfrac{1.5}{A}$, the inequality $\epsilon/A < \sin(1.5\,\epsilon/A)$
holds for all $\epsilon\leq 0.5 A$. Since $A$ is the norm of the gradient, this range actually allows for a relatively large perturbation. \end{remark}
}
\RSnew{
\begin{remark}[Linear regression with bias]
\label{rem:lr-bias}
The linear regression model considered above omits an explicit bias term for clarity.
If instead we consider
\(
f\big((\bm w,b);(\bm x,y)\big)=\frac12(\bm x^T\bm w+b-y)^2
\)
and define forging by matching the \emph{full gradient} with respect to $(\bm w,b)$,
the resulting forging geometry is strictly more constrained.
In particular, for any $(\bm x,y)$ with nonzero gradient,
the exact-forging set collapses to the singleton $\{(\bm x,y)\}$ as matching the gradient with respect to the bias enforces equality of residuals.
Moreover, the $\epsilon$-forging set with bias admits volume bounds of the same order
as in Proposition~\ref{prop:lr-eps-forging}. 
\end{remark}
}

\Cref{prop:lr-ex} and \Cref{prop:lr-eps-forging} show that, in linear regression, the set of points achieving exact or $\epsilon$-approximate gradient matching occupies a small region of the ambient space. Although a forger can construct such points explicitly by solving the gradient-matching equations, they are unlikely to find one through resampling without deliberate selection. This supports the intuition, which we make rigorous later via probability bounds, that random sampling from a realistic data distribution is very unlikely to produce a valid forgery.

\subsection{One-Layer Neural Network }\label{NN}
Another simple and important model for gaining insight into forging is one-layer neural networks. Consider the $\texttt{ReLU}$ activation function, and let $\bm W\in\mathbb{R}^{n\times d}$, $\bm v\in\mathbb{R}^n$. For a data point  \((\bm x, y)\), define the loss function 
\[f\left(\bm W,\bm v; (\bm x,y)\right) = \frac{1}{2}(\bm v^T\rho(\bm W\bm x)-y)^2  \]
where $\rho = \texttt{ReLU}$ acts elementwise with $\texttt{ReLU}(x) = \max\{x,0\}$. Note that  \( \rho \) is non-differentiable at zero, and its subgradient \( \rho'(0) \) can take any value in \( [0,1] \). Here, we adopt the common practical choice \( \rho'(0) = 0 \)  \cite{bertoin2021numerical,berner2019towards} and define the corresponding $\epsilon$-forging set as
\begin{align}\label{def:nn-eps-forging}S_{\epsilon}(\bm W, \bm v, \bm x,y)  := & \{(\bm z,t): \|
\nabla_{\bm W,\bm v}f\left(\bm W,\bm v; (\bm x,y)\right) - \nabla_{\bm W,\bm v}f\left(\bm W,\bm v; (\bm z,t)\right)
\|_F \leq \epsilon\}.\end{align}

The joint gradient of the loss function with respect to both $\bm W$and $ \bm v$ is then 
\[\nabla_{\bm W,\bm v}f\left(\bm W,\bm v; (\bm x,y)\right) = \begin{bmatrix}
    \nabla_{\bm W}f\left(\bm W,\bm v; (\bm x,y)\right)\\
    \nabla_{\bm v}f\left(\bm W,\bm v; (\bm x,y)\right)
\end{bmatrix}.\]
As before, when $\epsilon=0$, the exact-forging set is 
\begin{align}\label{def:nn-ex-forging}S_{0}(\bm W, \bm v,\bm x,y)  := & \{(\bm z,t): \|
\nabla_{\bm W,\bm v}f\left(\bm W,\bm v; (\bm x,y)\right) - \nabla_{\bm W,\bm v}f\left(\bm W,\bm v; (\bm z,t)\right)
\|_F =0\}.\end{align}
This set captures all data points $(\bm z, t)$ whose gradient with respect to the network parameters exactly matches that of a reference point \((\bm x, y)\). We begin by analyzing the exact-forging set and show that, under mild regularity conditions, it forms a low-dimensional subset embedded in the ambient space $\mathbb{R}^d \times \mathbb{R}$. Consequently, the exact-forging set has Lebesgue measure zero.
\vspace{1em}
\begin{proposition}\label{prop:NNex}

For any \((\bm x, y)\in\mathbb{R}^{d}\times\mathbb{R}\) with $\nabla_{\bm v}f\left(\bm W,\bm v; (\bm x,y)\right) \neq 0$ and $\nabla_{\bm W}f\left(\bm W,\bm v; (\bm x,y)\right)\neq0$,  the exact-forging set defined in \eqref{def:nn-ex-forging} is of Lebesgue measure zero.
\end{proposition}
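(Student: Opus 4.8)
The plan is to reduce the problem to finitely many activation regions on each of which $\rho(\bm W\,\cdot\,)$ acts as a fixed linear map, and then to exploit the rank-one structure of the gradient with respect to $\bm W$. First I would record the residuals $r_{\bm x}:=\bm v^T\rho(\bm W\bm x)-y$ and $r_{\bm z}:=\bm v^T\rho(\bm W\bm z)-t$ together with the explicit gradients
\[
\nabla_{\bm v}f(\bm W,\bm v;(\bm z,t))=r_{\bm z}\,\rho(\bm W\bm z),\qquad
\nabla_{\bm W}f(\bm W,\bm v;(\bm z,t))=r_{\bm z}\,\big(\bm v\odot\rho'(\bm W\bm z)\big)\bm z^T,
\]
and likewise at $(\bm x,y)$. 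The hypotheses $\nabla_{\bm v}f(\bm W,\bm v;(\bm x,y))\neq\bm 0$ and $\nabla_{\bm W}f(\bm W,\bm v;(\bm x,y))\neq\bm 0$ guarantee in particular that $r_{\bm x}\neq 0$, $\bm x\neq\bm 0$, and $\bm v\odot\rho'(\bm W\bm x)\neq\bm 0$, so the target weight-gradient is a genuine rank-one matrix $\bm u\bm x^T$ with $\bm u:=r_{\bm x}\big(\bm v\odot\rho'(\bm W\bm x)\big)\neq\bm 0$.

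Next I would set up the activation-region decomposition of the input space. Let $H$ be the union, over all indices $i$ for which the $i$-th row of $\bm W$ is nonzero, of the hyperplanes $\{\bm z\in\mathbb{R}^d:(\bm W\bm z)_i=0\}$; this is a finite union of hyperplanes, so $\mu(H\times\mathbb{R})=0$ in $\mathbb{R}^{d+1}$. On the open set $\mathbb{R}^d\setminus H$ the activation pattern of $\bm W\bm z$ is locally constant (using the convention $\rho'(0)=0$ for any zero rows of $\bm W$), so $\mathbb{R}^d\setminus H$ is partitioned into finitely many pieces $\mathcal{R}_p$, indexed by patterns $p\in\{0,1\}^n$, on each of which $\rho(\bm W\bm z)=\Sigma_p\bm W\bm z$ and $\rho'(\bm W\bm z)=p$ with $\Sigma_p:=\mathrm{diag}(p)$. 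Since a finite union of Lebesgue-null sets is null, it suffices to prove $\mu\big(S_0\cap(\mathcal{R}_p\times\mathbb{R})\big)=0$ for each $p$.

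Fix $p$. On $\mathcal{R}_p$ the weight component of the gradient-matching condition reads $r_{\bm z}(\bm v\odot p)\bm z^T=\bm u\bm x^T$. If $\bm v\odot p=\bm 0$ the left-hand side is the zero matrix while the right-hand side is nonzero, so $S_0\cap(\mathcal{R}_p\times\mathbb{R})=\emptyset$. Otherwise every solution must satisfy $r_{\bm z}\neq 0$ and $\bm z\neq\bm 0$, so both sides are rank-one matrices; equating their row spaces forces $\bm z\in\mathrm{span}(\bm x)$, say $\bm z=\nu\bm x$ with $\nu\neq 0$. Substituting back pins $r_{\bm z}$ — and hence the label $t=\bm v^T\Sigma_p\bm W\bm z-r_{\bm z}$ — to an explicit rational, locally Lipschitz, function of $\nu$. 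Thus $S_0\cap(\mathcal{R}_p\times\mathbb{R})$ is contained in the image of an open subset of $\mathbb{R}$ under a locally Lipschitz map into $\mathbb{R}^{d+1}$; since $d+1\geq 2$, this image has zero $(d+1)$-dimensional Lebesgue measure. Summing over the finitely many patterns $p$ and adding the null contribution of $H\times\mathbb{R}$ yields $\mu(S_0)=0$.

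The step I expect to be the main obstacle is the careful treatment of the non-smooth points of $\rho$: one has to verify that the activation boundaries genuinely form a measure-zero set (handling zero rows of $\bm W$ and the convention $\rho'(0)=0$), and that expressing the gradient-matching identity region-by-region does not discard solutions that lie on a boundary — which is precisely why the argument is organized as a disjoint decomposition of $\mathbb{R}^d$ modulo the null set $H$. Once this decomposition is in place, the rank-one collapse $\bm z\in\mathrm{span}(\bm x)$ is the conceptual heart of the proof, and the remaining verifications are routine.
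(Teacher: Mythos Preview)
Your proof is correct and shares the paper's key insight: the rank-one structure of $\nabla_{\bm W} f$ forces any forging point $\bm z$ to lie on the line $\mathrm{span}(\bm x)$, after which the label $t$ is determined by a single real parameter and the solution set is a one-dimensional curve in $\mathbb{R}^{d+1}$. The difference lies in how the ReLU non-smoothness is handled. You partition $\mathbb{R}^d$ into finitely many activation regions $\mathcal{R}_p$ and argue separately on each, which is systematic and would extend readily to other piecewise-linear activations. The paper instead fixes a single index $j$ with $v_j\neq 0$ and $\bm w_j^T\bm x>0$, uses only the $j$-th row of the $\bm W$-gradient equation to obtain $\bm z=\alpha\,\bm x$, notes that matching that row forces $\bm w_j^T\bm z>0$ and hence $\alpha>0$, and then invokes the positive homogeneity $\rho(\alpha\,\bm W\bm x)=\alpha\,\rho(\bm W\bm x)$ to collapse the problem to exactly the quadratic equation already treated in the linear-regression case (Proposition~\ref{prop:lr-ex}). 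The paper's route is shorter and recycles the earlier result verbatim; yours trades compactness for an explicit, self-contained treatment of the piecewise structure and avoids the slightly hidden step where positivity of $\alpha$ is needed.
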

\begin{proof}
    Fix $(\bm x, y)$. The gradients of the  loss function with respect to the parameters are 
    \begin{align*}
        \nabla_{\bm v}f\left(\bm W,\bm v; (\bm x,y)\right) &= \left(\bm v^T\rho(\bm W\bm x)-y\right)\,\rho(\bm W\bm x)\\
        \text{and}\quad \nabla_{\bm W}f\left(\bm W,\bm v; (\bm x,y)\right)&=\left(\bm v^T\rho(\bm W\bm x)-y\right)\,\left(\bm v\odot\rho'\left(\bm W\bm x\right)\right)\,\bm x^T 
    \end{align*}
    Here, element-wise, we have
    \[\bm W\bm x = \begin{bmatrix}
        \bm w_1^T\bm x\\\bm w_2^T\bm x\\\vdots\\\bm w_n^T\bm x 
    \end{bmatrix}\quad \text{and}\quad \bm v\odot\rho'(\bm W\bm x) = \begin{bmatrix}
        v_1\,\rho'\left(\bm w_1^T\bm x\right)\\v_2\,\rho'\left(\bm w_2^T\bm x\right)\\\vdots\\v_n\,\rho'\left(\bm w_n^T\bm x\right)
    \end{bmatrix},\]
    so finding $(\bm z, t)\in S_0$ entails solving a system of equations for $j = 1,...,n$ such that
    \begin{align}
    \left(\bm v^T\rho(\bm W\bm x)-y\right)\,\rho(\bm w_j^T\bm x) &= \left(\bm v^T\rho(\bm W\bm z\right)-t)\,\rho(\bm w_j^T\bm z)\label{eq:exact1nn1}\\
    (\bm v^T\rho(\bm W\bm x)-y)\,v_j\,\rho'(\bm w_j^T\bm x)\,\bm x^T &= (\bm v^T\rho(\bm W\bm z)-t)\,v_j\,\rho'(\bm w_j^T\bm z)\,\bm z^T\label{eq:exact1nn2}.
    \end{align}
    If $\nabla_{\bm v}f\left(\bm W,\bm v; (\bm x,y)\right) \neq 0$ and $\nabla_{\bm W}f\left(\bm W,\bm v; (\bm x,y)\right)\neq0$, 
    then there is some index $j$ such that 
    $\rho(\bm w_j^T\bm x)\neq 0$ and $v_j\neq 0$.
    If the left hand side of the equations \eqref{eq:exact1nn1} and \eqref{eq:exact1nn2} are nonzero, then right hand side being nonzero requires $\bm v^T\rho(\bm W\bm z)\neq t$ and $\rho(\bm w_j^T\bm z)\neq 0$.
    Using the same idea as in the proof of \Cref{prop:lr-ex}, equation \eqref{eq:exact1nn2} leads to the relation $\bm z = \alpha(\bm z,t)\bm x$ with
    \[\alpha(\bm z,t) = \frac{A}{\bm v^T\rho(\bm W\bm z)-t}\in\mathbb{R}\]
    where  $A \coloneqq\bm v^T\rho(\bm W\bm x)-y $ and we use the fact that $\rho(\bm w_j^T\bm z)\neq0$ indicates $\rho'(\bm w_j^T\bm z)=1$. 
    Then substituting into \eqref{eq:exact1nn1} and \eqref{eq:exact1nn2} for $\bm z$, we have 
     \begin{align*}
        A\,(\bm w_j^T\bm x) & = (\bm v^T\rho\left(\bm W\alpha(\bm z,t\right)\,\bm x)-t)\,\alpha(\bm z,t)\,(\bm w_j^T\,\bm x)\\
        A\,\bm x &= (\alpha(\bm z,t)\,\bm v^T\rho\left(\bm W\,\bm x)-t\right)\,\alpha(\bm z,t)\,\bm x.
    \end{align*}
    Both equations lead to 
    $ A = \left(\alpha(\bm z,t)\,c-t\right)\,\alpha(\bm z,t)$ with $c\coloneqq\bm v^T\rho\left(\bm W\,\bm x\right)$
    which coincides with  \eqref{quadr} for linear regression. Proceeding as in \Cref{prop:lr-ex}, we conclude that $S_0$ is of measure zero.
\end{proof}
Exact forging in one-layer neural networks exhibits a similar structure to the linear regression case (\Cref{prop:lr-ex}), and suggests a similar phenomenon might occur for $\epsilon$ forging. The next proposition bounds the measure
of the forging set $S_\epsilon$ defined in \Cref{def:nn-eps-forging} restricted to \RSnew{a cylinder $\mathcal{C}_R$ in $\mathbb{R}^{d+1}$}. We provide a proof-sketch and defer the full proof to Appendix~\ref{pf-sec3}. 

\begin{proposition}
\label{prop: nn-eps-forging}
   Let $R>0$, and suppose that for $(\bm x,y)\in\mathbb{R}^{d}\times\mathbb{R}$ with $d>1$, $\nabla_{\bm W}f\left(\bm W,\bm v; (\bm x,y)\right)\neq\bm 0$ 
    and $\nabla_{\bm v}f\left(\bm W,\bm v; (\bm x,y)\right)\neq\bm 0$.  
    \RSnew{Then, defining $\mathcal{C}_R = \mathcal{B}_R\times[-R,R]\subset\mathbb{R}^{d+1}$, we have}
    \begin{equation}\label{1lnn-eps-1}
\RSnew{\frac{\mu(S_\epsilon\cap \mathcal{C}_R)}{\mathrm{vol}_{\mathbb{R}^{d+1}}(\mathcal{C}_R)}\le \frac{d}{d-1}\frac{1}{\min_{v_i\neq 0}\{|v_i|\}}\sum^{d}_{k=0}\begin{pmatrix}
    n\\k
 \end{pmatrix} \frac{\epsilon}{R^2}.}
    \end{equation}
    If additionally \RSnew{$c>0$ satisfies $\frac{\epsilon}{{A_i}}<\sin(c\epsilon)$,  for all $i$, where $A_i = \|\nabla_{\bm W}f\left(\bm W,\bm v; (\bm x,y)\right)_i^T\|$, then} 
\begin{equation}\label{1lnn-eps-2}
\RSnew{\frac{\mu(S_\epsilon\cap \mathcal{C}_R)}{\mathrm{vol}_{\mathbb{R}^{d+1}}(\mathcal{C}_R)}\le
\frac{\sqrt{2}d\sqrt{d+1}c^{d-1}}{\sqrt{\pi}(d-1)^2}\,\frac{1}{(\min_{v_i\neq0}|v_i|)^d}\sum^{d}_{k=0}\begin{pmatrix}
    n\\k
\end{pmatrix}\,\frac{\epsilon^d}{R^2}.}
\end{equation}
\end{proposition}

\begin{proof}[Proof sketch]
By \eqref{def:nn-eps-forging}, a necessary condition for $(\bm z,t)\in S_\epsilon$ is 
$
\|\nabla_{\bm W}f(\bm W,\bm v; (\bm x,y)) - \nabla_{\bm W}f(\bm W,\bm v; (\bm z,t))\|_F \\\le \epsilon
$.
In turn, by examining the $i$th row, it is necessary that $(\bm z,t)\in S_i$, the set of points satisfying
\begin{equation}\label{nnecess-cond}
    \|(\bm v^T\rho(\bm W\bm x)-y)\left[\bm v\odot\rho'(\bm W\bm x)\right]_i\bm x - (\bm v^T\rho(\bm W\bm z)-t)\left[\bm v\odot\rho'(\bm W\bm z)\right]_i\bm z\|\leq\epsilon.
\end{equation}
So   $\mu(S_\epsilon\cap\RSnew{\mathcal{C}_R) \le \mu(\bigcap_i \left(S_i\cap\mathcal{C}_R\right))\leq \min_i \mu(S_i\cap\mathcal{C}_R)}$. Now, note that each $\left[\bm v\odot\rho'(\bm W\bm z)\right]_i$ can either be $v_i$ or 0 and there are at most $\sum_{k=0}^d \binom{n}{k}$ such combinations of values across the rows, so we fix one and later apply a union bound.
Letting $\bm a_i=(\bm v^T\rho(\bm W\bm x)-y)\left[\bm v\odot\rho'(\bm W\bm x)\right]_i\bm x$, $s(\bm z,t) = \bm v^T\rho(\bm W\bm z)-t$, and $\widetilde{v}_i =\left[ \bm v\odot\rho'(\bm W\bm z)\right]_i$, \Cref{nnecess-cond} reduces to 
\begin{equation}\label{ineqcond0}
     \|\bm a_i  - s(\bm z,t)\widetilde{v}_i \, \bm z\|\leq\epsilon.
\end{equation} For $\widetilde{v}_i\neq 0$, dividing the equation by $\widetilde{v}_i$ yields an inequality of the form  \eqref{ineq:epslr0} from \Cref{prop:lr-eps-forging}; $\widetilde{v}_i=0$ is handled by the same worst-case bound. Thus, proceeding in the same way as in \Cref{prop:lr-eps-forging}, 
\[
\mu(S_\epsilon\cap\RSnew{\mathcal{C}_R) \le \frac{d}{d-1}\frac{\mathrm{vol}_{\mathbb{R}^{d+1}}(\mathcal{C}_R)}{R^2}\frac{\epsilon}{\max|v_i|}},
\]
and applying a union bound gives \eqref{1lnn-eps-1}.
A sharper bound follows under $\epsilon/A_i < \sin(c_i\epsilon)$ for suitable $c_i$ and $A_i=\|\nabla_{\bm W}f(\bm W,\bm v; (\bm x,y))_i^T\|$, yielding
\[
\mu(S_\epsilon\cap\RSnew{\mathcal{C}_R)) \le \frac{\sqrt{2}d\sqrt{d+1}}{\sqrt{\pi}(d-1)^2}\frac{\mathrm{vol}_{\mathbb{R}^{d+1}}(\mathcal{C}_R)}{R^2}\frac{c^{d-1}}{(\max|v_i|)^d}\epsilon^d},
\]
and a union bound yields \eqref{1lnn-eps-2}. Full details are in Appendix~\ref{pf-sec3}.
\end{proof}

\begin{remark}[Vanishing \RSnew{r}elative \RSnew{v}olume]
As with linear regression, the relative volume of the forging set 
$\RSnew{\frac{\mu(S_\epsilon\cap \mathcal{C}_R)}{\mathrm{vol}_{\mathbb{R}^{d+1}}(\mathcal{C}_R)}}$
decays as \(R \to \infty\). Thus, in the limit of a large ambient domain, the forging set of one-layer neural network also occupies a negligible fraction of the space.
\end{remark}
\begin{remark}[Dimension-\RSnew{w}idth \RSnew{t}radeoff]
 The combinatorial term $\sum_{k=0}^d \binom{n}{k}$, which appears in the Lebesgue measure bounds for the $\epsilon$-forging set $S_\epsilon$, can be simplified depending on the relationship between the data dimension $d$ and the hidden layer width $n$. When $d\geq n$, the sum simplifies to $\sum_{k=0}^d \binom{n}{k}= 2^n$. On the other hand, when $d\leq n$ (see Chapter 1.2 of \cite{matouvsek2001probabilistic})
\(\sum_{k=0}^d \binom{n}{k}\leq (d+1)\Big(\frac{en}{d}\Big)^d.\) 
Substituting  into \eqref{1lnn-eps-1} and \eqref{1lnn-eps-2}, we see that when $d\leq n$
\begin{align}\label{bound1*}
\RSnew{\frac{\mu(S_\epsilon\cap \mathcal{C}_R)}{\mathrm{vol}_{\mathbb{R}^{d+1}}(\mathcal{C}_R)}\leq\,\frac{d(d+1)}{d-1}\frac{1}{R^2}\frac{1}{\min_{v_i\neq 0}\{|v_i|\}}\Big(\frac{en}{d}\Big)^d\epsilon}.
\end{align}
and that for sufficiently small $\epsilon$
\begin{align}\label{bound2*}
\RSnew{\frac{\mu(S_\epsilon\cap \mathcal{C}_R)}{\mathrm{vol}_{\mathbb{R}^{d+1}}(\mathcal{C}_R)}\leq \frac{\sqrt{2}d(d+1)^{3/2}}{\sqrt{\pi}(d-1)^2}\, \frac{1}{R^2}\frac{c^{d-1}}{(\min_{v_i\neq0}|v_i|)^d}\Big(\frac{en}{d}\Big)^d\,\epsilon^d}.
\end{align}
\end{remark}

\subsection{Anti-concentration bounds}
The fact that forging sets have small Lebesgue measure suggests that under reasonable probability distributions it should be unlikely to randomly sample a data point from a  forging set. We now provide results demonstrating that is indeed the case. We derive probability bounds for linear regression and one-layer neural networks under the following assumptions.
\paragraph{Assumptions.}\label{assum-prob-case} Let $\mathcal{D}$ be a probability distribution on $\mathbb{R}^d\times\mathbb{R}$, and let 
$\RSnew{K}= C_1 \times C_2\subset\mathbb{R}^d\times\mathbb{R}$\RSnew{,} where $C_1$ and $C_2$ are compact sets with radius $R_1$ and $R_2$, respectively. 
Assume that the joint density $p(\bm x, y)$ associated with $\mathcal{D}$ exists and satisfies the following conditions.
\begin{itemize}
\item[\emph{(i)}] $p(\bm x, y)$ is proportional to $ e^{-g(\bm x, y)}$, where $g:\mathbb{R}^d\times\mathbb{R}\to\mathbb{R}$ satisfies the Lipschitz condition that there exists a constant $L_g>0$ such that for all \((\bm x_1, y_1), (\bm x_2, y_2)\in \RSnew{K}\),\[|g(\bm x_1, y_1) - g(\bm x_2, y_2)|\leq L_g\|(\bm x_1, y_1) - (\bm x_2, y_2)||,\]
\item[\emph{(ii)}]
There exists $(\bm x_c, y_c) \in \RSnew{K}$ and constants $C>0$ and $\omega>0$ such that for all $t\geq t_0$,
\[\mathbb{P}\Bigl(\|(\bm x,y)-(\bm x_c,y_c)\|>t\Bigr) \leq C\,e^{-t^\omega} \]
where $t_0 =  \sup\{r>0:\overline{B_r(\bm x_c,y_c)}\subseteq \RSnew{K}\} $. 
\end{itemize}

Under these assumptions, we prove a bound on the probability of drawing a point from a set with a given Lebesgue measure in \Cref{probass} (Appendix \ref{prob-sec3}). Combining this with the results from the previous subsections, we obtain probability bounds for drawing a forging data point for linear regression and a one-layer neural network. We start with linear regression, and a consequence of \Cref{prop:lr-eps-forging}.
\vspace{1em}
\begin{corollary}\label{corlr-p}
    Under the assumption of \Cref{assum-prob-case}, for $\epsilon>0$ and any $(\bm x,y)$,  the $\epsilon$-forging set $S_\epsilon$ in linear regression \eqref{def: lr-eps-forging-set} satisfies     %
\begin{equation}\label{cor:prob-lr}
\mathbb{P}_{\mathcal{D}}\Big((\bm z,t)\in S_\epsilon\Big)\,\leq\, C_{L_g, \RSnew{K}}\frac{d}{(d-1)R_1R_2}\,\epsilon \, + \,Ce^{-(\frac{\text{diam}(\RSnew{K})}{2})^\omega}
    \end{equation}
    where $C_{L_g, \RSnew{K}} = e^{L_g\,\text{diam}(\RSnew{K})}$. 
    Furthermore, if $\frac{\epsilon}{A}<\sin(c\epsilon)$ for some $c\in[\frac{1}{A},\frac{\pi}{2A}]$, where $A = \|\nabla_{\bm w} f\left(\bm w; (\bm x,y)\right)\|$, then
    \begin{equation}\label{cor:prop-lr2}
    \mathbb{P}_{\mathcal{D}}\Big((\bm z,t)\in S_\epsilon\Big)\,\leq\,\RSnew{C_{L_g, K}\,\frac{\sqrt{2}d\sqrt{d+1}c^{d-1}}{\sqrt{\pi}(d-1)^2}\,\frac{1}{R_1R_2}\;\epsilon^d + \,Ce^{-(\frac{\text{diam}(\RSnew{K})}{2})^\omega}}.
    \end{equation}
\end{corollary}
\begin{proof}
\RSnew{Applying \Cref{probass} with \eqref{lr-eps-1} and \eqref{lr-eps-2}, yields \eqref{cor:prob-lr} and \eqref{cor:prop-lr2}} respectively.
\end{proof}
We can apply the same technique to one-layer neural networks using the results in \Cref{prop: nn-eps-forging}.
\begin{corollary}\label{cornn-p}
     Under the assumption of \Cref{assum-prob-case}, for any $\epsilon>0$ and any $(\bm x,y)$,
      the $\epsilon$-forging set $S_\epsilon$ in one-layer neural networks \eqref{def:nn-eps-forging} satisfies
     \[
\mathbb{P}_{\mathcal{D}}\Big((\bm z,t)\in S_\epsilon\Big)\,\leq\, C_{L_g, \RSnew{K}}\frac{d}{(d-1)R_1R_2}\frac{1}{\min_{v_i\neq 0}\{|v_i|\}}\sum^{d}_{k=0}\begin{pmatrix}
    n\\k
\end{pmatrix} \epsilon + Ce^{-(\text{diam}(\RSnew{K})/2)^\omega}
    \]
    where $C_{L_g, \RSnew{K}} = e^{L_g\,\text{diam}(\RSnew{K})}$. 
    If 
    \RSnew{there exists $c>0$ with $\frac{\epsilon}{{A_i}}<\sin(c\epsilon)$,  for all $i$, where $A_i = \|\nabla_{\bm W}f\left(\bm W,\bm v; (\bm x,y)\right)_i^T\|$, then}
    \[
    \mathbb{P}_{\mathcal{D}}\Big((\bm z,t)\in S_\epsilon\Big)\,\leq\, \RSnew{C_{L_g, K}\frac{\sqrt{2}d\sqrt{d+1}c^{d-1}}{\sqrt{\pi}(d-1)^2}\, \frac{1}{R_1R_2}\frac{1}{(\min_{v_i\neq0}|v_i|)^d}\sum^{d}_{k=0}\begin{pmatrix}
    n\\k
\end{pmatrix}\,\epsilon^d} + Ce^{-(\text{diam}(\RSnew{K})/2)^\omega}.
    \]
\end{corollary}
\begin{proof}
As before, directly apply \Cref{probass} with \eqref{1lnn-eps-1} and \eqref{1lnn-eps-2}.
\end{proof}

\section{Forging for smooth loss functions}\label{generalforgeanalysismainsec}

We now turn to the analysis of general smooth loss functions, aiming to characterize the volume of forging sets under minimal assumptions. This broader perspective provides a unified framework that applies to a wide range of problems, including linear regression and neural networks with smooth activation functions, without the need for case-by-case treatment. However, the sharper bounds obtained in the previous section for the specific problems of linear regression and one-layer neural networks rely on stronger, problem-specific structure, and are therefore not fully encompassed by the forthcoming results. Because our analysis here prioritizes generality over specialization, the resulting bounds may not always be sharp, but this is an expected trade-off. As before, we have 
\begin{align}
\x_{k+1} = \x_k - h \nabla f(\x_k; \z_k), \label{gd1}  
\end{align}
where now $f : \mathbb{R}^n \times \mathcal{Z} \to \mathbb{R}$ is $\mathcal{C}^1$-smooth in its first argument (the parameter), and \RSnew{$\mathcal{Z} \cong \mathbb{R}^d$ is the standard Euclidean space in $d$ dimensions that corresponds to the data manifold. We derive all the results in this work for the Euclidean space $\mathcal{Z} \cong \mathbb{R}^d$ and conjecture that these results can be extended to smooth non-Euclidean data manifolds using appropriate charts with local diffeomorphisms.} We leave this for future work.  Recall also that the iteration \eqref{gd1} may originate from a stochastic algorithm 
or it may be deterministic when ${\z_k}$ is any fixed sequence from $\mathcal{Z}$. The distinction is immaterial for our purposes, as we assume that the full trajectory ${\x_k}$ is fixed in advance.

 Let $\mathcal{Z} \cong \mathbb{R}^d$, $\mu_1, \mu_2$ be the Lebesgue measures on $ \mathbb{R}^n, \mathbb{R}^d$ respectively and the product measure $ \mu_1 \bigotimes \mu_2$ be the Lebesgue measure on $ \mathbb{R}^n \times \mathbb{R}^d$. Further, let $\pi_1, \pi_2$ be the projection maps defined as $ \pi_1 : \mathbb{R}^n \times \mathcal{Z}\to \mathbb{R}^n $, $ \pi_2 : \mathbb{R}^n \times \mathcal{Z} \to \mathcal{Z}$. Then we make the following assumptions on the function $f$.
\paragraph{Assumptions}
\begin{enumerate}
\item[\textbf{A1.}]\textbf{(Smoothness)} The function $f$ is jointly $\mathcal{C}^2$ smooth $\mu_1 \bigotimes \mu_2$ a.e. on $ \mathbb{R}^n \times \mathcal{Z} \cong \mathbb{R}^n \times \mathbb{R}^d$ and 
$$ f \in \mathcal{C}^2((\mathbb{R}^n \times \mathcal{Z}) \backslash V) \hspace{0.1cm}$$
where the set $ V \subset \mathbb{R}^n \times \mathcal{Z}$ is closed and $ \mu_1 \bigotimes \mu_2(V) = 0$.  
 \item[\textbf{A2. }]\textbf{(Lipschitz regularity of second variations)} The second variation matrix function $\nabla_{\z}\nabla_{\x}f(\cdot \hspace{0.1cm}; \hspace{0.1cm}\cdot)$ defined on $ (\mathbb{R}^n \times \mathcal{Z}) \backslash V$ is locally Lipschitz continuous with respect to the operator norm on every compact set of $(\mathbb{R}^n \times \mathcal{Z}) \backslash V$.
\RSnew{\item[\textbf{A3.}] \textbf{(Quantitative nondegeneracy in the data variable)}
There exists some compact, convex set $ D_1 \times D_2 \Subset \mathbb{R}^n \times \mathcal{Z} \cong \mathbb{R}^n \times \mathbb{R}^d$ with non-empty interior and constants $\gamma>0$, $r<d$ that depend only on $  D_1 \times D_2$ such that, for every $(\x,\z)\in D_1\times D_2$, at most $r$ singular values of the $n \times d$ matrix
\(
    \nabla_{\z}\nabla_{\x}f(\x;\z)
\)
are strictly smaller than $\gamma$.
Equivalently, if
\[
    r_\gamma(\x,\z)
    :=
    \#\Big\{j\in\{1,\dots,d\}:\sigma_j\big(\nabla_{\z}\nabla_{\x}f(\x;\z)\big)<\gamma\Big\},
\]
where $\sigma_1\geq \cdots \geq \sigma_d\geq 0$ are the singular values, then
\(
    r_\gamma(\x,\z)\leq r \text{ for every }
      (\x,\z)\in D_1\times D_2.
\)}
\end{enumerate}

\RSnew{Assumption \textbf{A3} ensures  the model gradient has a local variation in at least one direction in $\mathbb{R}^d$ around any  data point. This is because $\gamma >0$ and $r_{\gamma} \leq r <d $ so for every $(\w,\z) \in D_1 \times D_2$, $\nabla_{\z}\nabla_{\x}f(\x;\z) $ has at least one singular value greater than or equal to $\gamma$. Without \textbf{A3} the model gradient can be locally constant in some open neighborhood of the data point and one could forge the model gradient using any data point from such neighborhood. To prevent such pathological cases that give rise to trivial forging sets, we require  \textbf{A3}.}

These assumptions cover a broad class of learning/unlearning models and several standard setups satisfy \textbf{A1-A3} outright. These include quadratic loss with analytic activations in neural networks, as well as classical linear regression (see Appendix~\ref{degenerateassumptionsec}). In fact, consider any  $\mathcal{C}^2$ loss function whose joint second derivative is locally Lipschitz continuous. Such functions when combined with neural networks using smooth activation functions (e.g., sigmoid, tanh) satisfy \textbf{A1–A2}.
Even with quadratic loss and non-smooth activations such as leaky \texttt{ReLU}, \textbf{A1--A2} continue to hold (see Appendix~\ref{reluassumptionsec}). Finally, the non-degeneracy condition \textbf{A3}, which holds in settings like linear regression is discussed more generally in Appendix~\ref{degenerateassumptionsec}. With these conditions in hand, we now derive volume bounds for forging sets.

We first assume, without loss of generality, that  
\[
    f \in \mathcal{C}^2(\mathbb{R}^n \times \mathcal{Z}),
\]  
or equivalently $V = \emptyset$, so that non-differentiability issues do not arise. Since $f$ is jointly $\mathcal{C}^2$ $\mu_1 \otimes \mu_2$-a.e. on $\mathbb{R}^n \times \mathcal{Z}$, results established under global differentiability will naturally extend to the almost-everywhere setting.  
We restrict our forging analysis to a compact, convex set  
\[
    D_1 \times D_2 \Subset \mathbb{R}^n \times \mathcal{Z} \cong \mathbb{R}^n \times \mathbb{R}^d,
\]  
\RSnew{satisfying \textbf{A3}} where both $D_1$ and $D_2$ have non-empty interior. By \textbf{Assumption~A2}, $\nabla_{\z}\nabla_{\x}f(\cdot\,;\,\cdot)$ is $L$-Lipschitz continuous on $D_1 \times D_2$, with the constant $L$ depending only on this compact set.  Formally, $L$-Lipschitz continuity means that for any $(\x_1,\z_1), (\x_2,\z_2) \in D_1 \times D_2$,  
\begin{align}
    \big\|\nabla_{\z}\nabla_{\x}f(\x_1; \z_1) - \nabla_{\z}\nabla_{\x}f(\x_2; \z_2) \big\| 
    \leq L \left\| 
    \begin{bmatrix} \x_1 - \x_2 \\[2pt] \z_1 - \z_2 \end{bmatrix} 
    \right\|. \nonumber
\end{align}

We recall the definition of $\epsilon$ forging set for any data point $\z^* \in D_2$ 
\begin{align}
   S_{\epsilon}(\x,\z^*) = \{\z \in D_2 : \norm{\nabla f(\x; \z) - \nabla f(\x; \z^*)} \leq \epsilon\}. \nonumber
\end{align}

and now establish a key result on the second variation matrix $\nabla_{\z} \nabla_{\x} f(\x; \z^*)$.  
\begin{lemma}\label{seconvarlem1}
   \RSnew{Suppose \textbf{A1--A3} hold and $V=\emptyset$. For any $\x\in D_1$ and $\z^*\in D_2$, we have
    \begin{align}
        \|\nabla_{\z}\nabla_{\x}f(\x;\z^*)(\z^*-\z)\|
        \leq
        \|\nabla_{\x}f(\x;\z^*)-\nabla_{\x}f(\x;\z)\|
        +
        \frac{L}{2}\|\z^*-\z\|^2 .
        \label{eq:lem1_main}
    \end{align}
    In particular, if $\|\z^*-\z\|\leq \sqrt{\frac{2\epsilon}{L}}$ and $\z\in S_\epsilon(\x,\z^*)$, then
    \begin{align}
        \|\nabla_{\z}\nabla_{\x}f(\x;\z^*)(\z^*-\z)\|
        \leq 2\epsilon .
    \end{align}}
\end{lemma}
The proof of Lemma \ref{seconvarlem1} is in Appendix \ref{sec4proof1}.
Using Lemma~\ref{seconvarlem1}, we can estimate the local volume of points near $\z^*$, that $\epsilon$-forge $\z^*$. 

\subsection{Volume bounds}

Before deriving general volume bounds for $\epsilon$-forging sets, we present a lemma that provides a bound for the volume of local forging regions.

\begin{lemma}\label{seconvarlem2}
\RSnew{ Suppose \textbf{A1--A3} hold and $V=\emptyset$. Fix $\x\in D_1$ and $\z^*\in D_2$, and let
    \(
        \M_0(\z^*) := \nabla_{\z}\nabla_{\x}f(\x;\z^*)\in \mathbb{R}^{n\times d}.
    \)
    Let $E_{<\gamma}(\z^*)\subset \mathbb{R}^d$ denote the span of the right singular vectors of $\M_0(\z^*)$ corresponding to singular values strictly smaller than $\gamma$, and let
    \(
        E_{\geq \gamma}(\z^*) := E_{<\gamma}(\z^*)^\perp.
    \)
    For any $\eta>0$, if $\z$ $\eta$-forges $\z^*$ and
    \(
        \z\in \mathcal{B}_{\sqrt{\frac{2\eta}{L}}}(\z^*),
    \)
    then
    \begin{align}
        \z-\z^*
        \in
        \Big(\mathcal{B}_{\sqrt{\frac{2\eta}{L}}}(\mathbf{0})\cap E_{<\gamma}(\z^*)\Big)
        \oplus
        \Big(\mathcal{B}_{\frac{2\eta}{\gamma}}(\mathbf{0})\cap E_{\geq \gamma}(\z^*)\Big).
        \label{eq:local_spectral_inclusion}
    \end{align}
    Consequently, for all $\eta > 0$, with
    \(        k
        :=
        \mathrm{dim}  (E_{<\gamma}(\z^*))
    \), we have
    \begin{align}
        \mathrm{vol}_{\mathbb{R}^d}\Big(
        S_\eta(\x,\z^*)\cap \mathcal{B}_{\sqrt{\frac{2\eta}{L}}}(\z^*)
        \Big)
        \leq
        2^d
         \left(\sqrt{\frac{2}{L}}\right)^k
        \left(\frac{2}{\gamma}\right)^{d-k}
        \eta^{\,d-\frac{k}{2}}.
        \label{eq:local_spectral_volume}
    \end{align}
    }
\end{lemma}
The proof of Lemma \ref{seconvarlem2} is in Appendix \ref{sec4proof2}. 
\RSnew{In Lemma~\ref{seconvarlem2},
\(
    \Big(\mathcal{B}_{\sqrt{\frac{2\eta}{L}}}(\mathbf{0})\cap E_{<\gamma}(\z^*)\Big)
        \oplus
        \Big(\mathcal{B}_{\frac{2\eta}{\gamma}}(\mathbf{0})\cap E_{\geq \gamma}(\z^*)\Big)
\)
is a 
thickening of the vector space spanned by the right singular vectors of $\M_0(\z^*)$ with singular values strictly smaller than $\gamma$.  
The upper bound 
estimates the volume of this 
thickening inside the ball $\mathcal{B}_{\sqrt{\frac{2\eta}{L}}}(\mathbf{0})$.  
Adding $\z^*$ to the set  translates it and does not affect its volume. }
The next theorem extends this local volume bound to the entire compact, convex set $D_2$ via a covering argument.

\begin{theorem}\label{seconvarthm1}
\RSnew{ Suppose \textbf{A1--A3} hold and $V=\emptyset$. Fix $\x\in D_1$ and $\z^*\in D_2$, and recall that
    \(
        S_\epsilon(\x,\z^*)
        :=
        \Big\{
        \z\in D_2:
        \|\nabla_{\x}f(\x;\z)-\nabla_{\x}f(\x;\z^*)\|
        \leq \epsilon
        \Big\}.
    \)
    Assume  that $D_2\subset \mathbb{R}^d$ is compact and convex, and that there exist $\c\in \mathbb{R}^d$ and $\tau>0$ such that
    \(
        \mathcal{B}_\tau(\c)\subset D_2.
    \)
    Then for every
    \(
        0<\epsilon\leq \min\left\{\frac{\gamma^2}{4L },\frac{L\tau^2}{2}\right\},
    \)
    we have
    \begin{align}
        \mu_2\big(S_\epsilon(\x,\z^*)\big)
        \leq
        C_{D_2,\gamma,d,L}\,
        \epsilon^{\frac{d-r}{2}},
        \label{eq:global_measure_bound}
    \end{align} 
   where one may take
    \(  C_{D_2,\gamma,d,L}
        :=
        24^d \,
        \frac{\Gamma(\frac{d}{2}+1)}{\pi^{d/2}}
        \,\mathrm{vol}_{\mathbb{R}^d}(D_2)\,
        \left(\frac{L}{2 \gamma^2}\right)^{d/2}
       \left( \frac{\gamma^2}{4L } \right)^{r/2}.
    \)
    }
\end{theorem}
The proof of Theorem \ref{seconvarthm1} is in Appendix \ref{sec4proof3}.

\begin{remark}
\RSnew{ It may, in general, be difficult to control $r$ (from \textbf{A3}) over $D_2$. 
 However, in certain cases it  can be done  using  properties of the loss function $f$. In Appendix~\ref{degenerateassumptionsec}, we compute  a bound for linear regression.}
\end{remark}

\begin{remark}[Limiting behavior as $d$ grows]
  From Theorem~\ref{seconvarthm1},
\begin{align}
    \mu_2\big( S_{\epsilon}(\x, \z^*) \big) 
    & \le 
    \RSnew{  24^d \,
        \frac{\Gamma(\frac{d}{2}+1)}{\pi^{d/2}}
        \,\mathrm{vol}_{\mathbb{R}^d}(D_2)\,
        \left(\frac{L}{2 \gamma^2}\right)^{d/2}
       \left( \frac{\gamma^2}{4L } \right)^{r/2} \epsilon^{\frac{d- r }{2}}} \\
    & \lesssim_{C_1(d)}
    \RSnew{\sqrt{\pi d } \left( \frac{144 L\, d \, (\mathrm{diam}(D_2))^2}{\pi e  \gamma^2 } \right)^{d/2}
    \epsilon^{\frac{d- r}{2}}}, \label{thm2simple1}
\end{align}
where in the last step we used  $\mathrm{vol}_{\mathbb{R}^d}(D_2) \leq  (\mathrm{diam}(D_2))^d$,
\(
\RSnew{  \left( \frac{\gamma^2}{4L } \right)^{r/2} \leq  1 }
\)
\RSnew{for $L > \frac{\gamma^2}{4 } $}, 
together with 
$\Gamma(\frac{d}{2}+1) = (\frac{d}{2})!$ 
and Stirling's approximation $(\frac{d}{2})! \sim \sqrt{\pi d} (\frac{d}{2e})^{d/2}$ for large $d$.
Here $C_1(d) = 1+\mathcal{O}(1/d)$.

For the special case \RSnew{$ r = o(d)$
(e.g., $r = 2$ for  linear regression, see Appendix~\ref{degenerateassumptionsec})},
we have
\[
\begin{aligned}
    \mu_2\big( S_{\epsilon}(\x, \z^*) \big) 
     \lesssim_{C_1(d)}
    \RSnew{{\sqrt{\pi d}}\left( \frac{144 L\, d \, (\mathrm{diam}(D_2))^2 \, \epsilon^{1 - o(d)/d}}{\pi e \gamma^2} \right)^{d/2}},
\end{aligned}
\]
where we assume that the local Lipschitz parameter $L: = L(d)$ is a function of $d$ and $L \to \infty$ as $d \to \infty$. Then for fixed $\epsilon$, the right-hand side grows without bound as $d \to \infty$.
Thus, if $\epsilon = \epsilon(d)$ depends on $d$, 
a sufficient condition for $\mu_2(S_{\epsilon}(\x,\z^*)) \to 0$ as $d \to \infty$ is
\begin{equation}\label{eq:eps_bigO}
\epsilon = \mathcal{O}\!\left( (L \RSnew{(\mathrm{diam}(D_2))^2} )^{-(1+a)} d^{-\frac{(1+a)(d+1)}{d}}\right) 
\quad \forall \ a>0,
\end{equation}
where $a$ is independent of $d$.  
In particular, if \RSnew{$ r = o(d)$}, 
Theorem~\ref{seconvarthm1} and \eqref{eq:eps_bigO} imply
\[
\lim_{\substack{d \to \infty \\ \epsilon = \mathcal{O}\bigg(  (L \RSnew{(\mathrm{diam}(D_2))^2} )^{-(1+a)} d^{-\frac{(1+a)(d+1)}{d}}\bigg),\ \forall a>0}}
\mu_2\big( S_{\epsilon}(\x, \z^*) \big) = 0.
\]
\RSnew{For linear regression in $\mathbb{R}^d$ and a certain choice of $D_1, D_2$ (see Appendix \ref{degenerateassumptionsec_aux}), it can be shown that for any $a>0$ if $$ \epsilon = \epsilon(d) : =  (d+1)^{-2.5(1+a)} (d+1)^{-\frac{(1+a)}{(d+1)}} $$ then the volume of $\epsilon$ forging set vanishes as $d \to \infty$ . }
\end{remark}

\subsection{Anti-concentration of probability measure for $\epsilon$-forging}\label{antieps}

Building on the local volume bounds (Lemmas~\ref{seconvarlem1}--\ref{seconvarlem2}) and the global volume bound (Theorem~\ref{seconvarthm1}), we now convert these geometric controls into probability bounds. Inside $D_2$, probability compares to volume via a locally log-Lipschitz density. Outside $D_2$, a tail concentration controls the remainder.
Assume $\probP \ll \mu_2$ on $\mathbb{R}^d$ with density $p(\z)$ and:
\begin{enumerate}
\item[\textbf{P1.}] $p(\z) \propto e^{-g(\z)}$ for a continuous $g:\mathbb{R}^d\to\mathbb{R}$ that is locally Lipschitz on every compact set of $\mathbb{R}^d$.

\item[\textbf{P2.}] Let $\z_c := \frac{1}{\mathrm{vol}_{\mathbb{R}^d}(D_2)} \int_{D_2} \z \, d\mu_2$ be the center of the compact, convex, non-empty set $D_2$. Then
\[
\probP(\{\z:\|\z-\z_c\|\ge t\}) \le C e^{-t^{\omega}}
\quad\text{for some $\omega>0$ and all } t \ge t_0 := \sup\{r>0:\overline{\mathcal{B}_{r}(\z_c)} \subseteq D_2\}.
\]
\end{enumerate}

\medskip

\begin{theorem}\label{seconvarthm1p}
\RSnew{ Assume \textbf{A1--A3}  and $V=\emptyset$. Fix $\x\in D_1$ and $\z^*\in D_2$, and assume that $D_2\subset \mathbb{R}^d$ is compact and convex, and that there exist $\c\in \mathbb{R}^d$ and $\tau>0$ such that
    \(
        \mathcal{B}_\tau(\c)\subset D_2.
    \)
    }
Suppose $\probP$ satisfies \textbf{P1--P2}, and let $L_g$ be the local Lipschitz constant of $g$ on $D_2$. Then \RSnew{for every
    \(
        0<\epsilon\leq \min\left\{\frac{\gamma^2}{4L },\frac{L\tau^2}{2}\right\},
    \)
    we have}
  \begin{align}
      \probP\bigg( \{\z \in \mathbb{R}^d : \norm{\nabla f(\x; \z) - \nabla f(\x; \z^*)} \leq \epsilon\}\bigg) &    \leq \RSnew{C'_{D_2,\gamma,d,L}\,
        \epsilon^{\frac{d-r}{2}}}  + C  e^{-t_0^{\omega}} \nonumber
    \end{align} 
    \RSnew{where one may take
    \(  C'_{D_2,\gamma,d,L}
        :=
        24^d \,
        \frac{\Gamma(\frac{d}{2}+1)}{\pi^{d/2}}
        \, e^{L_g \mathrm{diam}(D_2)} \,
        \left(\frac{L}{2 \gamma^2}\right)^{d/2}
       \left( \frac{\gamma^2}{4L } \right)^{r/2}.
    \)}
\end{theorem}

The proof of Theorem \ref{seconvarthm1p} is in Appendix \ref{prob-sec4app}. Note that if $D_2$ is a closed ball in $\mathbb{R}^d$, then $t_0=\mathrm{diam}(D_2)/2$. 
The bound may be optimized by scaling $t_0$ (equivalently $\mathrm{diam}(D_2)$), bearing in mind that the constants $L$ and $L_g$ depend on $D_2$ and can scale with $\mathrm{diam}(D_2)$.

\subsection{Volume estimates of forging sets under different data regimes}\label{subsec:n-vs-d}

We now examine how the volume bounds scale with the relative sizes of the model dimension $n$ and the data dimension $d$, which may be relevant in various machine learning contexts. Indeed, recall that $d$ denotes the intrinsic data/input dimension (e.g., pixels, patch or token embeddings, feature vectors), and $n$ denotes the number of trainable parameters (globally or for the layer/block in focus) that influence $\nabla_{\x} f(\x;\z)$. In modern deep networks, both regimes can arise naturally. Early convolutional layers can be effectively underparameterized ($d \ge n$) due to high-resolution inputs and weight sharing associated with convolutions. Meanwhile, wide fully connected layers or attention layers, and later dense layers are often overparameterized ($d < n$). Our bounds  predict larger forging sets in \RSnew{underparameterized} settings, precisely where $\nabla_{\z}\nabla_{\x} f$ tends to have a larger null space relative to $d$.\footnote{Here “over/underparameterized” refers to the parameter–input relation ($n$ vs.\ $d$), not to the sample-size relation used elsewhere in learning theory.} \RSnew{We consider representative achievable scenarios where the lower bound on $r$ is attained. These constructions highlight how the interplay between $n$ and $d$ can induce qualitatively different scaling of the forging-set volume.}
The key driver is the \RSnew{parameter $r$ from \textbf{A3} 
}
which enters Theorem~\ref{seconvarthm1} through the factors \RSnew{$\left( \frac{\gamma^2}{4L } \right)^{r/2}$ and $\epsilon^{\frac{d-r}{2}}$}.  Rank--nullity yields
\begin{align}
\ker(\M_0(\z^*)) \oplus \ker(\M_0(\z^*))^{\perp} \cong \mathbb{R}^d, \label{genforge006}
\end{align}
and
\begin{align}
\dim(\ker(\M_0(\z^*))) + \dim (\mathrm{range}(\M_0(\z^*))) = d \label{genforge007}
\end{align}
\RSnew{where $r \ge r_{\gamma} \ge   \dim(\ker(\M_0(\z^*)))$ from \textbf{A3}.}
Intuitively, larger $r$ enlarges directions in $\mathcal{Z}$ where gradients change little, and thus tends to increase forging-set volume.

\subsubsection*{Case 1: Data dimension is dominant, i.e., $ d \ge n $ } \label{underparacase1}
Since $\M_0(\z^*)$ has rank at most $n$, we have
\begin{align}
0 \le d-n \le \dim(\ker(\M_0(\z^*))) \RSnew{\le r} \le d-1 \, . \label{genforge02}
\end{align}

\RSnew{The lower bound $r=d-n$ is achievable by construction. For example, consider  $f(\w,\z)  :=  \langle \w, \A\z\rangle + \lambda \norm{\w}^2 $ with regularizer $\lambda>0$ and $\A \in \mathbb{R}^{n \times d}$ satisfying $\mathrm{dim}\, \mathrm{ker}(\A) = d-n$ and $\min_{1 \le i \le n} \sigma_i(\A) > \gamma$.  Then $\nabla_{\z}\nabla_{\x} f(\w,\z) = \A $. By construction, $\A$ has exactly $d-n$ zero singular values, and all nonzero singular values are bounded below by $\gamma$. Hence, the number of singular values strictly smaller than $\gamma$ is exactly $r = d-n$. Applying Theorem~\ref{seconvarthm1} ($\frac{\gamma^2}{4L\epsilon }>1$) with $r = d-n $ yields}
\begin{align}
\mu_2\big( S_{\epsilon}(\x, \z^*) \big) 
\le \RSnew{24^d \,
        \frac{\Gamma(\frac{d}{2}+1)}{\pi^{d/2}}
        \,\mathrm{vol}_{\mathbb{R}^d}(D_2)\,
        \left(\frac{L}{2 \gamma^2}\right)^{d/2}
       \left( \frac{\gamma^2}{4L \epsilon} \right)^{\frac{d-n}{2}}  \epsilon^{\frac{d}{2}}} . \label{genforge99}
\end{align}

\subsubsection*{Case 2: Model dimension is dominant, i.e., $ n > d $ } \label{overparacase2}
Here
\begin{align}
0 \le \dim(\ker(\M_0(\z^*))) \RSnew{\le r}\le d-1 \, . \label{genforge029}
\end{align}
\RSnew{In this regime, the minimal possible value $r = 0$ is also achievable. For example, consider the same function $f(\w,\z)  :=  \langle \w, \A\z\rangle + \lambda \norm{\w}^2 $ where $\A \in \mathbb{R}^{n \times d}$ has full column rank and
$\min_{1 \le i \le d} \sigma_i(\A) > \gamma$. Then $\nabla_{\z}\nabla_{\x} f(\w,\z) = \A$
and no singular values fall below $\gamma$, so $r = 0$.
Using 
$r=0$ and applying Theorem~\ref{seconvarthm1} ($\frac{\gamma^2}{4L\epsilon } >1$)} gives
\begin{align}
\mu_2\big( S_{\epsilon}(\x, \z^*) \big) 
\le \RSnew{24^d \,
        \frac{\Gamma(\frac{d}{2}+1)}{\pi^{d/2}}
        \,\mathrm{vol}_{\mathbb{R}^d}(D_2)\,
        \left(\frac{L}{2 \gamma^2}\right)^{d/2}
        \epsilon^{\frac{d }{2}}} . \label{genforge23c}
\end{align}
\RSnew{Clearly, the bound in \eqref{genforge23c} from Case 2  is smaller compared to the bound in \eqref{genforge99} from Case 1. This is because $r$ can be much smaller (close to $0$) when $ n>d$ thereby creating fewer directions in $\mathbb{R}^d$ where the second variation norm is at most $\gamma$, hence varies little. When $d >n$ we have $r \ge d-n$ so at least $d-n$ directions in $\mathbb{R}^d$ exist where the second variation norm is at most $\gamma$. }
Probability bounds for the two regimes follow by combining Theorem~\ref{seconvarthm1p} with \eqref{genforge99} and \eqref{genforge23c}. We omit the routine substitution.

\section{Forging analysis for batch SGD}\label{forgeinbatch}

We now consider forging when the parameters evolve via batch SGD. A key point that we recall is that the sampling distribution does not matter for the forger. At each step $k$ the mini-batch $\{\z_{k_j}\}_{j=1}^B$ is given, and our bounds are deterministic functions of the mini-batch. We therefore work conditionally on the realized batch sequence and treat $\{\z_{k_j}\}_{k,j}$ as fixed. We also recall that as in Section~\ref{antieps}, probabilistic assumptions are only needed if we wish to convert volume bounds into probability bounds.
We consider the batch-SGD update
\begin{align}
    \x_{k+1} = \x_k -  \frac{h}{B} \sum_{j=1}^B \nabla f(\x_k; \z_{k_j}), \qquad \z_{k_j}\in \mathcal{Z},
    \label{batchsgd}
\end{align}
and assume throughout that $f\in \mathcal{C}^2(\mathbb{R}^n \times \mathcal{Z})$ satisfies Assumptions A1--A3 with $V=\emptyset$. As before, we restrict attention to a compact, convex set $D_1 \times D_2 \Subset \mathbb{R}^n \times \mathbb{R}^d$ with non-empty interiors such that $\{\x_k\}_k \subset D_1$ and $\{\z_{k_j}\}_{k,j} \subset D_2$. By A2, $\nabla_{\z}\nabla_{\x} f$ is $L$-Lipschitz on $D_1 \times D_2$ with $L$ depending  on this compact set.

\begin{remark}
[On smoothness] Batch subgradient methods for merely Lipschitz $f$ are delicate, involving finite sums of subgradients, Clarke calculus, and step-size schedules \cite{rockafellar1998variational, van2024weak}. General convergence guarantees typically require additional structure (e.g., weak convexity or Clarke regularity). To keep the forging analysis tractable and avoid these technicalities, we assume $\mathcal{C}^2$ smoothness on the domain in this section. See also \RSnew{\Cref{aesmoothrem2}}  in \Cref{sectionaeforge} on the technical challenges associated with the analysis of $\epsilon-$forging sets in the context of non-smooth functions.
\end{remark}
Fix a step $k$ and let $\z^*$ be a data point appearing in the batch $\{\z_{k_j}\}_{j=1}^B$ with multiplicity $m>0$. Since the forger knows $f$ and the realized batch, they can replicate the averaged gradient $\frac{h}{B} \sum_{j=1}^B \nabla f(\x_k; \z_{k_j})$ either by replacing only the $m$ occurrences of $\z^*$ or by replacing the entire batch. We first analyze the single-point replacement (replacing the copies of $\z^*$ only) and then obtain the full-batch replacement as a direct consequence in Remark~\ref{remarkbatchsgd}, which simply relies on the insight that replacing the entire batch is equivalent to setting $m=B$.

Because only the $m$ occurrences of $\z^*$ in  $\{\z_{k_j}\}_{j=1}^B$ are replaced while all other batch elements are fixed, the forging constraint at step $k$ depends solely on the  replacements. Thus the relevant event is in $\mathcal{Z}^m$ and any sampling statement is with respect to the product measure $\probP^{\otimes m}$ on $\mathcal{Z}^m$. Define
\begin{align}
    \tilde{S}_{\epsilon}(\x_k, \z^*) 
    := 
    \Bigg\{ (\tilde{\z}_1,\ldots, \tilde{\z}_m ) \in \mathcal{Z}^m \cong \mathbb{R}^{md} 
    :\;
    \Big\|\frac{1}{B}\sum_{j=1}^m \big(\nabla f(\x_k; \z^*) - \nabla f(\x_k; \tilde{\z}_j)\big) \Big\| 
    \le \epsilon 
    \Bigg\}.
    \label{batchforgeset1}
\end{align}
We will bound the volume of the above set. Note that $\probP^{\otimes m}\big(\tilde{S}_{\epsilon}(\x_k,\z^*)\big)$ can be obtained using the same Lipschitz and second-variation controls as in the single-point case. Let $F : \mathbb{R}^n \times \mathbb{R}^{md} \to \mathbb{R}$ be
\[
F(\x; \Z) \equiv \frac{1}{B} \sum_{j=1}^m f(\x;\z_j),
\qquad 
\Z := \begin{bmatrix}\z_1\\ \vdots\\ \z_m\end{bmatrix} \in \mathbb{R}^{md}.
\]
Under Assumption \textbf{A1} with $f \in \mathcal{C}^2(\mathbb{R}^n \times \mathbb{R}^d)$, we have $F \in \mathcal{C}^2(\mathbb{R}^n \times \mathbb{R}^{md})$.
For fixed $\x$, 
\begin{align}
\nabla_{\Z}\nabla_{\x}F(\x;\Z)
= \frac{1}{B}\big[\,\nabla_{\z_1}\nabla_{\x}f(\x;\z_1)\ \big|\ \nabla_{\z_2}\nabla_{\x}f(\x;\z_2)\ \big|\ \cdots\ \big|\ \nabla_{\z_m}\nabla_{\x}f(\x;\z_m)\,\big].
\label{eq:batch-block}
\end{align}

Let $D_2^m := D_2 \times \cdots \times D_2 \subset \mathbb{R}^{md}$. 
Using \textbf{A2} (local Lipschitz continuity of $\nabla_{\z}\nabla_{\x} f$ on $D_1\times D_2$ with constant $L$), we obtain for any fixed $\x\in D_1$ and any $\Z_1,\Z_2 \in D_2^m$,
\begin{align*}
\big\|\nabla_{\Z}\nabla_{\x}F(\x;\Z_1)-\nabla_{\Z}\nabla_{\x}F(\x;\Z_2)\big\|
&\le \frac{1}{B}
\left(\sum_{i=1}^m 
\big\|\nabla_{\z_i}\nabla_{\x}f(\x;\z_i)-\nabla_{\z_i}\nabla_{\x}f(\x;\z_i')\big\|^2
\right)^{1/2} \\
&\le \frac{L}{B}\left(\sum_{i=1}^m \|\z_i-\z_i'\|^2\right)^{1/2}
= \frac{L}{B}\,\|\Z_1-\Z_2\|.
\end{align*}
Here we used the block-operator inequality (Lemma \ref{suplem4}) 
$\big\|\,[\A_1|\cdots|\A_m]\,\big\| \le \big(\sum_{i=1}^m \|\A_i\|^2\big)^{1/2}$ 
for horizontal concatenation of matrices.

Hence, for each fixed $\x\in D_1$, the mixed second variation $\nabla_{\Z}\nabla_{\x}F(\x;\cdot)$ is $(L/B)$-Lipschitz on the closed, convex set $D_2^m$.  \RSnew{Throughout this section, we assume a modified version of assumption \textbf{A3}.}
\RSnew{
\begin{enumerate}
   \item[\textbf{A3'.}] \textbf{(Quantitative nondegeneracy in the batch setting)}
There exists some compact, convex set $ D_1 \times D_2^m \Subset \mathbb{R}^n \times \mathbb{R}^{md}$ with non-empty interior and constants $\gamma>0$, $r<md$ where $\gamma$ depends only on $  D_1 , \, D_2$ and $r$ depends on $  D_1 , \, D_2, \, m$ such that, for every $(\x,(\z_1,\cdots,\z_m))\in D_1\times D_2^m$, at most $r$ singular values of the $n \times md$ concatenated block matrix
\[
    \big[\,\nabla_{\z_1}\nabla_{\x}f(\x;\z_1)\ \big|\ \nabla_{\z_2}\nabla_{\x}f(\x;\z_2)\ \big|\ \cdots\ \big|\ \nabla_{\z_m}\nabla_{\x}f(\x;\z_m)\,\big]
\]
are strictly smaller than $\gamma$. 
\end{enumerate}
}

\RSnew{Observe that in \textbf{A3'}, $\gamma>0$ is independent of $m$. This is not a vacuous condition as, using \textbf{A3}, one can always find $\gamma >0$, $r'<d$ that depend only on $D_1 \times D_2$ such that for any $(\w,\z) \in D_1 \times D_2 $, at most $r'$ singular values of $\nabla_{\z}\nabla_{\x}f(\x;\z) $ are strictly less than $\gamma$. Thereafter, the $n \times md$ concatenated block matrix
\(
    \big[\,\nabla_{\z_1}\nabla_{\x}f(\x;\z_1)\ \big|\ \nabla_{\z_2}\nabla_{\x}f(\x;\z_2)\ \big|\ \cdots\ \big|\ \nabla_{\z_m}\nabla_{\x}f(\x;\z_m)\,\big]
\) will have at most $r<md$ singular values strictly less than $\gamma$ where $r$ will be a function of $ r', m$. The explicit relation $r:= r(r',m)$  can be derived from \textbf{A3} as follows.}
\RSnew{Let $\bm A = \big[\, \bm A_1\ \big|\ \bm A_2\ \big|\ \cdots\ \big|\ \bm A_m\,\big]$ be our concatenated block matrix with $\bm A_i = \nabla_{\z_i}\nabla_{\x}f(\x;\z_i) $ for $ 1\le i\le m$ where each $\bm A_i$ is a matrix with at most $r’$ singular values less than $\gamma$. Then $\bm A_i \bm A_i^T$ has at least $d-r’$ eigenvalues greater than $\gamma^2$. But $\bm A \bm A^T = \sum_{i=1}^m \bm A_i \bm A_i^T $ so $ \bm A \bm A^T \succeq \bm A_i \bm A_i^T \succeq \mathbf{0}$ (in the matrix ordering) which implies the eigenvalues satisfy $\lambda_j (\bm A \bm A^T) \geq \lambda_j (\bm A_i \bm A_i^T)$ for every $i, j$. So $\bm A$ has at least $d-r’$ singular values greater than or equal to $\gamma$ thus it has at most $ md - (d-r’) = (m-1)d +r’$ singular values smaller than $\gamma$. We can therefore take $r = (m-1)d +r’$. Then by \textbf{A3'}, for every $(\x,(\z_1,\cdots,\z_m))\in D_1\times D_2^m$ at most $r<md$ singular values of the $n \times md$ matrix
\(
  \nabla_{\Z}\nabla_{\x}F(\x;\Z)
=  \frac{1}{B}\big[\,\nabla_{\z_1}\nabla_{\x}f(\x;\z_1)\ \big|\ \nabla_{\z_2}\nabla_{\x}f(\x;\z_2)\ \big|\ \cdots\ \big|\ \nabla_{\z_m}\nabla_{\x}f(\x;\z_m)\,\big]
\)
are strictly smaller than $\tfrac{\gamma}{B}$.}

Let $\Z^*=\boldsymbol{1}_m\otimes \z^*$, where $\boldsymbol{1}_m$ is the all-ones vector in $\mathbb{R}^m$. In analogy with \eqref{batchforgeset1}, we define the batched forging set for $F$ by
\begin{align}
    \tilde{S}_{\epsilon}(\x_k,\Z^*)
    :=
    \Big\{\tilde{\Z}\in D_2^m:\ \|\nabla_{\x}F(\x_k;\Z^*)-\nabla_{\x}F(\x_k;\tilde{\Z})\|\le \epsilon\Big\}.
    \label{batchforgeset2}
\end{align}
Since $\nabla_{\x}F(\x;\Z)=\frac{1}{B}\sum_{j=1}^m \nabla_{\x} f(\x;\z_j)$, this definition is equivalent to \eqref{batchforgeset1}.

\RSnew{As in \Cref{subsec:n-vs-d}, we focus on favorable scenarios where the lower bound on $r$ is attained.}
Using the batch mixed-derivative Lipschitz constant $\frac{L}{B}$ from the previous subsection, we now bound the volume of $\tilde{S}_{\epsilon}(\x_k,\Z^*)$ in three cases. 

\subsubsection*{Case 1: Data dimension is dominant, i.e., $ d \geq n $ }
Let $ \M_0(\Z^*) = \nabla_{\Z} \nabla_{\x} F(\x; \Z^*) \in \mathbb{R}^{n \times md}$ with $\Z^* \in D_2^m \Subset \mathbb{R}^{md}$. Then 
\begin{align}
    \ker(\M_0(\Z^*))  \bigoplus  \ker(\M_0(\Z^*))^{\perp} &\cong \mathbb{R}^{md}  \label{genforge006a1} 
\end{align}
and by rank--nullity, 
\begin{align}
     \dim(\ker(\M_0(\Z^*))) + \dim(\mathrm{range}(\M_0(\Z^*))) &= md. \label{genforge007a1}
\end{align}
\RSnew{Since $\dim(\ker(\M_0(\Z^*)))  \le r_{\gamma}  \le r$ from \textbf{A3}}, we have
\begin{align}
    0 \le md-n \le \dim(\ker(\M_0(\Z^*))) \RSnew{\le r }\le md-1 
    \, , \label{genforge02a1}
\end{align}
where the upper bound uses \textbf{A3} for $F$ (the column space is nontrivial) and the lower bound follows since $\mathrm{rank}(\M_0(\Z^*))\le n$.

With Lipschitz constant $\frac{L}{B}$, applying Theorem~\ref{seconvarthm1} to $F$ (with $d\mapsto md$, $L\mapsto L/B$ \RSnew{and $ \gamma \mapsto \gamma/B$}) yields
\begin{align}
    \mu_2^{\otimes m}\!\big( \tilde{S}_{\epsilon}(\x_k, \Z^*) \big) 
    & \le  \RSnew{ 24^{md} \,
        \frac{\Gamma(\frac{{md}}{2}+1)}{\pi^{{md}/2}}
        \,\mathrm{vol}_{\mathbb{R}^{md}}(D^m_2)\,
        \left(\frac{LB}{2 \gamma^2}\right)^{{md}/2}
       \left( \frac{\gamma^2}{4L B \epsilon } \right)^{r/2} \epsilon^{\frac{{md} }{2}} },
    \label{genforge99ba1}
\end{align}
where $r \le md-1$. \RSnew{Specializing to the achievable case where $r$ attains its lower bound, and noting that $\frac{\gamma^2}{4L B\epsilon } > 1$ for $\epsilon <\frac{\gamma^2}{4LB  }$, we use the lower bound from \eqref{genforge02a1} for $r$, namely $r = md-n$, and obtain}
\begin{align}
  \mu_2^{\otimes m}\!\big( \tilde{S}_{\epsilon}(\x_k, \Z^*) \big) 
  & \le \RSnew{ 24^{md} \,
        \frac{\Gamma(\frac{{md}}{2}+1)}{\pi^{{md}/2}}
        \,\mathrm{vol}_{\mathbb{R}^{md}}(D^m_2)\,
        \left(\frac{LB}{2 \gamma^2}\right)^{{md}/2}
       \left( \frac{\gamma^2}{4L B \epsilon } \right)^{\frac{md-n}{2}} \epsilon^{\frac{{md} }{2}} }.
  \label{genforge99a1}
\end{align}

\subsubsection*{Case 2: Model dimension is sub-dominant, i.e., $ md \geq n > d $ }
As in Case~1, $\mathrm{rank}(\M_0(\Z^*)) \le n$, hence
\begin{align}
    0 \le md-n \le \dim(\ker(\M_0(\Z^*))) \RSnew{\le r }\le md-1 
    \, .  \nonumber
\end{align}
 Therefore substituting \RSnew{the lower bound $ r = md-n$} in \eqref{genforge99ba1} yields exactly \eqref{genforge99a1}:
\begin{align}
\mu_2^{\otimes m}\!\big(\tilde{S}_{\epsilon}(\x_k,\Z^*)\big)
\le
\RSnew{ 24^{md} \,
        \frac{\Gamma(\frac{{md}}{2}+1)}{\pi^{{md}/2}}
        \,\mathrm{vol}_{\mathbb{R}^{md}}(D^m_2)\,
        \left(\frac{L B}{2 \gamma^2}\right)^{{md}/2}
       \left( \frac{\gamma^2}{4L B \epsilon } \right)^{\frac{md-n}{2}} \epsilon^{\frac{{md} }{2}} }.  \nonumber
\end{align}

\subsubsection*{Case 3: Model dimension is super-dominant, i.e., $ n > md $ }
Here $\mathrm{rank}(\M_0(\Z_i^*)) \le md$, so
\begin{align}
    0 \le \dim(\ker(\M_0(\Z_i^*))) \RSnew{\le r } \le md-1 
    \, .   \nonumber
\end{align}
\RSnew{When the lower bound $r=0$ is achieved, noting that $\frac{\gamma^2}{4LB\epsilon}>1$ for $\epsilon<\frac{\gamma^2}{4LB}$, we obtain}
\begin{align}
\mu_2^{\otimes m}\!\big(\tilde{S}_{\epsilon}(\x_k,\Z^*)\big)
\le
\RSnew{ 24^{md} \,
        \frac{\Gamma(\frac{{md}}{2}+1)}{\pi^{{md}/2}}
        \,\mathrm{vol}_{\mathbb{R}^{md}}(D^m_2)\,
        \left(\frac{LB}{2 \gamma^2}\right)^{{md}/2}
        \epsilon^{\frac{{md} }{2}} }. \nonumber
\end{align}

\RSnew{Clearly, for $\epsilon <\frac{\gamma^2}{4L B }$, the upper bounds of the volume estimates from cases 1 and 2, where we used $r=md-n$, are greater than the upper bound estimate from case 3 where $r=0$ was used. }

\begin{remark}[Replacing the entire batch]\label{remarkbatchsgd}
{Replacing the entire batch is equivalent to setting $m=B$. So, one can obtain analogous volume bounds by simply replacing $m$ with $B$ in the analyses above. }
\end{remark}
\RSnew{\begin{remark}[Forging volume and batch size]
From \eqref{genforge99ba1}, we obtain
\[
\mu_2^{\otimes m}\!\big( \tilde{S}_{\epsilon}(\x_k, \Z^*) \big)
\le
C_{m,d,r,D_2}
\left(\frac{LB}{\gamma^2 }\right)^{\frac{md-r}{2}}
\epsilon^{\frac{md-r}{2}},
\]
where
\(
C_{m,d,r,D_2}
:=
24^{md}\,2^{-\frac{md}{2}-r}\,
\frac{\Gamma\!\left(\frac{md}{2}+1\right)}{\pi^{md/2}}\,
\mathrm{vol}_{\mathbb{R}^{md}}(D_2^m).
\)
Thus, for fixed $m$, the bound grows like $B^{(md-r)/2}$ as $B$ increases. For example, when $m=1$,
\(
\mu_2\!\big( \tilde{S}_{\epsilon}(\x_k, \Z^*) \big)
\le
C_{1,d,r,D_2}
\left(\frac{LB}{\gamma^2 }\right)^{\frac{d-r}{2}}
\epsilon^{\frac{d-r}{2}}.
\)
Hence, in this regime, the upper bound on the forging volume increases compared to the plain SGD estimate corresponding to $B=1$. 
Indeed, intuitively, from \eqref{batchforgeset1}, the $\epsilon$-forging condition for $m=1$ implies finding $ \tilde{\z}_1 \in D_2$ such that
    \(
        \frac{1}{B} \left\| \nabla f(\x_k; \z^*) - \nabla f(\x_k; \tilde{\z}_1) \right\| 
    \le \epsilon 
    \)
    which is satisfied by any  $\tilde{\z}_1 \in D_2$ as $B \to \infty$.
\end{remark}}

\section{Forging analysis under almost-everywhere smoothness}\label{sectionaeforge}

Having established volume and probability bounds under global $\mathcal{C}^2$ smoothness, we now extend the results of Section~\ref{generalforgeanalysismainsec} to the almost-everywhere smooth setting of \textbf{Assumption~A1}, where
\[
f \in \mathcal{C}^2\big((\mathbb{R}^n \times \mathcal{Z}) \setminus V\big), 
\qquad \mu_1 \!\otimes\! \mu_2(V)=0,\qquad V\ \text{closed, possibly nonempty}.
\]
\RSnew{In particular, \Cref{sectionaeforge} provides a general structural analysis for the almost-everywhere smooth setting. It  shows how forging-set bounds depend on local regularity parameters without exactly quantifying these relationships. Appendix \ref{sectiongeomk1} complements this by introducing stronger assumptions to derive more explicit and quantifiable results. 
}

We begin with some notation and preliminaries. As before, we restrict to compact and convex $D_1\times D_2 \Subset \mathbb{R}^n\times\mathcal{Z}$ with nonempty interiors. By \textbf{Assumption~A2}, $\nabla_{\z}\nabla_{\x} f$ is \emph{locally} Lipschitz on $(\mathbb{R}^n\times\mathcal{Z})\setminus V$ where the Lipschitz constant $L$ depends only on the compact set $ D_1 \times D_2$.
 By Fubini’s theorem, for $\mu_1$-almost every $\x\in\mathbb{R}^n$ the slice
\[
V_2(\x) \ :=\ \pi_2\big(V\cap(\{\x\}\times\mathcal{Z})\big)\ \subset \ \mathcal{Z}
\]
satisfies $\mu_2\!\big(V_2(\x)\big)=0$. 
Moreover, \textbf{Assumption~A3} \RSnew{may not hold everywhere on the compact set $ D_1 \times D_2$ due to non-smoothness of $f$ on $\x \times V_2(\x)$ for $\x \in D_1$. Instead, in this section, for $\mu_1$-a.e. $\w \in D_1$ we will require that \textbf{A3} holds on certain compact subsets of the form $\w \times K_1 \subset D_1 \times D_2$ where $f$ is smooth on $ \w \times K_1$ and $K_1$ will be specified shortly.  }
Since our forging analysis fixes $\x$, we henceforth suppress the $\x$-dependence and write $V_2 := V_2(\x)$. Because $V$ is closed in $\mathbb{R}^n\times\mathcal{Z}$, the set
$V\cap(\{\x\}\times\mathcal{Z})=\{\x\}\times V_2$ is closed in the subspace $\{\x\}\times\mathcal{Z}$; the natural homeomorphism $\{\x\}\times\mathcal{Z}\cong\mathcal{Z}$ then implies that $V_2$ is closed in $\mathcal{Z}$. Consequently, for compact $D_2\subset\mathcal{Z}$ the intersection $D_2\cap V_2$ is compact.

A main idea of our arguments 
is to remove the null set $V$ and $\partial D_2$, use inner regularity to build a compact $K_1\subset D_2\setminus(V_2\cup\partial D_2)$ on which $f$ is $\mathcal{C}^2$, and  apply our previous arguments on these cores.
\begin{definition}\label{defk2}
For any $\nu_1>0$, there exists a $\mu_2$-measurable compact set $K_1=K_1(\nu_1)$ such that
\[
K_1 \subset D_2 \setminus \big(V_2 \cup \partial D_2\big)
\quad\text{and}\quad
 \mu_2(K_{1})<\mu_2(D_2 \backslash (V_2 \cup \partial D_2)) < \mu_2(K_{1}) + \nu_1.
\]
\end{definition}

Such a compact set $K_{1}$ exists because the Lebesgue measure $\mu_2$ is inner regular and 
$D_2 \setminus (V_2 \cup \partial D_2)$ is $\mu_2$-measurable with positive measure 
(here $\mu_2(V_2)=\mu_2(\partial D_2)=0$, and the boundary of a compact convex set has zero measure; see Lemma~\ref{suplem1}). 
Clearly $f \in \mathcal{C}^2$ on the slice $\{\x\}\times K_{1}$ for $\mu_1$-a.e.\ $\x$.
Since $ \mu_2(K_{1})<\mu_2(D_2 \backslash (V_2 \cup \partial D_2)) < \mu_2(K_{1}) + \nu_1$ and $\mu_2(V_2)=\mu_2(\partial D_2) =0$ we have 
\begin{align}
    \mu_2(K_1) 
    &> \mu_2\!\big(D_2 \setminus (V_2 \cup \partial D_2)\big) - \nu_1 \nonumber \\
    &= \mu_2(D_2) - \mu_2\!\big(D_2 \cap (V_2 \cup \partial D_2)\big) - \nu_1 \nonumber \\
    &= \mu_2(D_2) - \nu_1. \label{genforgeinnerreg}
\end{align}

The next lemma guarantees the existence of non-intersecting open covers for  $K_1, D_2 \cap V_2, \partial D_2 $.
\begin{lemma}\label{coverlemma00}
   Let $\nu_1>0$ and $K_1=K_1(\nu_1)$ be as in Definition~\ref{defk2}. Then there exists $\xi=\xi(\nu_1)>0$  such that the open covers $O_1(\xi), O_2(\xi), O_3(\xi) $ given by
     \[
    O_1(\xi) = \bigcup_{\z \in K_1} \mathcal{B}_{\xi}(\z) \hspace{0.2cm}, \hspace{0.2cm} O_2(\xi) = \bigcup_{\z \in D_2 \cap V_2} \mathcal{B}_{\xi}(\z) , \hspace{0.2cm} O_3(\xi) = \bigcup_{\z \in \partial D_2} \mathcal{B}_{\xi}(\z)\label{eq:cover00} \tag{\textbf{cover}}
\]
      satisfy 
$$ O_1(\xi) \cap O_2(\xi) = \emptyset, \hspace{0.2cm} O_1(\xi) \cap O_3(\xi) = \emptyset, \hspace{0.2cm}  O_3(\xi) \subset D_2 + \mathcal{B}_{\xi}(\mathbf{0}), \hspace{0.2cm} O_1(\xi) \subseteq \text{int}(D_2).$$
    Moreover the measures satisfy \begin{align}
   0 \leq  \mu_2(D_2 \backslash(V_2 \cup \partial D_2)) - \mu_2(O_1(\xi)) = \mu_2(D_2) - \mu_2(O_1(\xi)) < \nu_1 \label{measureineqa1}
\end{align}
and
    $\xi \to 0$ as $\nu_1 \downarrow 0$.
\end{lemma}

The proof of Lemma \ref{coverlemma00} is in Appendix \ref{sec6proof1}.

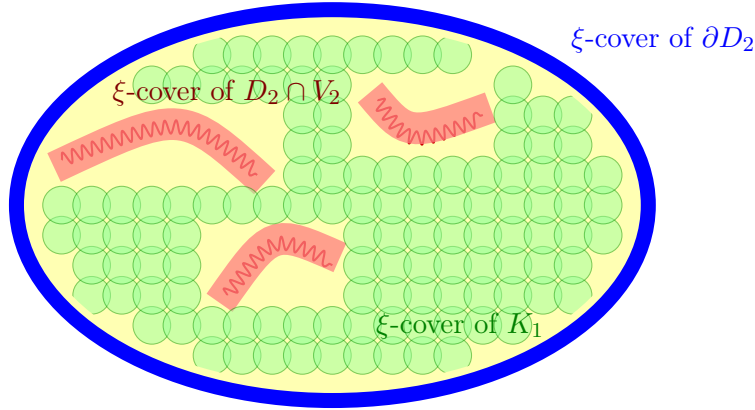
\begin{figure}[h]
\begin{center}
\begin{tikzpicture}[scale=2]

\fill[yellow!30] (0,0) ellipse (2.1 and 1.3); 
\draw[blue, line width=2mm] (0,0) ellipse (2.1 and 1.3); 

\node[blue] at (2.2,1.1) {\( \xi\text{-cover of } \partial D_2  \)};

\draw[red!80!black, thick, decorate, decoration={snake, segment length=4pt}] 
    (-1.8,0.3) .. controls (-1,0.6) .. (-0.5,0.2);
\draw[red!80!black, thick, decorate, decoration={snake, segment length=4pt}] 
    (0.3,0.7) .. controls (0.5,0.4) .. (1,0.6);
\draw[red!80!black, thick, decorate, decoration={snake, segment length=4pt}] 
    (-0.7,-0.6) .. controls (-0.4,-0.2) .. (0,-0.4);

\draw[red!50, line width=12pt, opacity=0.75]
    (-1.85,0.25) .. controls (-1,0.65) .. (-0.45,0.15);
\draw[red!50, line width=12pt, opacity=0.75]
    (0.25,0.75) .. controls (0.5,0.45) .. (1.05,0.65);
\draw[red!50, line width=12pt, opacity=0.75]
    (-0.75,-0.65) .. controls (-0.4,-0.15) .. (0.05,-0.35);

\begin{scope}
\clip (0,0) ellipse (2 and 1.2); 

\foreach \x in {-1.8,-1.6,...,1.8}
  \foreach \y in {-1.0,-0.8,...,1.0}
  {
    
    \pgfmathsetmacro{\r}{sqrt((\x)^2/4 + (\y)^2/1.44)}
    \ifdim \r pt<0.95pt

\pgfmathtruncatemacro{\inTubeA}{(\x > -1.9) && (\x < -0.3) && (\y > 0.1) && (\y < 0.7)}
\pgfmathtruncatemacro{\inTubeB}{(\x > 0.15) && (\x < 1.1) && (\y > 0.25) && (\y < 0.82)}
\pgfmathtruncatemacro{\inTubeC}{(\x > -0.85) && (\x < 0.1) && (\y > -0.7) && (\y < -0.1)}

      \ifnum\inTubeA=0
        \ifnum\inTubeB=0
          \ifnum\inTubeC=0
            
            \filldraw[fill=green!40, draw=green!60!black, opacity=0.5] (\x,\y) circle (0.125);
          \fi
        \fi
      \fi

    \fi
  }
\end{scope}

\node[green!50!black] at (0.85,-0.8) {\( \xi\text{-cover} \text{ of } K_1 \)};
\node[red!50!black] at (-0.7,0.75) {\(\xi\text{-cover of } D_2 \cap V_2 \)};

\end{tikzpicture}
\captionsetup{width=\textwidth}
    \caption{A two dimensional representation of the $\xi$ covers for the sets $ K_1,  D_2 \cap V_2, \partial D_2$. Here, $D_2$ is the closure of an ellipse in $\mathbb{R}^2$ and the set $ D_2 \cap V_2$ is represented by the three disconnected red curves. The sum of volumes in the yellow, red and blue regions is equal to $\nu_1$ and the set $K_1 \subset D_2 \backslash (V_2 \cup \partial D_2) $ is a function of $\nu_1$.}
    \label{figcoveringsets1}
    \end{center}
\end{figure}

\subsection{Lebesgue-volume bounds for $\epsilon$-forging under a.e. smoothness}

Fix $\mu_1$-a.e.\ $\x\in D_1$ and set $\rho_\epsilon:=\sqrt{2\epsilon/L}$ . 
\RSnew{For any $\z^*\in K_1$, define the forging set with respect to $K_1$ by
\begin{align*}
S_{\epsilon}(\x,\z^*,K_1)
:=\Big\{\, \z' \in \bigcup_{\z \in K_1} \mathcal{B}_{\rho_\epsilon}(\z)\ :\ 
\|\nabla_{\x} f(\x;\z')-\nabla_{\x} f(\x;\z^*)\|\le \epsilon\,\Big\}.
\end{align*}}
\begin{theorem}\label{aeforgethm1}
Fix $\nu_1>0$ and let $K_1=K_1(\nu_1)\subset D_2\setminus (V_2\cup\partial D_2)$ be as in Definition~\ref{defk2}, and let $\xi=\xi(\nu_1)>0$ be as in Lemma~\ref{coverlemma00}. 
\RSnew{Assume \textbf{A1--A2} with $V\neq\emptyset$, that $D_2\subset \mathbb{R}^d$ is compact and convex. 
    Further, for $\mu_1$-a.e.\ $\x\in D_1$, assume \textbf{A3} holds for the compact (not necessarily convex) set $\x \times K_1 $ with parameter $r \leq d-1$.
    Then for every
    \(
        0<\epsilon\leq \min\left\{\frac{\gamma^2}{4L },\frac{L\tau^2}{2} , \tfrac{L}{2}\,\xi^2\right\},
    \)
    and for $\mu_1$-a.e.\ $\x\in D_1$,
    we have
    \begin{align}
        \mu_2\big(S_\epsilon(\x,\z^*, K_1) \cap K_1 \big)
        \leq
        C_{D_2,\gamma,d,L}\,
        \epsilon^{\frac{d-r}{2}},
    \end{align} 
   where one may take
    \(  C_{D_2,\gamma,d,L}
        :=
         16^d \,
        \frac{\Gamma(\frac{d}{2}+1)}{\pi^{d/2}}
        \,\mathrm{vol}_{\mathbb{R}^d}(D_2)\,
        \left(\frac{L}{2 \gamma^2}\right)^{d/2}
       \left( \frac{\gamma^2}{4L } \right)^{r/2}.
    \)
    } 
\end{theorem}
The proof of Theorem \ref{aeforgethm1} is in Appendix \ref{sec6proof2}. Omitting the technical details, the proof is completed in \RSnew{four} steps: first, using Lemma \ref{coverlemma00}, we identify a uniform open cover for \RSnew{$K_1$  that is away from $V_2 \cap \partial D_2$. Then we specify a finite sub-cover $\tilde O_1(\epsilon)$ for $S_{\epsilon}(\x,\z^*,K_1) \cap K_1$. Next, in each ball of this sub-cover $\tilde O_1(\epsilon)$} we estimate the volume of a local $\epsilon$ forging set using Lemma~\ref{seconvarlem2}, and in the last step we use a union bound to estimate the total volume of $\epsilon$ forging in $K_1$. \RSnew{We also note that in Theorem \ref{aeforgethm1}, $r$ is a function (that is not necessarily continuous) of $K_1(\nu_1)$ so $r:= r(K_1) = r(\nu_1)$. Thus changing $\nu_1$ can change $r$. Fortunately, since $r \le d-1$ uniformly from \textbf{A3} for any $K_1$, taking $\nu_1 \to 0$  still guarantees  $ \limsup_{\nu_1 \to 0} r(\nu_1) \le d-1$.  }

\begin{remark}[On the $\nu_1$–dependence of $\epsilon$.]
Compared to Theorem~\ref{seconvarthm1}, Theorem~\ref{aeforgethm1} is more restrictive in that
$\epsilon$ cannot be chosen arbitrarily. It must satisfy
\[
\epsilon \;<\; \epsilon_{\max}(\nu_1)
\quad\text{with}\quad
\epsilon_{\max}(\nu_1)\ :=\ \min\Big\{\RSnew{\tfrac{\gamma^2}{4L}},\tfrac{L\tau^2}{2} \, ,\tfrac{L}{2}\,\xi(\nu_1)^2\Big\},
\]
where $\xi(\nu_1)>0$ is the separation radius from Lemma~\ref{coverlemma00} ensuring that all
$\rho_\epsilon$-balls remain inside $\mathrm{int}(D_2)$ and away from $V_2$.
This dependence is a direct consequence of assuming only a.e.\ joint $\mathcal{C}^2$-smoothness:
as $K_1=K_1(\nu_1)$ approaches $D_2\setminus(V_2\cup\partial D_2)$ (inner regularity), its distance
to $V_2\cup\partial D_2$ may shrink, forcing $\rho_\epsilon=\sqrt{2\epsilon/L}$ to shrink accordingly.

By Lemma~\ref{coverlemma00}, one can choose $K_1(\nu_1)$ so that $\xi(\nu_1)$ is nonincreasing and
$\xi(\nu_1)\downarrow 0$ as $\nu_1\downarrow 0$; consequently,
$\epsilon_{\max}(\nu_1)$ is nonincreasing and right-continuous at $\nu_1=0$.
The \emph{rate} at which $\epsilon_{\max}(\nu_1)\downarrow 0$ depends on the geometry of
$K_1(\nu_1)$ near $V_2\cup\partial D_2$ and cannot be specified in general. For simple models
(e.g., squared loss with two-layer networks and leaky \texttt{ReLU}), one can characterize
$K_1(\nu_1)$ more precisely and obtain concrete decay rates; see Appendix~\ref{sectiongeomk1}.
\end{remark}

\subsection{Anti-concentration for $\epsilon$-forging under a.e. smoothness}

Building on the volume bound of Theorem~\ref{aeforgethm1}, we now derive
probability (anti-concentration) bounds for the $\epsilon$-forging set
\[
A_\epsilon(\x,\z^*)\ :=\ \big\{\z\in \mathbb{R}^d\setminus V_2:\ 
\|\nabla_{\x} f(\x;\z)-\nabla_{\x} f(\x;\z^*)\|\le \epsilon\big\},
\quad \text{for $\mu_1$-a.e.\ $\x\in D_1$.}
\]
Assuming \textbf{P1--P2} (log-Lipschitz density on $D_2$ and subexponential tails),
we convert Lebesgue-volume bounds on $S_{\epsilon}(\x,\z^*,K_1)$ into bounds on
$\probP(A_\epsilon(\x,\z^*))$ by (i) controlling the density oscillation on $D_2$
via $e^{L_g\,\mathrm{diam}(D_2)}$ and (ii) bounding the mass outside $D_2$ using the
tail $Ce^{-t_0^\omega}$. As in Theorem~\ref{aeforgethm1}, $\epsilon$ must satisfy
$\epsilon \le \epsilon_{\max}(\nu_1)$ with $\epsilon_{\max}(\nu_1)\downarrow 0$ as
$\nu_1\downarrow 0$, and we pass to the limit by taking $\nu_1\to 0$.

\begin{theorem}[Anti-concentration under a.e.\ smoothness]\label{aeforgethm2}
{Under the setting of Definition~\ref{defk2} and Lemma~\ref{coverlemma00}, let $\nu_1>0$ and that $K_1=K_1(\nu_1)\subset D_2\setminus(V_2\cup\partial D_2)$.}
 \RSnew{Assume \textbf{A1--A2} with $V\neq\emptyset$. 
    Further, for $\mu_1$-a.e.\ $\x\in D_1$, assume \textbf{A3} holds for the compact (not necessarily convex) set $\x \times K_1 $ with parameter $r \leq d-1$. Assume \textbf{P1--P2} and let $L_g$ denote the local Lipschitz constant of $g$ on the compact, convex set $D_2$. 
    Then for every
    \(
        0<\epsilon\leq \min\left\{\frac{\gamma^2}{4L },\frac{L\tau^2}{2} , \tfrac{L}{2}\,\xi^2\right\},
    \)
    and for $\mu_1$-a.e.\ $\x\in D_1$,
    we have
    \begin{align}
            \probP\bigg( \{\z \in \mathbb{R}^d \backslash V_2 : \norm{\nabla f(\x; \z) - \nabla f(\x; \z^*)} \leq \epsilon\}\bigg)
        &\leq
        C'_{D_2,\gamma,d,L}\,
        \epsilon^{\frac{d-r}{2}} +  \RSnew{2}\frac{e^{L_g \text{diam}(D_2)}}{\text{vol}_{\mathbb{R}^d}(D_2) } \nu_1 + C  e^{-t_0^{\omega}} \nonumber \\ & \hspace{3cm}   \mu_1 \text{ a.e. on } D_1 ,
    \end{align} 
    \RSnew{where one may take
    \(  C'_{D_2,\gamma,d,L}
        :=
         16^d \,
        \frac{\Gamma(\frac{d}{2}+1)}{\pi^{d/2}}
        \, e^{L_g \mathrm{diam}(D_2)} \,
        \left(\frac{L}{2 \gamma^2}\right)^{d/2}
       \left( \frac{\gamma^2}{4L } \right)^{r/2}.
    \)}
    }
\end{theorem}
The proof of Theorem \ref{aeforgethm2} is in  \Cref{prob-sec6app}.
Unlike Theorem~\ref{aeforgethm1}, the probability bound in Theorem~\ref{aeforgethm2} carries an
explicit $\nu_1$ term. Absent additional structure on $V_2$, there is no general
rate relating $\epsilon$ and $\nu_1$.
\begin{remark}[Toward non-smooth losses]\label{aesmoothrem2}
Throughout Section~\ref{sectionaeforge} the a.e.\ analysis relies on the existence of gradients 
$\nabla_{\x} f(\cdot;\cdot)$ and mixed derivatives $\nabla_{\z}\nabla_{\x} f(\cdot;\cdot)$ on a 
large-measure compact core $K_1\subset D_2$. A more general framework for genuinely non-smooth $f$ 
would replace gradients by generalized (Clarke) subgradients and study the forging set
\[
S_{\epsilon}(\x,\z^*)
:= \Big\{\z \in D_2:\ \inf_{\substack{\v \in \partial f(\x;\z)\\ \v^* \in \partial f(\x;\z^*)}}
\|\v-\v^*\|\le \epsilon\Big\}.
\]
Pursuing this requires tools beyond Lemmas~\ref{seconvarlem1}–\ref{seconvarlem2} to obtain workable “second-variation” surrogates. 
We leave this non-smooth extension to future work.
\end{remark}

\section{Conclusions and Future Work}\label{conclusion}

We presented geometric and probabilistic bounds on the volume of $\epsilon$-forging sets. We first considered linear regression and simple neural networks, then obtained results both under global $\mathcal{C}^2$ smoothness and under almost-everywhere smoothness. We also provided batch-SGD variants and dimension-regime comparisons. We believe this work opens several avenues for interesting future work. 

For example, our analysis was aimed at the case of \emph{one-step} forging. It considered when a single replacement yields an $\epsilon$-close update. A natural extension is \emph{multi-step forging}, where a more sophisticated adversary may (benignly) perturb now and (adversarially) repair later to return to the original trajectory. Formalizing and analyzing  such multi-step forging attacks is an avenue we leave open to future work.

Another interesting direction of future work is to extend our Lebesgue measure and probability bounds to smooth embedded data manifolds. Yet another is to handle more general function classes such as weakly convex functions and Clarke regular functions (see Section~\ref{sectionaeforge}). 

Additionally, there appears to be a connection to differential privacy (DP) \cite{dwork2014algorithmic} that is under-explored. Our bounds characterize typical single-point sensitivity (``what is the measure of points that would have produced nearly the same update?'') and the fact that forging sets are of low measure arguably shows that this sensitivity is generally high. This, in turn, suggests a tension with DP's mandate to suppress individual influence \cite{dwork2014algorithmic,chien2024langevin,sekhari2021remember}. It would be interesting to rigorously explore whether this tension is due to an inherent tradeoff between privacy and robustness to forging.


\appendix
\renewcommand{\thesection}{\Alph{section}}
\renewcommand{\theHsection}{appendix.\Alph{section}}

\section{Proof of \Cref{thm:forging_incentive}}\label{pf-motiv}

Before we prove the theorem, we first restate Theorem 2.1.12 from \cite{nesterov2013introductory}, as we will refer to it later. We also present two lemmas that study the only sources of deviation that may arise in the gradient updates associated with an alternative parameter trajectory. Either the same loss function is applied to two different initializations as would happen in the iterations following a data point being replaced, or different loss functions are used, as would happen when a data point is replaced. The induced distance between the resulting model parameters can then be bounded by combining the bounds on these deviations and applying them inductively across the full sequence of parameter updates.
\vspace{1em}
\begin{theorem}[\cite{nesterov2013introductory}]\label{ext-co-coerc}
    If $f : \mathbb{R}^d \to \mathbb{R}$ is $\mu$-strongly convex and $L$-smooth in an open set $O\subset \mathbb{R}^d$, then for all $\bm x, \bm y\in O$,
    \[\langle \nabla f(\bm x) - \nabla f(\bm y),\bm x - \bm y \rangle \geq \frac{\mu L}{L + \mu}\|\bm x - \bm y\|^2 + \frac{1}{L + \mu}\|\nabla f(\bm x) - \nabla f(\bm y)\|^2. \]
\end{theorem}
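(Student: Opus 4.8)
This is a restatement of Nesterov's Theorem 2.1.12, and the cleanest route is to reduce it to the standard co-coercivity inequality for convex smooth functions applied to an auxiliary function. Set $\phi := f - \tfrac{\mu}{2}\|\cdot\|^2$ on $O$. Since $f$ is $\mu$-strongly convex, $\phi$ is convex; and since $f$ is simultaneously $\mu$-strongly convex and $L$-smooth, $\nabla\phi(\bm x) = \nabla f(\bm x) - \mu\bm x$ is $(L-\mu)$-Lipschitz, so $\phi$ is $(L-\mu)$-smooth (immediate from the second-order characterization when $f\in\mathcal{C}^2$, and from the first-order characterizations in general). If $\mu = L$, then $\nabla\phi$ is constant, forcing $\nabla f(\bm x) - \nabla f(\bm y) = \mu(\bm x-\bm y)$, and direct substitution shows the claimed inequality holds with equality; so assume henceforth $\mu < L$.

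The first main step is to establish the co-coercivity of $\nabla\phi$, namely
\[
\langle \nabla\phi(\bm x) - \nabla\phi(\bm y),\, \bm x-\bm y\rangle \;\ge\; \tfrac{1}{L-\mu}\,\|\nabla\phi(\bm x) - \nabla\phi(\bm y)\|^2 .
\]
I would prove this by the usual argument: for fixed $\bm y$, consider $h(\bm u) := \phi(\bm u) - \langle\nabla\phi(\bm y),\bm u\rangle$, which is convex, $(L-\mu)$-smooth, and globally minimized at $\bm y$ (its gradient vanishes there). Applying the descent lemma at $\bm u = \bm x$ with step $1/(L-\mu)$ gives $h(\bm y)\le h(\bm x - \tfrac{1}{L-\mu}\nabla h(\bm x)) \le h(\bm x) - \tfrac{1}{2(L-\mu)}\|\nabla h(\bm x)\|^2$, which rearranges to the ``improved descent'' bound $\phi(\bm x) \ge \phi(\bm y) + \langle\nabla\phi(\bm y),\bm x-\bm y\rangle + \tfrac{1}{2(L-\mu)}\|\nabla\phi(\bm x) - \nabla\phi(\bm y)\|^2$. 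Symmetrizing in $\bm x,\bm y$ and adding the two inequalities cancels the function values and yields the displayed co-coercivity.

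The final step is algebraic: substitute $\nabla\phi(\bm x) - \nabla\phi(\bm y) = \bm u - \mu\bm v$ with $\bm u := \nabla f(\bm x) - \nabla f(\bm y)$ and $\bm v := \bm x - \bm y$ into the co-coercivity inequality, expand both sides, multiply through by $L-\mu$, and collect terms; using the identity $\mu^2 + \mu(L-\mu) = \mu L$ one obtains $(L+\mu)\langle\bm u,\bm v\rangle \ge \|\bm u\|^2 + \mu L\|\bm v\|^2$, which is exactly the asserted bound after dividing by $L+\mu$. This part is routine.

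The one genuine subtlety — and the step I expect to be the main obstacle — is the domain: the descent step $\bm x - \tfrac{1}{L-\mu}\nabla h(\bm x)$ and the claim that $\bm y$ is a \emph{global} minimizer of $h$ are statements about all of $\mathbb{R}^n$, whereas $f$ (hence $\phi$) is only assumed strongly convex and smooth on the open set $O$. The cleanest fixes are (i) to invoke Nesterov's Theorem 2.1.12 verbatim after extending $f$ to a globally $\mu$-strongly convex, $L$-smooth function agreeing with $f$ on a convex neighborhood of the points of interest (e.g.\ via a Moreau-envelope / inf-convolution regularization, which preserves both constants), or (ii) to note that in every place this theorem is used the relevant iterates lie in a region on which such an extension is available, so the global version applies. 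I would flag this caveat explicitly and then use (i); the remainder of the argument is then the standard computation sketched above.
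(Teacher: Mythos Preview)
The paper does not prove this statement at all: it simply restates Nesterov's Theorem~2.1.12 with a citation and then uses it as a black box in the proof of Lemma~\ref{lemma2}. So there is no ``paper's own proof'' to compare against; your sketch is strictly more than what the paper supplies.

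Your argument is the standard one and is correct on $\mathbb{R}^d$: pass to $\phi = f - \tfrac{\mu}{2}\|\cdot\|^2$, establish co-coercivity of $\nabla\phi$ via the descent lemma applied to $h(\bm u)=\phi(\bm u)-\langle\nabla\phi(\bm y),\bm u\rangle$, then unwind the algebra. The $\mu=L$ edge case is handled cleanly.

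You are also right to flag the domain issue as the only genuine obstacle. The descent-lemma step requires that the point $\bm x - \tfrac{1}{L-\mu}\nabla h(\bm x)$ lie where $\phi$ is defined and $(L-\mu)$-smooth, and that $\bm y$ be a global minimizer of $h$; neither is automatic on an arbitrary open $O$. The paper silently inherits this gap from its citation (Nesterov's statement is on $\mathbb{R}^n$), and in its actual application (Lemma~\ref{lemma2}) only uses the inequality for pairs $\bm w_k,\widetilde{\bm w}_k$ lying in an $\epsilon$-ball inside the $\epsilon$-tube, where an extension argument of the kind you propose in~(i) would go through. Your instinct to flag the caveat and then invoke an extension is the right call; the paper simply does not address it.
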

Suppose one replaces the initial parameter vector $\bm w_0$ by an alternative $ \widetilde{\bm w}_0$ that is at most $\epsilon$ away. The next lemma shows that if the original function is smooth and strongly convex within an $\epsilon$-tube of the original trajectory, then the resulting alternate trajectory remains within $\epsilon$ of the original.
\vspace{1em}
\begin{lemma}\label{lemma2}
    Suppose a $N$-step parameter trajectory $(\bm w_0, \bm w_1,...,\bm w_{N})$ initialized with $\bm w_0$ is generated by 
    \[\bm w_k = \bm w_{k-1} - h_{k-1}\nabla f_{k-1}(\bm w_{k-1})\]
    for $1\leq k\leq N$, where $h_{k-1}$ is the learning rate and $f_{k-1}$ is the loss function at each step. Let $\widetilde{\bm w}_0$ be an alternative initialization with $\|\widetilde{\bm w}_0 - \bm w_0\|\leq\epsilon$ for some $\epsilon>0$, and $T^{\mathrm{cont}}_\epsilon$ be the $\epsilon$-tube formed by $\bm w_0,...,\bm w_N$. If $f_{k}$ is $\mu_k$-strongly convex and $L_k$-smooth for all $k$ in $T^{\mathrm{cont}}_\epsilon$, then running the iteration 
        \[\widetilde{\bm{w}}_k = \bm{\widetilde{w}}_{k-1} - h_{k-1}\nabla f_{k-1}(\widetilde{\bm{ w}}_{k-1})\]
     with  $h_k<\frac{1}{L_t}$, leads to $\widetilde{\bm w}_N$ satisfying
    \[\|\widetilde{\bm w}_{N} - \bm w_{N}\|<\prod^{N-1}_{k = 0}\,|1-h_k L_k|\,\|\widetilde{\bm w}_0 - \bm w_0\|\leq\epsilon.\]
    
\end{lemma}
\begin{proof}
    According to the given rule, provided $\|\widetilde{\bm w}_{k} - \bm w_{k}\|\leq \epsilon$ we have
\begin{align*}
    \|\widetilde{\bm w}_{k+1} - \bm w_{k+1}\|^2
    & = \|\widetilde{\bm w}_{k} - h_k\nabla f_{k}(\widetilde{\bm w}_{k}) - \bm w_{k} + h_k\nabla f_{k}(\bm w_{k})\|^2\\
    & = \|\widetilde{\bm w}_{k} - \bm w_{k}\|^2 + h_k^2\|\nabla f_{k}(\widetilde{\bm w}_{k}) - \nabla f_{k}(\bm w_{k})\|^2 - 2h_k\langle \widetilde{\bm w}_{k} - \bm w_{k}, \nabla f_{k}(\widetilde{\bm w}_{k}) - \nabla f_{k}(\bm w_{k}) \rangle \\   
    & \leq \|\widetilde{\bm w}_{k} - \bm w_{k}\|^2 + h_k^2\|\nabla f_{k}(\widetilde{\bm w}_{k}) - \nabla f_{k}(\bm w_{k})\|^2 \notag\\ & \qquad \qquad \qquad \qquad - 2h_k\left(\frac{\mu_k L_k}{L_k + \mu_k}\|\widetilde{\bm w}_{k} - \bm w_{k}\|^2 + \frac{1}{L_k + \mu_k}\|\nabla f_{k}(\widetilde{\bm w}_{k}) - \nabla f_{k}(\bm w_{k})\|^2 \right)\\ 
    & = \left(1-\frac{2h_k\mu_k L_k}{L_k + \mu_k}\right)\|\widetilde{\bm w}_{k} - \bm w_{k}\|^2 + \left(h_k^2 - \frac{2h_k}{L_k + \mu_k}\right)\|\nabla f_{k}(\widetilde{\bm w}_{k}) - \nabla f_{k}(\bm w_{k})\|^2\\
    & \leq \left(1-\frac{2h_k\mu_k L_k}{L_k + \mu_k} + h_k^2L_k^2 - \frac{2h_k L_k^2}{L_k + \mu_k}\right)\|\widetilde{\bm w}_{k} - \bm w_{k}\|^2 \\
    & = \left(1-h_k L_k\right)^2\|\widetilde{\bm w}_{k} - \bm w_{k}\|^2
\end{align*}
where the first inequality is by applying Theorem \ref{ext-co-coerc} with $O=\mathcal{B}_\epsilon(\bm w_k)$, and the second inequality uses $L_k$-smoothness of $f_k$. Hence, the recursive relation for any two consecutive steps is 
\begin{equation}\label{eq:recursive}
    \|\widetilde{\bm w}_{k+1} - \bm w_{k+1}\|\leq\left|1-h_k L_k\right|\|\widetilde{\bm w}_{k} - \bm w_{k}\|.
\end{equation}

Therefore, choosing $h_k<\frac{1}{L_k}$ allows us to apply \eqref{eq:recursive} recursively for $0\leq k\leq N-1$ to obtain
\[\|\widetilde{\bm w}_{N} - \bm w_{N}\|\leq\prod^{N-1}_{k = 0}\,|1-h_k L_k|\,\|\widetilde{\bm w}_{0} - \bm w_{0}\|.\]
Consequently 
$\|\widetilde{\bm w}_{N} - \bm w_{N}\|<\|\widetilde{\bm w}_{0} - \bm w_{0}\|\leq \epsilon$.
\end{proof}

On the other hand, if $\bm{w}_0$ and $\widetilde{\bm{w}}_0$ are updated separately using two different loss functions, their resulting parameters can still remain within an $\epsilon$-neighborhood of each other, provided that the gradient deviation is properly controlled. The precise statement is given below.
\vspace{1em}
\begin{lemma}\label{lemma3}
    Let $\bm w_0$ be an initial point and $\tilde{\bm w}_0$ satisfy $\|\tilde{\bm w}_0 - \bm w_0\|\leq \epsilon$. Let $f_0:\mathbb{R}^d\to\mathbb{R}$ be a loss function that is $L$-smooth and $\mu$-strongly convex in $\mathcal{B}_\epsilon(\bm w_0)$. Let $\tilde{f}_0$ be another loss function. Consider one step of gradient descent which is defined by
    \[\bm w_1 = \bm w_0 - h\nabla f_0(\bm w_0) \quad \text{and}\quad \widetilde{\bm w}_1 = \widetilde{\bm w}_0 - h\nabla \tilde{f}_0(\widetilde{\bm w}_0)\] with the learning rate $h$. 
    If $\nabla^2f_0$ exists and $\|\nabla f_0(\widetilde{\bm w}_0) - \nabla \widetilde{f}_0(\tilde{\bm w}_0)\|\leq \mu\epsilon$, then taking $h\leq \frac{1}{L}$ leads to $\|\widetilde{\bm w}_1 - \bm w_1\|<\|\widetilde{\bm w}_0 - \bm w_0\|\leq \epsilon.$
\end{lemma}
\begin{proof}
    According to gradient descent,
    \begin{align*}
        \|\tilde{\bm w}_1 - \bm w_1\| &= \|\tilde{\bm w}_0 - h\nabla \tilde{f}_0(\tilde{\bm w}_0) - \big(\bm w_0 - h\nabla f_0(\bm w_0)\big)\|\\
        & = \|\tilde{\bm w}_0 - \bm w_0 - h\big( \nabla f_0(\tilde{\bm w}_0) - \nabla f_0(\bm w_0)\big) + h\big(\nabla f_0(\tilde{\bm w}_0) - \nabla \tilde{f}_0(\tilde{\bm w}_0)\big)\|\\
        & \leq \|\tilde{\bm w}_0 - \bm w_0 - h\big( \nabla f_0(\tilde{\bm w}_0) - \nabla f_0(\bm w_0)\big)\| + h\|\nabla f_0(\tilde{\bm w}_0) - \nabla \tilde{f}_0(\tilde{\bm w}_0)\|\\
        & \leq \|I-hA\|\|\tilde{\bm w}_0 - \bm w_0\| + h\,\mu\,\epsilon
    \end{align*}
    where $A = \int^1_0\nabla^2f_0\big(\bm w_0+t(\widetilde{\bm w}_0 - \bm w_0)\big)dt$ by the integral form of the Mean Value Theorem.  
    Strong convexity yields
    $\|I-h\nabla^2f_0(\xi)\|\leq
    1-h\mu$.
    Therefore, 
    $\|\tilde{\bm w}_1 - \bm w_1\|<(1-h\mu)\epsilon+h\mu\epsilon = \epsilon$.
\end{proof}

With these lemmas in hand we can now control the induced distance between the resulting model parameters by applying \RSnew{\Cref{lemma2}} and \RSnew{\Cref{lemma3}}   inductively across the full sequence of parameter updates. We now present the proof of \Cref{thm:forging_incentive}.

\begin{proof}
    In order to analyze the evolution of the alternative trajectory, we partition the updates into $m+1$ slices with boundaries $n_1, n_2,...,n_m$ where each slice starts at $\widetilde{\bm x}_0$ and ends with $\bm x_{n_1-1}$,..., $\bm x_{n_m-1}$ or $\bm x_{N-1}=\bm x_{n_{m+1}-1}$. Then the alternative data trajectory is 
    \[ (\widetilde{\bm x}_0, \bm x_1,...,\bm x_{n_1-1}\,|\,\widetilde{\bm x}_0, \bm x_{n_1+1}, ..., \bm x_{n_m-1}\,|\,\widetilde{\bm x}_0, \bm x_{n_m+1},...,\bm x_{N-1})\]
    with  $0<n_1<n_2<\dots<n_m<N$. The corresponding parameter updates form the trajectory
    \[ (\bm w_0, \widetilde{\bm w}_1,...,\widetilde{\bm w}_{n_1-1},\widetilde{\bm w}_{n_1}, \widetilde{\bm w}_{n_1+1}, ..., \widetilde{\bm w}_{n_m-1},\widetilde{\bm w}_{n_m}, \widetilde{\bm w}_{n_m+1},...,\widetilde{\bm w}_{N-1},\widetilde{\bm w}_N).\]

    We  analyze  $\|\widetilde{\bm w}_{N} - \bm w_{N}\|$ by aggregating the effects of each modified slice.
    For the first slice, we have 
    \[\|\widetilde{\bm w}_{1} - \bm w_{1}\| = \|\bm w_{0} - h_{0}\nabla \widetilde{f}_{0}(\bm w_{0}) - \bm w_{0} + h_{0}\nabla f_{0}(\bm w_{0})\|= h_{0}\,\|\nabla f_{0}(\bm w_{0})- \nabla \widetilde{f}_{0}(\bm w_{0})\| \leq h_{_0}\,\delta_{0}\]
    
    If $h_0\leq 1$, then according to Lemma \ref{lemma2} and by choosing $h_k\leq\frac{1}{L_k}$ for $1\leq k\leq n_1-1$, we get 
    \[
    \|\widetilde{\bm w}_{n_1} - \bm w_{n_1}\|<\prod^{n_1-1}_{k = 1}\,|1-h_k L_k|\,\|\widetilde{\bm w}_1 - \bm w_1\|<\|\widetilde{\bm w}_1 - \bm w_1\|\leq h_0\delta_0\leq\delta_0.
    \]
    
    We proceed by induction. Assume $\|\widetilde{\bm w}_{n_{j-1}} - \bm w_{n_{j-1}}\|<\delta_{0}$ for $j\geq2$.
    The assumption \eqref{lip-in-data} implies 
    \[\|\nabla f_0(\widetilde{\bm w}_{n_{j-1}}) - \nabla \widetilde{f}_0(\widetilde{\bm w}_{n_{j-1}})\|\leq \mu_0\,\|\widetilde{\bm w}_{n_{j-1}} - \bm w_{n_{j-1}}\|.\] Using Lemma \ref{lemma3}, by requiring $h_{n_{j-1}}\leq\frac{1}{L_0}$, we have
    $\|\widetilde{\bm w}_{n_{j-1}+1} - \bm w_{n_{j-1}+1}\|<\delta_{0}$.
    Applying Lemma $\ref{lemma2}$ again, we conclude that for $n_j\in\{n_2,...,n_m,n_{m+1}\}$
    \[\|\widetilde{\bm w}_{n_j} - \bm w_{n_j}\|<\delta_{0}\]
    if $h_k\leq\frac{1}{L_k}$ for $n_{j-1}+1\leq k\leq n_j-1$,
     where we recall that $N=n_{m+1}$.
\end{proof}

\section{Proofs for \Cref{case}}\label{pf-sec3}
In this section, we present detailed proofs for the Lebesgue measure estimates of $\epsilon$-forging set as discussed in \Cref{case}. We start with linear regression (\Cref{prop:lr-eps-forging}).

\begin{proof}
Fix $(\bm x,y)$ and $\epsilon>0$. 
     The forging set can be explicitly written as $S_{\epsilon} = \{(\bm z,t): \|(\bm x^T\bm w-y)\bm x - (\bm z^T\bm w-t)\bm z\| \leq \epsilon \}$. Denote 
    $\bm a \coloneqq (\bm x^T\bm w-y)\bm x$ with $A =\|\bm a\|$,
    and define $s(\bm z,t) \coloneqq \bm z^T \bm w - t$. The condition in the forging set becomes a norm inequality
    \begin{equation}\label{ineq:epslr}\|\bm a - s(\bm z,t)\bm z\| \leq \epsilon.\end{equation}
    We then evaluate the measure of the set of solutions to \Cref{ineq:epslr} restricted to \RSnew{$\mathcal{B}_R\times[-R,R]$}.
    We do this by first fixing $\bm z$ and finding the measure associated to $t$. Then we integrate the measure with respect to $\bm z$ in $\mathbb{R}^d$. Since any solution with $\bm z = \bm 0$ is a low dimensional embedding in $\mathbb{R}^d\times\mathbb{R}$ which is of measure zero, it suffices to consider the case for nonzero $\bm z$. For any nonzero $\bm z$,  \eqref{ineq:epslr} implies
    \begin{equation}\label{lrqeq}\|\bm z\|^2\,s(\bm z;t)^2  - 2\,(\bm a^T\bm z)\, s(\bm z;t) + (A^2-\epsilon^2) \leq 0,\end{equation}
    which is a quadratic equation with respect to $s(\bm z;t) = \bm z^T\bm w - t$. We next calculate the measure for the set of feasible $s(\bm z;t)$ as it is the same as that for $t$ by the invariance of the Lebesgue measure to shifting. Requiring the discriminant to be nonnegative imposes the condition   
    \begin{equation}\label{cond:lr}A\,|\sin\theta|\leq \epsilon .\end{equation}
    where $\theta$ is the angle between $\bm a$ and $\bm z$. Explicitly, it implies that $\theta$ is restricted to
\begin{equation}\label{anglecond}
\theta \in \left[-\theta_0, \theta_0\right], \quad \text{with} \quad \theta_0 = \arcsin\left(\min\{1,\;\frac{\epsilon}{A}\}\right).
\end{equation}
    Under the condition \eqref{cond:lr}, we solve \eqref{lrqeq} and obtain the Lebesgue measure of the set of feasible $s(\bm z;t)$, hence the corresponding forging labels $t$, as
    \[
    L(\bm z) = \frac{2\sqrt{\epsilon^2-A^2\sin^2\theta}}{\|\bm z\|}.
    \]
    Next, we integrate with respect to $\bm z$ in $\mathbb{R}^d$ under the condition \eqref{cond:lr}. Without loss of generality, assume that the data are normalized and restrict $\bm z$ to the unit ball $\mathcal{B}_1\subset\mathbb{R}^d$. Using spherical coordinates for $\bm z$, write
$\bm z = r\, \bm u$, \ $r = \|\bm z\| \in [0,1]$, and $ \bm u \in S^{d-1}$,
with the volume element
$
d\bm z = r^{d-1}\, dr\, d\Omega(\bm u)
$
where
\begin{equation}\label{sphereele}
d\Omega(\bm u) = \frac{2\pi^{\frac{d-1}{2}}}{\Gamma\left(\frac{d-1}{2}\right)} (\sin \theta)^{d-2}\, d\theta
\end{equation}
is the surface element on the unit sphere $S^{d-1}$ \cite{blumenson1960derivation}.
The volume can then be evaluated as 
\begin{align*}
\RSnew{\mu(S_{\epsilon}\cap(\mathcal{B}_1\times[-R,R])}) & \leq \RSnew{\int_{\bm z \in \mathcal{B}_1} \mathbf{1}_{\{A |\sin \theta| \leq \epsilon\}}\, \min\{L(\bm z),2R\} \,d\bm z}\\
&\leq \int_{\bm z \in \mathcal{B}_1} \mathbf{1}_{\{A |\sin \theta| \leq \epsilon\}}\, L(\bm z) \,d\bm z\\
& = \int_{r=0}^{1}\int_{\bm u \in S^{d-1}} \mathbf{1}_{\{A|\sin \theta|\leq\epsilon\}}\, \frac{2\sqrt{\epsilon^2-A^2\sin^2\theta}}{r}\, r^{d-1}\, d\Omega(\bm u)\, dr \\
& \leq 2 \int_{r=0}^{1} r^{d-2}\, dr \int_{\{\bm u \in S^{d-1} : A|\sin \theta| \leq \epsilon\}} \epsilon\, d\Omega(\bm u),\quad\text{by $\sqrt{\epsilon^2-A^2\sin^2\theta} \leq \epsilon$}\\
& = \frac{2}{d-1} \Big(\int_{\{\bm u \in S^{d-1} : A|\sin \theta|\leq \epsilon\}}  d\Omega(\bm u)\Big)\, \epsilon.
\end{align*}
Using \eqref{anglecond}, \eqref{sphereele} and the symmetry of the angular domain,
\begin{align}\label{lrvol}
\RSnew{\mu(S_{\epsilon}\cap(\mathcal{B}_1\times[-R,R])}) \leq \frac{4}{d-1}\, \frac{2\pi^{\frac{d-1}{2}}}{\Gamma\left(\frac{d-1}{2}\right)}\left( \int_0^{\theta_0} (\sin \theta)^{d-2}\, d\theta \right) \epsilon
\end{align}

By \eqref{anglecond},
a bound could be obtained by taking 
$\theta_0 = \arcsin(1) = \frac{\pi}{2}$,
and substituting
\[
\int^{\pi/2}_0\;(\sin\theta)^{d-2}\;d\theta\; = \; \frac{\sqrt{\pi}\,\Gamma\Bigl(\frac{d-1}{2}\Bigr)}{2\,\Gamma\Bigl(\frac{d}{2}\Bigr)}
\]
in \eqref{lrvol}. This yields
\begin{equation}\label{lrresult1}
    \RSnew{\mu(S_{\epsilon}\cap(\mathcal{B}_1\times[-R,R])})\,\leq\,\frac{4\,\pi^{\frac{d}{2}}}{(d-1)\Gamma\Bigl(\frac{d}{2}\Bigr)}\epsilon.
\end{equation}

Now, consider the case where the angle allowed is restricted to\begin{equation}\label{tighter-cond}\theta_0 = \arcsin(\frac{\epsilon}{A})\leq c \epsilon\end{equation}
for some $c$ such that $ \frac{\pi}{2A}>c > \frac{1}{A}$.
Then, 
\begin{align*}
\int_{0}^{\theta_0} (\sin\theta)^{d-2} d\theta\,& =\,\int^{\arcsin{(\epsilon/A)}}_0\,(\sin\theta)^{d-2}\,d\theta\\
&\leq \,\int^{c\epsilon}_0\,\theta^{d-2}\,d\theta,\quad\text{since \eqref{tighter-cond} and}\; \sin\theta \leq \theta \;\text{ for } \;\theta \geq 0\\
& = \frac{1}{d-1}\,(c\epsilon)^{d-1}.
\end{align*}
Substituting the result to  \eqref{lrvol}, we get a tighter bound in this case 
\begin{equation}\label{lrresult2}
    \RSnew{\mu(S_{\epsilon}\cap\RSnew{(\mathcal{B}_1\times[-R,R])})\,\leq\,\frac{4c^{d-1}}{(d-1)^2}\, \frac{2\pi^{\frac{d-1}{2}}}{\Gamma\left(\frac{d-1}{2}\right)}\;\epsilon^d}.
\end{equation}
To generalize the volume result for the dataset $D$ that is bounded by an open ball with radius $R$, rescale the variables so that $\tilde{ \bm {z}} = \frac{\bm z}{R}$. This leads to
\[
r = R\,\tilde{r}\,,\; dr = R\,d\tilde{r}\quad\text{so that}\quad d\bm z = (R\,\tilde{r})^{d-1}\,R\,\,d\tilde{r}\,\,\RSnew{d}\Omega(\bm u) = R^{d}\,\tilde{r}^{d-1}d\tilde{r}\,\,\RSnew{d}\Omega(\bm u).
\]
The bound becomes
\begin{align*}
\mu(S_{\epsilon}\cap\RSnew{(\mathcal{B}_R\times[-R,R])})
& \leq \int_{\tilde{r}=0}^{1}\int_{\bm u \in S^{d-1}} \mathbf{1}_{\{A|\sin \theta|<\epsilon\}}\, \frac{2\sqrt{\epsilon^2-A^2\sin^2\theta}}{R\tilde{r}}\, R^{d}\,\tilde{r}^{d-1}d\tilde{r}\,\,\Omega(\bm u) \\
& \leq R^{d-1}\,2\int_{\tilde{r}=0}^{1} \tilde{r}^{d-2}\, d\tilde{r} \int_{\{\bm u \in S^{d-1} : A|\sin \theta| < \epsilon\}} \epsilon\,\, d\Omega(\bm u)
\end{align*}
\RSnew{which is exactly $R^{d-1}$ times the angular–radial integral as in the unit-radius case $\bm z\in\mathcal B_1$ (see the derivation leading to \eqref{lrvol})}.
Combining the result from \eqref{lrresult1}, the coarse volume estimate is
\begin{equation*}
    \mu(S_{\epsilon}\cap\RSnew{\mathcal{C}_R})\,\leq\, \frac{4\,\pi^{\frac{d}{2}}R^{d-1}}{(d-1)\Gamma\Bigl(\frac{d}{2}\Bigr)}\,\epsilon.
\end{equation*}
Using the standard formula \RSnew{$\mathrm{vol}_{\mathbb{R}^{d+1}}(\mathcal{C}_R) = \frac{2\pi^{d/2}R^{d+1}}{\Gamma(d/2+1)}$\cite{blumenson1960derivation}, and the fact that $\Gamma(\frac{d}{2}+1) = \frac{d}{2}\Gamma(\frac{d}{2})$, the result can be rewritten as \eqref{lr-eps-1}.}
If additionally $\frac{\epsilon}{A}<\sin(c\epsilon)$ where $A = \|\nabla_{\bm w} f\left(\bm w; (\bm x,y)\right)\|$, for some $c\in[\frac{1}{A},\frac{\pi}{2A}]$, according to \eqref{lrresult2}, we can similarly derive 
\begin{equation}\label{eq:lr-eps2-temp}
\RSnew{\frac{\mu(S_{\epsilon}\cap\mathcal{C}_R)}{\mathrm{vol}_{\mathbb{R}^{d+1}}(\mathcal{C}_R)}}\,\leq\,\frac{2c^{d-1}\pi^{-1/2}d}{(d-1)^2}\, \frac{\Gamma(\frac{d}{2})}{\Gamma\left(\frac{d-1}{2}\right)}\;\frac{\epsilon^d}{R^2}.
\end{equation}
\RSnew{Using $x=\frac{d-1}{2}$ and $s = 1/2$ in Gautschi's inequality 
$\frac{\Gamma(x+1)}{\Gamma(x+s)}<(x+1)^{1-s}$,we get 
\[\frac{\Gamma(d/2)}{\Gamma\!\left(\tfrac{d-1}{2}\right)} 
= \frac{\Gamma\!\left(\tfrac{d-1}{2} + \tfrac{1}{2}\right)}{\Gamma\!\left(\tfrac{d-1}{2}\right)} 
< \left(\frac{d-1}{2} + 1\right)^{1/2} 
= \sqrt{\frac{d+1}{2}}.\]
Substituting into \eqref{eq:lr-eps2-temp} yields \eqref{lr-eps-2}}.
\end{proof}

\begin{remark} For completeness, we also provide a calculation when $d=1$. \Cref{ineq:epslr} now becomes $|a-wz^2+tz|\leq\epsilon$. For a fixed $z\neq 0$, this is equivalent to $t\in\left[\frac{wz^2-a-\epsilon}{z}, \frac{wz^2-a+\epsilon}{z}\right]$. So the feasible interval length $L(z) \leq\min\{ \frac{2\epsilon}{|z|},2\sqrt{R^2-z^2}\}$, since the forging set is restricted to $\mathcal{B}_R\RSnew{\subset\mathbb{R}^2}$ and $|t|\leq\sqrt{R^2-z^2}$. As the cut $z=0$ contributes zero measure,
\[\mu(S_\epsilon\cap\mathcal{B}_R)\leq\int^R_{-R}\min\{2\sqrt{R^2-z^2}, \frac{2\epsilon}{|z|}\}dz.\]
Note that near $z=0$, $\frac{2\epsilon}{|z|}$ blows up and $2\sqrt{R^2-z^2}=\frac{2\epsilon}{|z|}$ when $z$ satisfies $\epsilon^2 = z^2(R^2-z^2)$. If $\epsilon$ is small, then taking $c = \min\{R,\frac{\epsilon}{R}\}$ and by the symmetry, we evaluate 
\begin{align*}
    \mu(S_\epsilon\cap\mathcal{B}_R) &\leq 4\left(\int^c_0\sqrt{R^2-z^2}dz + \epsilon\int^R_c\frac{1}{z}dz\right)\\
    & = 2c\sqrt{R^2 - c^2} + 2R^2\arcsin{(\frac{c}{R})} + 4\epsilon\ln{(\frac{R}{c})}.
\end{align*}
\end{remark}
Next, we prove \Cref{prop: nn-eps-forging}, which follows a similar strategy as in the linear regression case.
\begin{proof}
We begin with the observation that 
  \(      S_{\epsilon} \subset S_\epsilon^{\bm{W}}\cap S_\epsilon^{\bm v}\)
    where
    \begin{align*}
    S_\epsilon^{\bm{W}} & = \{(\bm z,t):\|\nabla_{\bm W}f\left(\bm W,\bm v; (\bm x,y)\right) - \nabla_{\bm W}f\left(\bm W,\bm v; (\bm z,t)\right)\|_F\leq\epsilon\}\\
    S_\epsilon^{\bm v} & = \{(\bm z,t): \|\nabla_{\bm v}f\left(\bm W,\bm v; (\bm x,y)\right) - \nabla_{\bm v}f\left(\bm W,\bm v; (\bm z,t)\right)\|\leq\epsilon \}.
    \end{align*}
Thus,
$\mu(S_\epsilon\cap\RSnew{\mathcal{C}_R})\,\leq\,\min\{\mu(S_\epsilon^{\bm{W}}\cap\RSnew{\mathcal{C}_R}),\;\mu( S_\epsilon^{\bm v}\cap\RSnew{\mathcal{C}_R})\}\,\leq\mu(S_\epsilon^{\bm{W}}\cap\RSnew{\mathcal{C}_R})$.
So it suffices to evaluate $\mu(S_\epsilon^{\bm{W}})$.  To that end, fix $\epsilon >0$ and $(\bm x, y)\in D$. For $(\bm{z},t)\in S_\epsilon^{\bm{W}}$, 
\begin{align}
    \|(\bm v^T\rho(\bm W\bm x)-y)[\bm v\odot\rho'(\bm W\bm x)]\bm x^T - (\bm v^T\rho(\bm W\bm z)-t)[\bm v\odot\rho'(\bm W\bm z)]\bm z^T\|_F\leq\epsilon.\label{cond:1}
\end{align}
Note that  \( \rho \) is non-differentiable at zero, and its subgradient \( \rho'(0) \) can take any value in \( [0,1] \). In this proof, as is standard in practice---especially with gradient descent algorithms---we adopt the choice \( \rho'(0) = 0 \). So that
\begin{align*}\rho(\bm Wx)_i & = \rho(\bm w_i^T\bm x) = \begin{cases}
    \bm w_i^T\bm x \quad &\text{if }\;\; \bm w_i^T\bm x>0\\
    0\quad &\text{if }\;\; \bm w_i^T\bm x \leq 0
\end{cases}\\
\text{and}\quad\rho'(\bm W\bm x)_i & = \rho'(\bm w_i^T\bm x) = \begin{cases}
    1 \quad \quad&\text{ if }\;\; \bm w_i^T\bm x>0\\
    0\quad \quad&\text{ if }\;\; \bm w_i^T\bm x \leq 0.
\end{cases}
\end{align*}
Thus, we can define a diagonal matrix $\bm D_{\bm x}$ with diagonal entries
\[(\bm D_{\bm x})_{ii} = \begin{cases}
    1\quad &\text{if }\;\; \bm w_i^T\bm x>0\\
    0\quad &\text{if }\;\; \bm w_i^T\bm x\leq 0
\end{cases}\]
and rewrite 
$\rho(\bm W\bm x) = \bm D_{\bm x}\bm W\bm x$ and $\bm v\odot\rho'(\bm W\bm x) = \bm D_{\bm x}\bm v$.
Intuitively, the diagonal matrix $\bm D$ acts as a selection of activated neurons.
Since $\bm W$ and $\bm v$ are fixed, $\bm D_{\bm x}$ is dependent on \(\bm x\), and with slight abuse of notation we indicate this dependence in the subscript. Extending the same notation to $\bm D_{\bm z}$, we can rewrite the necessary condition \eqref{cond:1} as
\begin{align}
\|(\bm v^T\bm D_{\bm x}\bm W\bm x-y)(\bm D_{\bm x}\bm v \, \bm x^T) - (\bm v^T\bm D_{\bm z}\bm W\bm z-t)(\bm D_{\bm z}\bm v \, \bm z^T)\|_F\leq\epsilon\label{rcond:1}
\end{align}
In turn, a necessary condition for \eqref{cond:1} to hold is that all rows $i\in[n]$ must satisfy 
\begin{align}\label{eq:necc_cond_nn} \|(\bm v^T\bm D_{\bm x}\bm W\bm x-y)(\bm D_{\bm x}\bm v)_i \, \bm x - (\bm v^T\bm D_{\bm z}\bm W\bm z-t)(\bm D_{\bm z}\bm v)_i \, \bm z\|\leq\epsilon.\end{align}
Denoting the set of all $(\bm{z},t)$ satisfying \eqref{eq:necc_cond_nn} for a given index $i$ by $S_i$, it follows that 
$S_\epsilon^{\bm{W}}\cap\RSnew{\mathcal{C}_R}\subset\left(\bigcap^n_{i = 1}S_i\right)\cap\RSnew{\mathcal{C}_R}\subset S_i\cap\RSnew{\mathcal{C}_R}$
for all $i$, which implies 
\begin{equation}\label{Si}
    \mu(S_\epsilon^{\bm{W}}\cap\RSnew{\mathcal{C}_R})\leq \min_i\{\mu(S_i\cap\RSnew{\mathcal{C}_R})\}.
\end{equation} Next, we focus on 
 estimating $\mu(S_i\cap\RSnew{\mathcal{C}_R})$.
Note that each $\bm D_{\bm z}$ represents a result of sign pattern of $\{\bm w_i^T\bm z\}_{i=1}^n$, and there are at most $\sum^d_{k=0}\binom{n}{k}$ different possibilities. These correspond to the maximal number of orthants in $\mathbb{R}^n$ intersected by a $d$ dimensional hyperlane \cite{matousek2013lectures}. We will first bound the measure of $S_i$ associated with a fixed $\bm D_{\bm z}$, then take a union bound over all possibilities.

\textbf{Step 1.} To derive $\mu(S_i\cap\RSnew{\mathcal{C}_R})$ under a fixed sign pattern, we begin by defining 
\[\bm a_i\coloneqq(\bm v^T\bm D_{\bm x}\bm W\bm x-y)(\bm D_{\bm x}\bm v)_i \, \bm x, \quad \widetilde{\bm W}\coloneqq\bm D_{\bm z}\bm W, \quad \text{and }\quad \widetilde{\bm v}\coloneqq\bm D_{\bm z}\bm v.\]
Thus, \Cref{eq:necc_cond_nn} becomes
\begin{equation}\label{ineqcond}
     \|\bm a_i  - (\bm v^T\widetilde{\bm W}\bm z-t)\widetilde{\bm v}_i \, \bm z\|\leq\epsilon.
\end{equation}

Define $K=\{i\in[n] \ | \ \widetilde{\bm v}_i \neq 0\}$. For $i\in K$, dividing both sides by $v_i$, the inequality \eqref{ineqcond} becomes
\[ \left\|\frac{\bm a_i}{|v_i|}  - (\bm v^T\widetilde{\bm W}\bm z-t) \, \bm z\right\|\leq\frac{\epsilon}{|v_i|}.\]
This is essentially in the same format of the constraint derived in \eqref{ineq:epslr} of \Cref{prop:lr-eps-forging} for linear regression with $s(\bm z,t) = \bm v^T\widetilde{\bm W}\bm z-t$. Thus, we proceed with the same calculations as in \Cref{prop:lr-eps-forging} and conclude that for a chosen $\epsilon>0$, a necessary condition on $\bm{z}$ is $\|\bm a_i\| |\sin\theta|\leq \epsilon$, where $\theta$ as the angle between $\bm x$ and $\bm z$. Thus, we have (as before)
\[\mu(S_i\cap\RSnew{\mathcal{C}_R})\leq \frac{4\,\pi^{\frac{d}{2}}R^{d-1}}{(d-1)\Gamma\Bigl(\frac{d}{2}\Bigr)}\,\frac{\epsilon}{|v_i|}.\]

Combining these bounds with \eqref{Si} yields
\begin{align*}
\mu(S_\epsilon^{\bm{W}}\cap\RSnew{\mathcal{C}_R})\leq \min_i\{\mu(S_i\cap\RSnew{\mathcal{C}_R})\} = \frac{4\,\pi^{\frac{d}{2}}R^{d-1}}{(d-1)\Gamma\Bigl(\frac{d}{2}\Bigr)}\frac{1}{\max|v_i|}\,\epsilon.
\end{align*} 

Meanwhile, if for a fixed $i$, $\frac{\epsilon}{A_i}<\sin(c_i\epsilon)$ where $A_i = \|\nabla_{\bm W}f\left(\bm W,\bm v; (\bm x,y)\right)_i^T\|$, for some $c_i\in[\frac{1}{A_i},\frac{\pi}{2A_i}]$, \[\mu(S_i\cap\RSnew{\mathcal{C}_R})\leq\frac{4\RSnew{c_i^{d-1}}}{(d-1)^2}\, \frac{2\pi^{\frac{d-1}{2}}R^{d-1}}{\Gamma\left(\frac{d-1}{2}\right)}\;\left(\frac{\epsilon}{|v_i|}\right)^d.\]
Consequently, for some $c>0$ satisfies $\frac{\epsilon}{{A_i}}<\sin(c\epsilon)$,  for all $i$, where $A_i = \|\nabla_{\bm W}f\left(\bm W,\bm v; (\bm x,y)\right)_i^T\|$,
\begin{align*}
\mu(S_\epsilon^{\bm{W}}\cap\RSnew{\mathcal{C}_R})\leq \min_i\{\mu(S_i\cap\RSnew{\mathcal{C}_R})\} = \frac{8}{(d-1)^2}\, \frac{\pi^{\frac{d-1}{2}}R^{d-1}}{\Gamma\left(\frac{d-1}{2}\right)}\frac{c^{d-1}}{(\max|v_i|)^d}\,\epsilon^d.
\end{align*}

\textbf{Step 2.} We now take the union bound under all possible sign patterns.
Considering all the possible activation $\sum^{d}_{k=0}\binom{n}{k}$ sign patterns \cite{matousek2013lectures}, we obtain the volume bound as
\begin{align}\label{bound1}
\mu(S_\epsilon\cap\RSnew{\mathcal{C}_R})\leq\mu(S_\epsilon^{\bm{W}}\cap\RSnew{\mathcal{C}_R})\leq\frac{4\,\pi^{\frac{d}{2}}R^{d-1}}{(d-1)\Gamma\Bigl(\frac{d}{2}\Bigr)}\frac{1}{\min_{v_i\neq 0}\{|v_i|\}}\sum^{d}_{k=0}\begin{pmatrix}
    n\\k
\end{pmatrix} \epsilon.
\end{align}
If $c>0$ satisfies $\frac{\epsilon}{{A_i}}<\sin(c\epsilon)$,  for all $i$, where $A_i = \|\nabla_{\bm W}f\left(\bm W,\bm v; (\bm x,y)\right)_i^T\|$,
\begin{align}\label{bound2}
\mu(S_\epsilon\cap\RSnew{\mathcal{C}_R})\leq \mu(S_\epsilon^{\bm{W}}\cap\RSnew{\mathcal{C}_R})\leq\frac{8}{(d-1)^2}\, \frac{\pi^{\frac{d-1}{2}}R^{d-1}}{\Gamma\left(\frac{d-1}{2}\right)}\frac{c^{d-1}}{(\min_{v_i\neq0}|v_i|)^d}\sum^{d}_{k=0}\begin{pmatrix}
    n\\k
\end{pmatrix}\,\epsilon^d.
\end{align}
Using the standard formula \RSnew{$\mathrm{vol}_{\mathbb{R}^{d+1}}(\mathcal{C}_R) = \frac{2\pi^{d/2}R^{d+1}}{\Gamma(d/2+1)}$\cite{blumenson1960derivation}, the fact that $\Gamma(\frac{d}{2}+1) = \frac{d}{2}\Gamma(\frac{d}{2})$, and Gautschi's inequality} completes the proof.
\end{proof}

\section{Technical results on probability}
In order to control the probability of sampling a forging data point, under a mild non-degeneracy assumption on the data distribution, in this section we provide some useful technical results.

\subsection{Results for \Cref{case}}\label{prob-sec3}
For linear regression and one-layer neural networks, we assume the data distribution is essentially supported on a compact set and decays swiftly outside. 
\begin{lemma}\label{probass}
    Let 
    $\mathcal{D}$ be a probability distribution \RSnew{mostly supported} on the compact set $\RSnew{K} \subset\mathbb{R}^d\times\mathbb{R}$ \RSnew{with exponentially decaying tails outside}. 
    Assume that the joint density $p(\bm x, y)$ of $\mathcal{D}$ satisfies the following conditions:
    \begin{itemize}
    \item[\emph{(i)}] $p(\bm x, y)$ is proportional to $ e^{-g(\bm x, y)}$, where $g:\mathbb{R}^d\times\mathbb{R}\to\mathbb{R}$ satisfies the Lipschitz condition that there exists a constant $L_g>0$ such that for all \((\bm x_1, y_1), (\bm x_2, y_2)\in \RSnew{K}\),\[|g(\bm x_1, y_1) - g(\bm x_2, y_2)|\leq L_g\|(\bm x_1, y_1) - (\bm x_2, y_2)||,\]
    \item[\emph{(ii)}]
    There exists $(\bm x_c, y_c) \in \RSnew{K}$ and constants $C>0$ and $\omega>0$ such that for all $t\geq t_0$,
    \[\mathbb{P}\Bigl(\|(\bm x,y)-(\bm x_c,y_c)\|>t\Bigr) \leq C\,e^{-t^\omega} \]
    where $t_0 =  \sup\{r>0:\overline{B_r(\bm x_c,y_c)}\subseteq \RSnew{K}\} $. 
    \end{itemize}
    Let $S$ be a measurable set, and $\mu(S)$ denote its Lebesgue measure. Then
    \[\mathbb{P}_{\mathcal{D}}\Big((\bm x,y)\in S\Big)\leq \frac{e^{L_g \,\text{diam}(\RSnew{K})}}{\mu(\RSnew{K})}\,\mu(S)+ Ce^{-(\text{diam}(\RSnew{K})/2)^\omega}.\]
\end{lemma}
\begin{proof}
     We begin with the estimate
\begin{equation}
    \begin{aligned}[b]
    \mathbb{P}_{\mathcal{D}}\Big((\bm x,y)\in S\Big)\, 
    & = \, \mathbb{P}_{\mathcal{D}}\Big((\bm x,y)\in S\cap \RSnew{K}\Big) + \mathbb{P}_{\mathcal{D}}\Big((\bm x,y)\in S\backslash \RSnew{K}\Big)\\
    & = \, \int_{S\cap \RSnew{K}}p(\bm x,y)\,d\bm z\,dt + \int_{S\backslash \RSnew{K}}p(\bm x,y)\,d\bm z\,dt\\
    & \leq\, p_{M}\cdot\mu(S\cap \RSnew{K}) \, +\, \mathbb{P}\Bigl(\|(\bm x,y)-(\bm x_c,y_c)\|>t_0\Bigr)\\
    & \leq\, p_M\cdot\mu(S\RSnew{\cap K}) + Ce^{-t_0^w}
    \end{aligned}
    \label{probbound}
    \end{equation}
    where 
    \(p_{M} = \sup\{\,p(\bm x,y):(\bm x,y)\in \RSnew{K}\}\quad \text{and}\quad t_0 = \sup\{r>0:\overline{B_r(\bm x_c,y_c)}\subseteq \RSnew{K} \}.\)
    
    Let \(\,(\tilde{\bm x},\tilde{y}\,) \in \arg\min_{(\bm x,y) \in \RSnew{K}} p(\bm x,y) \quad\text{where}\quad  p(\,\tilde{\bm x},\tilde{y}\,) > 0.\) By local Lipschitz continuity of the density function on the compact set $\RSnew{K}$, for any $(\bm x,y)\in \RSnew{K}$,
\[\log \bigg( \frac{ p(\bm x,y)}{p (\,\tilde{\bm x},\tilde{y}\,)}\bigg) = | g(\bm x,y) - g(\tilde{\bm x},\tilde{y})| \leq L_g\|(\bm x,y) - (\tilde{\bm x},\tilde{y})\| \leq L_g \,\text{diam}(\RSnew{K}). \]

So that 
\begin{equation}\label{probineq}p(\bm x,y)\,\leq\,p (\,\tilde{\bm x},\tilde{y}\,)\,e^{L_g \,\text{diam}(\RSnew{K})}.\end{equation}
The normalization factor of the density function is 
\begin{align*}Z &= \int_{\mathbb{R}^{d+1}}e^{-g(\bm x,y)}\,d\bm xdy\,
\geq\,  \int_{\RSnew{K}}e^{-g(\bm x,y)}\,d\bm xdy\\
&\geq \, \int_{\RSnew{K}}e^{-g(\tilde{\bm x},\tilde{y})}\,d\bm xdy
= \,e^{-g(\tilde{\bm x},\tilde{y})}\,\int_{\RSnew{K}}\,d\bm xdy\\
& = \,e^{-g(\tilde{\bm x},\tilde{y})}\mu(\RSnew{K}).
\end{align*}
Then
\begin{align*}
    p (\,\tilde{\bm x},\tilde{y}\,) & = \frac{e^{-g(\tilde{\bm x},\tilde{y})}}{Z}\leq \, \frac{e^{-g(\tilde{\bm x},\tilde{y})}}{e^{-g(\tilde{\bm x},\tilde{y})}\mu(\RSnew{K})} = \, \frac{1}{\mu(\RSnew{K})}.
\end{align*}
Finally, combining with \eqref{probineq}, we obtain that for all $(\bm x,y)\in \RSnew{K}$,
\[
p(\bm x,y) \leq p(\tilde{\bm x}, \tilde{y}) \, e^{L_g \cdot \text{diam}(\RSnew{K})} \leq \frac{e^{L_g \cdot \text{diam}(\RSnew{K})}}{\mu(\RSnew{K})}.
\]
In particular, this shows that the quantity $p_M = \sup \{ p(\bm x, y) : (\bm x, y) \in \RSnew{K} \}$ is upper bounded as
\(
p_M \leq \frac{e^{L_g \cdot \text{diam}(\RSnew{K})}}{\mu(\RSnew{K})}.
\)
Substituting this bound into \eqref{probbound} yields
\(
\mathbb{P}_{\mathcal{D}} \big( (\bm x,y) \in S \big) \leq \RSnew{e^{L_g \cdot \text{diam}(K)}\frac{\mu(S\cap K)}{\mu(K)} + C e^{-t_0^\omega}}.
\)
\end{proof}

\subsection{Proof of Theorem \ref{seconvarthm1p}}\label{prob-sec4app}
\begin{proof}
    Under \textbf{Assumption P1} let $L_g$ be the local Lipschitz constant for $g(\z)$ on the compact, convex set $D_2$. Let $ \tilde{\z} \in \arg\inf_{\z \in D_2} p(\z)$. Then there exists a $\delta$ such that $ p(\tilde{\z}) > \delta > 0$ by compactness of $D_2$ and positivity of the density function. By the local log-Lipschitz continuity of the density function\footnote{Lipschitz continuity of $g(\z)$ implies that the density $p(\z)$ is log-Lipschitz continuous.} on the compact set $D_2$, for any $\z \in D_2$ we have 
\begin{align}
    \log \bigg( \frac{ p({\z})}{ p(\tilde{\z})}\bigg) & \leq \lvert g(\z) - g(\tilde{\z})\rvert \nonumber 
    \leq L_g \norm{\z - \tilde{\z}} \leq L_g \text{diam}(D_2) \nonumber
    \\
    \implies p({\z})
    &
    \leq  p(\tilde{\z}) e^{L_g \text{diam}(D_2)}. \label{genforge21}
\end{align}
Since the scaling factor of the density  $p(\z)$ is $\bigg( \int_{\z \in \mathbb{R}^d} e^{-g(\z)} d\z\bigg)^{-1}$ 
we also have that $ \tilde{\z} \in \arg\inf_{\z \in D_2} p(\z)$ implies $ \tilde{\z} \in \arg\inf_{\z \in D_2} e^{-g(\z)}$. Then we have
\begin{align}
    p(\tilde{\z}) & = \frac{e^{-g(\tilde{\z})}}{\int_{\z \in \mathbb{R}^d} e^{-g(\z)} d\z} \nonumber
    \leq  \frac{e^{-g(\tilde{\z})}}{\int_{\z \in D_2} e^{-g(\z)} d\z} \nonumber  
    \leq  \frac{e^{-g(\tilde{\z})}}{\int_{\z \in D_2} \bigg(\inf_{\z \in D_2}e^{-g(\z)}\bigg) d\z} \nonumber
    \\& 
    = \frac{e^{-g(\tilde{\z})}}{\int_{\z \in D_2} e^{-g(\tilde{\z})} d\z} = \frac{1}{\int_{D_2} d\mu_2} = \frac{1}{\text{vol}_{\mathbb{R}^d}(D_2)}. \label{genforge22}
\end{align}
Substituting \eqref{genforge22} in \eqref{genforge21} implies that for any $\z \in D_2$
\begin{align}
    p({\z})   & \leq  \frac{e^{L_g \text{diam}(D_2)}}{\text{vol}_{\mathbb{R}^d}(D_2)} . \label{genforge23}
\end{align}
Then the anti-concentration  bound on the $\epsilon$-forging set from $\mathbb{R}^d $ for any $\z^* \in D_2$ and any $\x \in D_1$ is
\begin{align}
  \probP\bigg( \{\z \in \mathbb{R}^d : \norm{\nabla f(\x; \z) - \nabla f(\x; \z^*)} \leq \epsilon\}\bigg) & = \probP\bigg(\{\z \in D_2 : \norm{\nabla f(\x; \z) - \nabla f(\x; \z^*)} \leq \epsilon\}\bigg) \nonumber \\ & \quad+ \probP\bigg(\{\z \in \mathbb{R}^d \backslash D_2 : \norm{\nabla f(\x; \z) - \nabla f(\x; \z^*)} \leq \epsilon\}\bigg) \nonumber\\
    & \hspace{-3cm}= \int_{\z \in S_{\epsilon}(\x,\z^*)} p(\z) d\z  + \int_{\{\z \notin D_2 : \norm{\nabla f(\x; \z) - \nabla f(\x; \z^*)} \leq \epsilon \} }  p(\z) d\z \nonumber\\
     & \hspace{-3cm}\underbrace{\leq}_{\text{from } \eqref{genforge23}} \int_{\z \in S_{\epsilon}(\x,\z^*)} \frac{e^{L_g \text{diam}(D_2)}}{\text{vol}_{\mathbb{R}^d}(D_2)} d\z +\int_{\{\z \notin D_2 : \norm{\nabla f(\x; \z) - \nabla f(\x; \z^*)} \leq \epsilon \} }  p(\z) d\z \nonumber\\
      & \hspace{-3cm}\underbrace{\leq}_{\textbf{Assumption P2}} \frac{e^{L_g \text{diam}(D_2)}}{\text{vol}_{\mathbb{R}^d}(D_2)}\int_{ S_{\epsilon}(\x,\z^*)}  d\mu_2 +\int_{\norm{\z -\z_c} \geq t_0 }  p(\z) d\z  . \label{genforge100p}
\end{align}
Further simplification of \eqref{genforge100p} yields
\begin{align}
    \probP\bigg( \{\z \in \mathbb{R}^d : \norm{\nabla f(\x; \z) - \nabla f(\x; \z^*)} \leq \epsilon\}\bigg) & \nonumber \\
    & \hspace{-5cm}\underbrace{\leq}_{\text{from Theorem }\ref{seconvarthm1}} \frac{e^{L_g \text{diam}(D_2)}}{\text{vol}_{\mathbb{R}^d}(D_2)} \,\RSnew{24^d \,
        \frac{\Gamma(\frac{d}{2}+1)}{\pi^{d/2}}
        \, \text{vol}_{\mathbb{R}^d}(D_2) \,
        \left(\frac{L}{2 \gamma^2}\right)^{d/2}
       \left( \frac{\gamma^2}{4L } \right)^{r/2} \epsilon^{\frac{d- r }{2}}  }  \nonumber \\ & + \probP(\norm{\z -\z_c} \geq t_0) \nonumber\\
    & \hspace{-3cm} \leq \RSnew{24^d \,
        \frac{\Gamma(\frac{d}{2}+1)}{\pi^{d/2}}
        \, e^{L_g \mathrm{diam}(D_2)} \,
        \left(\frac{L}{2 \gamma^2}\right)^{d/2}
       \left( \frac{\gamma^2}{4L } \right)^{r/2}  \epsilon^{\frac{d- r }{2}} }  + C  e^{-t_0^{\omega}} , \label{genforge100}
\end{align}
where $r \leq d-1 $ from \textbf{Assumption A3}.
\end{proof}

\subsection{Proof of Theorem \ref{aeforgethm2}}
\label{prob-sec6app}
\begin{proof}
   Let $\probP $ be a  probability measure that satisfies assumptions \textbf{P1-P2}. \RSnew{ Recall that
\begin{align*}
S_{\epsilon}(\x,\z^*,K_1)
:=\Big\{\, \z' \in \bigcup_{\z \in K_1} \mathcal{B}_{\rho_\epsilon}(\z)\ :\ 
\|\nabla_{\x} f(\x;\z')-\nabla_{\x} f(\x;\z^*)\|\le \epsilon\,\Big\} \subset D_2 \backslash V_2 .
\end{align*} 
From the proof of Theorem \ref{aeforgethm1}, $\tilde{O}_1(\epsilon) = \bigcup_{j=1}^{N_1} \mathcal{B}_{\rho_{\epsilon}}(\z^*_j) $ be the finite $\rho_{\epsilon}$ maximal sub-cover for $ S_{\epsilon}(\x,\z^*, K_1) \cap K_1$ in $\mathbb{R}^d$  with centers $\{\z_j^*\}_{j=1}^{N_1}\subset K_1 \cap S_{\epsilon}(\x,\z^*,K_1)$, $\|\z_i^*-\z_j^*\|\ge\rho_\epsilon$ for $i\neq j$ and $N_1$ be the covering number of $ \tilde{O}_1(\epsilon)$. } Under \textbf{Assumption P1} denote by $L_g$ the local Lipschitz constant for $g(\z)$ on the compact, convex set $D_2 \supset {\tilde{O}_1(\epsilon)} \supset \RSnew{S_{\epsilon}(\x,\z^*,K_1) \cap K_1}$. 
  Then for any $\z \in D_2$ the bound \eqref{genforge23} holds, i.e., 
\begin{align}
    p({\z})   & \leq  \frac{e^{L_g \text{diam}(D_2)}}{\text{vol}_{\mathbb{R}^d}(D_2) } \hspace{0.2cm} \forall \z \in D_2. \label{genforge23a}
\end{align}
\RSnew{Let $O_1(\rho_{\epsilon}) = \bigcup_{\z \in K_1} \mathcal{B}_{\rho_{\epsilon}}(\z) $ be as in Lemma \ref{coverlemma00} where $\rho_\epsilon=\sqrt{2\epsilon/L}$ .  For $\epsilon < \frac{L}{2} \xi^2$ or $\rho_{\epsilon} < \xi$, where $\xi = \xi(\nu)$ as in Lemma \ref{coverlemma00}, the cover $O_1(\rho_{\epsilon}) $ is separated from $ \partial D_2 $, $ D_2 \cap V_2$. Also, recall that $\tilde{O}_1(\epsilon)  \subset O_1(\rho_{\epsilon}) $.
Hence, by the inclusion-exclusion principle,  
\begin{align}
    \{\z \in  (D_2 \backslash V_2) \backslash \tilde{O}_1(\epsilon) : \norm{\nabla f(\x; \z) - \nabla f(\x; \z^*)} \leq \epsilon\}  & \nonumber \\ & \hspace{-6cm} \subset  \{\z \in  (D_2 \backslash V_2) \backslash (S_{\epsilon}(\x,\z^*,K_1) \cap K_1) : \norm{\nabla f(\x; \z) - \nabla f(\x; \z^*)} \leq \epsilon\} \\
    & \hspace{-6cm} =  \{\z \in  (D_2 \backslash V_2) \cap (S^c_{\epsilon}(\x,\z^*,K_1) \cup K^c_1) : \norm{\nabla f(\x; \z) - \nabla f(\x; \z^*)} \leq \epsilon\} \\
    & \hspace{-6cm} \subseteq \underbrace{\{\z \in  (D_2 \backslash V_2) \cap S^c_{\epsilon}(\x,\z^*,K_1)   : \norm{\nabla f(\x; \z) - \nabla f(\x; \z^*)} \leq \epsilon\}}_{\subset \, (D_2 \backslash V_2) \backslash  K_1} \nonumber \\
    & \hspace{-5cm} \cup   \{\z \in  (D_2 \backslash V_2) \cap  K^c_1 : \norm{\nabla f(\x; \z) - \nabla f(\x; \z^*)} \leq \epsilon\} \subset (D_2 \backslash V_2) \backslash  K_1 .\label{incexctemp1}
\end{align}
Further,
\begin{align}
    \{\z \in \tilde{O}_1(\epsilon)  : \norm{\nabla f(\x; \z) - \nabla f(\x; \z^*)} \leq \epsilon\} &\subseteq S_{\epsilon}(\x,\z^*,K_1) \nonumber\\&=  \bigg(S_{\epsilon}(\x,\z^*,K_1) \RSnew{\cap K_1} \bigg) \bigcup \bigg( S_{\epsilon}(\x,\z^*,K_1) \RSnew{\cap K^c_1}\bigg).  \label{incexctemp2}
\end{align}}
Then, the anti-concentration probability bound on the $\epsilon$-forging set from $\mathbb{R}^d \backslash V_2$, for any $\z^* \in D_2 \backslash V_2$, and for $\mu_1 \text{ a.e.}$ in $D_1$, is given by:
\begin{align}
   \probP\bigg( \{\z \in \mathbb{R}^d \backslash V_2 : \norm{\nabla f(\x; \z) - \nabla f(\x; \z^*)} &\leq \epsilon\}\bigg)  = \probP\bigg(\{\z \in D_2 \backslash V_2 : \norm{\nabla f(\x; \z) - \nabla f(\x; \z^*)} \leq \epsilon\}\bigg) \nonumber \\
   & \hspace{-3cm}+ \probP\bigg(\{\z \in (\mathbb{R}^d \backslash V_2) \backslash (D_2 \backslash V_2 ): \norm{\nabla f(\x; \z) - \nabla f(\x; \z^*)} \leq \epsilon\}\bigg) \nonumber \\
   & \hspace{-4cm}= \probP\bigg(\{\z \in \tilde{O}_1(\epsilon)  : \norm{\nabla f(\x; \z) - \nabla f(\x; \z^*)} \leq \epsilon\}\bigg) \nonumber \\
   & \hspace{-2cm}+ \probP\bigg(\{\z \in  (D_2 \backslash V_2) \backslash \tilde{O}_1(\epsilon) : \norm{\nabla f(\x; \z) - \nabla f(\x; \z^*)} \leq \epsilon\}\bigg) \nonumber \\
   & \hspace{-1cm}+ \probP\bigg(\{\z \in \text{ext}(D_2) \backslash V_2 : \norm{\nabla f(\x; \z) - \nabla f(\x; \z^*)} \leq \epsilon\}\bigg) \nonumber \\
      \end{align}
\begin{align}
    \hspace{-3cm}&  \RSnew {\underbrace{\leq}_{\eqref{incexctemp2}}} \int_{\z \in S_{\epsilon}(\x,\z^*,K_1) \RSnew{\cap K_1}} p(\z) d\z + \RSnew{\int_{\z \in S_{\epsilon}(\x,\z^*,K_1) \RSnew{\cap K^c_1} } p(\z) d\z} \nonumber \\  & \qquad + \int_{\RSnew{\{\z \in  (D_2 \backslash V_2) \backslash \tilde{O}_1(\epsilon) : \norm{\nabla f(\x; \z) - \nabla f(\x; \z^*)} \leq \epsilon\} }} p(\z) d\z \nonumber \\ & \qquad\qquad + \int_{\{\z \in \text{ext}(D_2) \backslash V_2 : \norm{\nabla f(\x; \z) - \nabla f(\x; \z^*)} \leq \epsilon \} }  p(\z) d\z \nonumber \\
    \hspace{-3cm}& \underbrace{\leq}_{\eqref{genforge23a}, \RSnew{ \eqref{incexctemp1}}}  \frac{e^{L_g \text{diam}(D_2)}}{\text{vol}_{\mathbb{R}^d}(D_2) } \int_{S_{\epsilon}(\x,\z^*,K_1) \RSnew{\cap K_1}}  d\mu_2  +  \frac{e^{L_g \text{diam}(D_2)}}{\text{vol}_{\mathbb{R}^d}(D_2) } \int_{\RSnew{\z \in  (D_2 \backslash V_2) \cap  K^c_1 }}  d\mu_2 \nonumber \\ & \qquad\qquad  +  \RSnew{ \frac{e^{L_g \text{diam}(D_2)}}{\text{vol}_{\mathbb{R}^d}(D_2) }\int_{\z \in  \RSnew{  (D_2 \backslash V_2 ) \backslash K_1} }  d\mu_2} +\int_{\{\z \in \text{ext}(D_2) \backslash V_2 : \norm{\nabla f(\x; \z) - \nabla f(\x; \z^*)} \leq \epsilon \} }  p(\z) d\z  .\nonumber
    \end{align}
   Invoking Theorem \ref{aeforgethm1}, $\mu_2(D_2 \backslash (V_2 \cup \partial D_2)) < \mu_2(K_{1}) + \nu_1 $ along with $\mu_2(\partial D_2) = 0$, \RSnew{the set inclusion \eqref{incexctemp1}} in the last step leads to the following simplification for $\mu_1 \text{ a.e.}$ in $D_1$: 
    \begin{align}
    \probP\bigg( \{\z \in \mathbb{R}^d \backslash V_2 : \norm{\nabla f(\x; \z) - \nabla f(\x; \z^*)} \leq \epsilon\}\bigg) &  \nonumber \\ & \hspace{-5cm} \underbrace{\leq}_{\textbf{Theorem } \ref{aeforgethm1}, \eqref{measureineqa1}} \RSnew{ 16^d \,
        \frac{\Gamma(\frac{d}{2}+1)}{\pi^{d/2}}
        \,\mathrm{vol}_{\mathbb{R}^d}(D_2)\,
        \left(\frac{L}{2 \gamma^2}\right)^{d/2}
       \left( \frac{\gamma^2}{4L } \right)^{r/2} 
        \epsilon^{\frac{d-r}{2}} \, \, }  \frac{e^{L_g \text{diam}(D_2)}}{\text{vol}_{\mathbb{R}^d}(D_2) } \nonumber \\
    & \hspace{-3cm} +  \RSnew{2}\frac{e^{L_g \text{diam}(D_2)}}{\text{vol}_{\mathbb{R}^d}(D_2) } \nu_1 + \int_{\{\z \in \text{ext}(D_2) \backslash V_2 : \norm{\nabla f(\x; \z) - \nabla f(\x; \z^*)} \leq \epsilon \} }  p(\z) d\z  \nonumber \\
      & \hspace{-5.5cm}\underbrace{\leq}_{\textbf{Assumption P2}}  \RSnew{ 16^d \,
        \frac{\Gamma(\frac{d}{2}+1)}{\pi^{d/2}}
        \,e^{L_g \text{diam}(D_2)}\,
        \left(\frac{L}{2 \gamma^2}\right)^{d/2}
       \left( \frac{\gamma^2}{4L } \right)^{r/2} 
        \epsilon^{\frac{d-r}{2}} \, \, }  \nonumber \\
    & \hspace{-3cm} +  \RSnew{2}\frac{e^{L_g \text{diam}(D_2)}}{\text{vol}_{\mathbb{R}^d}(D_2) } \nu_1 + \int_{\norm{\z -\z_c} \geq t_0 }  p(\z) d\z  \nonumber.
    \end{align}
    Then using \textbf{Assumption P2} on the last summand of the above inequality yields
    \begin{align}
    \probP\bigg( \{\z \in \mathbb{R}^d \backslash V_2 : \norm{\nabla f(\x; \z) - \nabla f(\x; \z^*)} \leq \epsilon\}\bigg) &  \nonumber \\
    & \hspace{-3.5cm} {\leq} \, \, \RSnew{16^d \,
        \frac{\Gamma(\frac{d}{2}+1)}{\pi^{d/2}}
        \,e^{L_g \text{diam}(D_2)}\,
        \left(\frac{L}{2 \gamma^2}\right)^{d/2}
       \left( \frac{\gamma^2}{4L } \right)^{r/2} 
        \epsilon^{\frac{d-r}{2}} \, \, }  \nonumber \\
    & \hspace{-3cm} +  \RSnew{2}\frac{e^{L_g \text{diam}(D_2)}}{\text{vol}_{\mathbb{R}^d}(D_2) } \nu_1 + C  e^{-t_0^{\omega}}  \quad  , \quad \mu_1 \text{ a.e. in } D_1
    \label{genforge100p1}
\end{align}
where $\epsilon$ is a function of $\nu_1$and $ \epsilon \to 0$ as $\nu_1 \downarrow 0$. The exact rate of decay for $\epsilon$ in terms of $\nu_1$ depends on the geometry of the set $K_1$ and therefore cannot be determined in general.
\end{proof}

\section{Proofs for \Cref{generalforgeanalysismainsec}}
\subsection{Proof of Lemma \ref{seconvarlem1}}\label{sec4proof1}
\begin{proof}
    \RSnew{    Let
    \(        \h:=\z^*-\z.    \)
    By the fundamental theorem of calculus,
    \begin{align}
        \nabla_{\x}f(\x;\z^*)-\nabla_{\x}f(\x;\z)
        =
        \int_0^1 \nabla_{\z}\nabla_{\x}f(\x;\z+t \h)\,\h\,dt.
        \label{eq:lem1_ftc}
    \end{align}
    Rearranging gives
    \begin{align}
        \nabla_{\z}\nabla_{\x}f(\x;\z^*)\h
        &=
        \nabla_{\x}f(\x;\z^*)-\nabla_{\x}f(\x;\z)
        -
        \int_0^1
        \Big(
        \nabla_{\z}\nabla_{\x}f(\x;\z+t \h)
        -
        \nabla_{\z}\nabla_{\x}f(\x;\z^*)
        \Big)\h\,dt .
    \end{align}
    Taking norms and using the operator norm yields
    \begin{align}
        \|\nabla_{\z}\nabla_{\x}f(\x;\z^*)\h\|
        &\leq
        \|\nabla_{\x}f(\x;\z^*)-\nabla_{\x}f(\x;\z)\|
        \nonumber\\
        &\qquad +
        \int_0^1
        \left\|
        \nabla_{\z}\nabla_{\x}f(\x;\z+t \h)
        -
        \nabla_{\z}\nabla_{\x}f(\x;\z^*)
        \right\|
        \|\h\|\,dt .
        \label{eq:lem1_before_lipschitz}
    \end{align}
    By \textbf{A2}, the map
    \(
        \z \mapsto \nabla_{\z}\nabla_{\x}f(\x;\z)
    \)
    is Lipschitz on any compact set containing the line segment joining $\z$ and $\z^*$. Let $L$ be a Lipschitz constant on such a compact set. Then
    \begin{align}
        \left\|
        \nabla_{\z}\nabla_{\x}f(\x;\z+t \h)
        -
        \nabla_{\z}\nabla_{\x}f(\x;\z^*)
        \right\|
        \leq
        L\|\z+t \h-\z^*\|
        =
        L(1-t)\|\h\|.
    \end{align}
    Substituting this into \eqref{eq:lem1_before_lipschitz}, we obtain
    \begin{align}
        \|\nabla_{\z}\nabla_{\x}f(\x;\z^*)\h\|
        &\leq
        \|\nabla_{\x}f(\x;\z^*)-\nabla_{\x}f(\x;\z)\|
        +
        \int_0^1 L(1-t)\|\h\|^2\,dt
        \nonumber\\
        &=
        \|\nabla_{\x}f(\x;\z^*)-\nabla_{\x}f(\x;\z)\|
        +
        \frac{L}{2}\|\h\|^2.
    \end{align}
    Since $\h=\z^*-\z$, this proves \eqref{eq:lem1_main}.     Now assume that $\|\z^*-\z\|\leq \sqrt{\frac{2\epsilon}{L}}$ and that $\z\in S_\epsilon(\x,\z^*)$, that is,
    \(
        \|\nabla_{\x}f(\x;\z^*)-\nabla_{\x}f(\x;\z)\|\leq \epsilon.
    \)
    Then \eqref{eq:lem1_main} gives
    \begin{align}
        \|\nabla_{\z}\nabla_{\x}f(\x;\z^*)(\z^*-\z)\|
        &\leq
        \epsilon
        +
        \frac{L}{2}\|\z^*-\z\|^2
\leq
                2\epsilon,
    \end{align}
    as claimed.}
\end{proof}
\subsection{Proof of Lemma \ref{seconvarlem2}}\label{sec4proof2}
\begin{proof}
    \RSnew{    Since $\z$ $\eta$-forges $\z^*$ and
    \(
        \|\z-\z^*\|\leq \sqrt{\frac{2\eta}{L}},
    \)
    Lemma \ref{seconvarlem1} gives
    \begin{align}
        \|\M_0(\z^*)(\z-\z^*)\|
        \leq 2\eta.
        \label{eq:lem2_proof_1}
    \end{align}
    Set
    \(
        \h:=\z-\z^*,
    \)
    and decompose $h$ orthogonally as
    \(
        \h=\u+\v,\) where \(
        \u\in E_{<\gamma}(\z^*), \v\in E_{\geq \gamma}(\z^*).
    \)
    Then
    \begin{align}
        \|\u\|
        \leq \|\h\|
        \leq \sqrt{\frac{2\eta}{L}}.
        \label{eq:lem2_proof_2}
    \end{align}
    Because $E_{<\gamma}(\z^*)$ and $E_{\geq \gamma}(\z^*)$ are orthogonal sums of right singular spaces of $\M_0(\z^*)$, we have
    \(
        \|\M_0(\z^*)\h\|^2
        =
        \|\M_0(\z^*)\u\|^2 + \|\M_0(\z^*)\v\|^2.
    \)
    In particular,
    \(
        \|\M_0(\z^*)\v\|
        \leq
        \|\M_0(\z^*)\h\|.
    \)
    Moreover, every singular value of $\M_0(\z^*)$ restricted to $E_{\geq \gamma}(\z^*)$ is at least $\gamma$, so
    \begin{align}
        \|\M_0(\z^*)\v\|
        \geq
        \gamma \|\v\|.
        \label{eq:lem2_proof_3}
    \end{align}
    Combining \eqref{eq:lem2_proof_1} and \eqref{eq:lem2_proof_3} yields
    \begin{align}
        \|\v\|
        \leq
        \frac{2\eta}{\gamma}.
        \label{eq:lem2_proof_4}
    \end{align}
    Equations \eqref{eq:lem2_proof_2} and \eqref{eq:lem2_proof_4} imply
    \eqref{eq:local_spectral_inclusion}.
   Now, by \textbf{A3},  $E_{<\gamma}(\z^*)$ has dimension at most $r$. Hence
    \begin{align}
        \mathrm{vol}_{\mathbb{R}^d}\Big(
        S_\eta(\x,\z^*)\cap \mathcal{B}_{\sqrt{\frac{2\eta}{L}}}(\z^*)
        \Big) &\leq
        \mathrm{vol}_{\mathbb{R}^k}\bigg(\bigg\{\u : \|\u\|
        \leq \sqrt{\frac{2\eta}{L}}\bigg\}\bigg) \mathrm{vol}_{\mathbb{R}^{d-k}}\bigg(\bigg\{\v : \|\v\|
        \leq
        \frac{2\eta}{\gamma}\bigg\}\bigg) \\
        & =
        \omega_k\,\omega_{d-k}\,
        \left(\sqrt{\frac{2\eta}{L}}\right)^k
        \left(\frac{2\eta}{\gamma}\right)^{d-k}, 
        \label{eq:lem2_proof_5}
    \end{align}
    where
    \(
        k:=\dim(E_{<\gamma}(\z^*))\leq r
    \)
    and
    \(
        \omega_m:=\mathrm{vol}_{\mathbb{R}^m}(\mathcal{B}_1(\mathbf{0})).
    \)
    Since $\omega_m\leq 2^m$ for every $m\geq 0$, it follows that
    \(
        \omega_k\,\omega_{d-k}\leq 2^d.
    \)
    Therefore \eqref{eq:lem2_proof_5} gives
    \begin{align}
        \mathrm{vol}_{\mathbb{R}^d}\Big(
        S_\eta(\x,\z^*)\cap \mathcal{B}_{\sqrt{\frac{2\eta}{L}}}(\z^*)
        \Big)
        &\leq
        2^d
         \left(\sqrt{\frac{2}{L}}\right)^k
        \left(\frac{2}{\gamma}\right)^{d-k}
        \eta^{\,d-\frac{k}{2}}
,
    \end{align}
    which proves
    \eqref{eq:local_spectral_volume}.}
\end{proof}

\subsection{Proof of Theorem \ref{seconvarthm1}}\label{sec4proof3}
\begin{proof}
\RSnew{We start with a standard covering argument.    Set
    \(
        \rho_\epsilon:=\sqrt{\frac{2\epsilon}{L}}.
    \)
    and let $\{\z_i\}_{i=1}^N\subset S_\epsilon(\x,\z^*)$ be a maximal $\rho_\epsilon$-separated subset of $S_\epsilon(\x,\z^*)$. Since $S_\epsilon(\x,\z^*)\subset D_2$ and $D_2$ is compact, the set $\{\z_i\}_{i=1}^N$ is finite. By maximality,
    \(
        S_\epsilon(\x,\z^*)
        \subseteq
        \bigcup_{i=1}^N \mathcal{B}_{\rho_\epsilon}(\z_i),
    \)
    and the balls
    \(
        \mathcal{B}_{\rho_\epsilon/2}(\z_i),
        \quad i=1,\dots,N,
    \)
    are pairwise disjoint.    Since $S_\epsilon(\x,\z^*)\subseteq D_2$, we have
    \(
        \bigcup_{i=1}^N \mathcal{B}_{\rho_\epsilon/2}(\z_i)
        \subseteq
        D_2+\mathcal{B}_{\rho_\epsilon/2}(\mathbf{0}),
    \)
    and hence
    \begin{align}
        N\,\omega_d\left(\frac{\rho_\epsilon}{2}\right)^d
        \leq
        \mathrm{vol}_{\mathbb{R}^d}\Big(
        D_2+\mathcal{B}_{\rho_\epsilon/2}(\mathbf{0})
        \Big),
        \label{eq:thm_proof_1}
    \end{align}
    where
    \(
        \omega_d=\mathrm{vol}_{\mathbb{R}^d}(\mathcal{B}_1(0)) =\frac{\pi^{d/2}}{\Gamma(\frac{d}{2}+1)}.
    \)
    We now estimate the right-hand side. Let
    \(
        \widetilde D_2:=D_2-\c.
    \)
    Then $\widetilde D_2$ is convex and contains $\mathcal{B}_\tau(\mathbf{0})$. Set
    \(
        t_\epsilon:=\frac{\rho_\epsilon}{2\tau}.
    \)
    If $\z \in \widetilde D_2$ and $\y\in \mathcal{B}_{\rho_\epsilon/2}(\mathbf{0})$, then
    \(
        \|\y\|\leq \frac{\rho_\epsilon}{2}=t_\epsilon\tau,
    \)
    so there exists $\b\in \mathcal{B}_\tau(\mathbf{0})\subset \widetilde D_2$ such that
    \(
        \y=t_\epsilon \b.
    \)
    Therefore
    \[
        \z+\y
        =
        (1+t_\epsilon)
        \left(
        \frac{1}{1+t_\epsilon}\z
        +
        \frac{t_\epsilon}{1+t_\epsilon}\b
        \right).
    \]
    Since $\widetilde D_2$ is convex, the vector in parentheses is in $\widetilde D_2$, and thus
    \(
        \widetilde D_2+\mathcal{B}_{\rho_\epsilon/2}(\mathbf{0})
        \subseteq
        (1+t_\epsilon)\widetilde D_2.
    \)
    Translating back, we obtain
    \(
        D_2+\mathcal{B}_{\rho_\epsilon/2}(\mathbf{0})
        \subseteq
        \c+(1+t_\epsilon)(D_2-\c).
    \)
    Taking Lebesgue measure and using translation invariance and homogeneity under dilation gives
    \begin{align}
        \mathrm{vol}_{\mathbb{R}^d}\Big(
        D_2+\mathcal{B}_{\rho_\epsilon/2}(\mathbf{0})
        \Big)
        \leq
        (1+t_\epsilon)^d\,\mathrm{vol}_{\mathbb{R}^d}(D_2).
        \label{eq:thm_proof_2}
    \end{align}
    Since
    \(
        \epsilon\leq \frac{L\tau^2}{2},
    \)
    we have $\rho_\epsilon\leq \tau$, hence
    \(
        t_\epsilon=\frac{\rho_\epsilon}{2\tau}\leq \frac12,
    \)
    so
    \(
        1+t_\epsilon\leq \frac32.
    \)
    Combining \eqref{eq:thm_proof_1} and \eqref{eq:thm_proof_2} yields
    \begin{align}
        N
        &\leq
        \frac{(3/2)^d\,\mathrm{vol}_{\mathbb{R}^d}(D_2)}
        {\omega_d\,(\rho_\epsilon/2)^d}
        \nonumber\\        &
        =
        3^d
        \frac{\Gamma(\frac{d}{2}+1)}{\pi^{d/2}}
        \,\mathrm{vol}_{\mathbb{R}^d}(D_2)\,
        \left(\frac{L}{2\epsilon}\right)^{d/2}.
        \label{eq:thm_proof_3}
    \end{align}
    Next, let
    \(
        A_i:=S_\epsilon(\x,\z^*)\cap \mathcal{B}_{\rho_\epsilon}(\z_i).
    \)
    If $\z\in A_i$, then both $\z$ and $\z_i$ belong to $S_\epsilon(\x,\z^*)$, so by the triangle inequality,
    \[
        \|\nabla_{\x}f(\x;\z)-\nabla_{\x}f(\x;\z_i)\|
        \leq
        \|\nabla_{\x}f(\x;\z)-\nabla_{\x}f(\x;\z^*)\|
        +
        \|\nabla_{\x}f(\x;\z_i)-\nabla_{\x}f(\x;\z^*)\|
        \leq 2\epsilon.
    \]
    Hence
    \(
        \z\in S_{2\epsilon}(\x,\z_i).
    \)
    Also,
    \(
        \|\z-\z_i\|
        \leq \rho_\epsilon
        =
        \sqrt{\frac{2\epsilon}{L}}
        \leq
        \sqrt{\frac{4\epsilon}{L}}.
    \)
    Therefore
    \(
        A_i
        \subseteq
        S_{2\epsilon}(\x,\z_i)\cap \mathcal{B}_{\sqrt{\frac{4\epsilon}{L}}}(\z_i).
    \)
    Applying the inclusion \eqref{eq:local_spectral_inclusion} from Lemma \ref{seconvarlem2} with $\eta=2\epsilon$, we obtain
    \begin{align}
        \mathrm{vol}_{\mathbb{R}^d}(A_i)
        &\leq
        \omega_{k_i}\,\omega_{d-{k_i}}
        \left(\sqrt{\frac{4\epsilon}{L}}\right)^{k_i}
        \left(\frac{4\epsilon}{\gamma}\right)^{d-{k_i}}
        \nonumber\\
        &\leq
        2^d
         \left(\sqrt{\frac{4}{L}}\right)^{k_i}
        \left(\frac{4}{\gamma}\right)^{d-{k_i}}
        \epsilon^{\,d-\frac{{k_i}}{2}} \nonumber \\
        & = \left(\frac{8 \epsilon}{\gamma}\right)^d
         \left( \frac{\gamma^2}{16 \epsilon} {\frac{4}{L}}\right)^{k_i/2} \le \left(\frac{8 \epsilon}{\gamma}\right)^d
         \left( \frac{\gamma^2}{16 \epsilon} {\frac{4}{L}}\right)^{r/2}  
        \label{eq:thm_proof_4}
    \end{align}
    where
    \(
        {k_i}:=\dim(E_{<\gamma}(\z_i))\le r
    \) and in the last step we used the bound $\epsilon \le \frac{\gamma^2}{4L}$.
    Finally, summing over the covering balls and using \eqref{eq:thm_proof_3} and \eqref{eq:thm_proof_4}, 
    \begin{align}
        \mu_2\big(S_\epsilon(\x,\z^*)\big)
        &\leq
        \sum_{i=1}^N \mathrm{vol}_{\mathbb{R}^d}(A_i)
   \leq
        N\cdot
       \left(\frac{8 \epsilon}{\gamma}\right)^d
         \left( \frac{\gamma^2}{4L \epsilon} \right)^{r/2}  
        \nonumber\\
        &\leq
        3^d
        \frac{\Gamma(\frac{d}{2}+1)}{\pi^{d/2}}
        \,\mathrm{vol}_{\mathbb{R}^d}(D_2)\,
        \left(\frac{L}{2\epsilon}\right)^{d/2}
        \cdot
        \left(\frac{8 \epsilon}{\gamma}\right)^d
         \left( \frac{\gamma^2}{4L \epsilon} \right)^{r/2}  
        \nonumber\\
        &=
        24^d \,
        \frac{\Gamma(\frac{d}{2}+1)}{\pi^{d/2}}
        \,\mathrm{vol}_{\mathbb{R}^d}(D_2)\,
        \left(\frac{L}{2 \gamma^2}\right)^{d/2}
       \left( \frac{\gamma^2}{4L } \right)^{r/2} 
        \epsilon^{\frac{d-r}{2}}. \label{genforge008}
    \end{align}}
\end{proof}

\section{Proofs for Section \ref{sectionaeforge}}
\subsection{Proof of Lemma \ref{coverlemma00}}\label{sec6proof1}
\begin{proof}
    We note that $K_1$ and $D_2 \cap V_2$ are compact, and $K_1\cap V_2 = \emptyset$. Since $\mathbb{R}^d$ is a Hausdorff space, there exists some $\xi_1 :=\xi_1(\nu_1) >0$ such that  
\[
    O_1(\xi_1) = \bigcup_{\z \in K_1} \mathcal{B}_{\xi_1}(\z) \hspace{0.2cm}, \hspace{0.2cm} O_2(\xi_1) = \bigcup_{\z \in D_2 \cap V_2} \mathcal{B}_{\xi_1}(\z)  \label{eq:cover1} \tag{\textbf{cover1}}
\] 
are non-intersecting uniform open covers of $K_1, D_2 \cap V_2$ respectively (Lemma \ref{suplem0}). Further, the set $D_2$ is convex and compact hence has a compact boundary. Since  $\partial D_2$ is compact, $K_1 \cap \partial D_2= \emptyset $, and $\mathbb{R}^d$ is a Hausdorff space, there exists some $\xi_2 := \xi_2(\nu_1) >0$ such that 
\[
  O_1(\xi_2) = \bigcup_{\z \in K_1} \mathcal{B}_{\xi_2}(\z) \hspace{0.2cm}, \hspace{0.2cm} O_3(\xi_2) = \bigcup_{\z \in \partial D_2} \mathcal{B}_{\xi_2}(\z)  \label{eq:cover2} \tag{\textbf{cover2}}
\]
are non-intersecting uniform open covers of $K_1, \partial D_2 $ respectively from Lemma \ref{suplem0}. Let $\xi := \xi(\nu_1) = \min\{\xi_1, \xi_2\}$. Then the covers $O_1(\xi) \supset K_1$, $ O_2(\xi) \supset D_2 \cap V_2$, $ O_3(\xi) \supset \partial D_2$ satisfy 
$$ O_1(\xi) \cap O_2(\xi) = \emptyset, \hspace{0.2cm} O_1(\xi) \cap O_3(\xi) = \emptyset, \hspace{0.2cm}  O_3(\xi) \subset D_2 + \mathcal{B}_{\xi}(\mathbf{0}), \hspace{0.2cm} O_1(\xi) \subseteq \text{int}(D_2).$$
It is straightforward to show that the second last inclusion holds. We now show that the last inclusion holds. Recall that $ O_1(\xi) \cap O_3(\xi) = \emptyset $ and thus $$  O_1(\xi)  = (O_1(\xi) \cap \text{int}(D_2) )\cup (O_1(\xi) \cap \text{ext}(D_2) ).$$ 
where  $(O_1(\xi) \cap \text{int}(D_2) ) $ and $(O_1(\xi) \cap \text{ext}(D_2) ) $ are disjoint. If not,  there exists a ball $\mathcal{B}_{\xi}(\z) \subset O_1(\xi)$ such that $ \mathcal{B}_{\xi}(\z)  \cap \text{int}(D_2) \neq \emptyset$ and  $\mathcal{B}_{\xi}(\z)  \cap \text{ext}(D_2) \neq \emptyset $. Let $\y_1 \in \mathcal{B}_{\xi}(\z)  \cap \text{int}(D_2) $, $\y_2 \in \mathcal{B}_{\xi}(\z)  \cap \text{ext}(D_2) $ and $\y_t = (1-t)\y_1+ t\y_2$ for any $t \in [0,1]$. Since the line joining $\y_1,\y_2$ intersects $ \partial D_2$ and $\mathcal{B}_{\xi}(\z)  $ is convex, then $ \y_s \in \mathcal{B}_{\xi}(\z)  \cap \partial D_2 $ for a unique $s \in (0,1)$ and so $ \mathcal{B}_{\xi}(\z)  \cap \partial D_2 \neq \emptyset$, a contradiction since $  O_1(\xi) \cap \partial D_2 = \emptyset$. Since $(O_1(\xi) \cap \text{int}(D_2) ) $, $(O_1(\xi) \cap \text{ext}(D_2) ) $ are disjoint, it must be that $O_1(\xi) \cap \text{ext}(D_2)$ is a union of balls with centers in $\text{ext}(D_2)$ and since the balls in $O_1(\xi)$ have centers in $K_1 \subset D_2 \backslash (V_2 \cup \partial D_2)  \subseteq \text{int}(D_2)$ then $O_1(\xi) \cap \text{ext}(D_2) = \emptyset$. Because $K_1 \subset O_1(\xi) \subseteq \text{int}(D_2)$ we have
\begin{align*}
   0 \leq  \mu_2(D_2 \backslash(V_2 \cup \partial D_2)) - \mu_2(O_1(\xi)) = \mu_2(D_2) - \mu_2(O_1(\xi)) < \nu_1 .
\end{align*}
In fact, for any $\nu_1>0$ where $K_1 \subset O_1(\xi) \subseteq \text{int}(D_2)$ and $\xi$ is a function of $\nu_1$, the above bound holds.
Last, it remains to show that $\xi \to 0$ as $\nu_1 \downarrow 0$. Consider an arbitrary decreasing sequence $ \{\nu_{1,j}\}_j$ with $\nu_{1,j} \downarrow 0 $. Then for every $ \nu_{1,j}$ there exists a compact $K_{1,j} \subset D_2 \backslash (V_2 \cup \partial D_2)$, that depends on $\nu_{1,j}$ with
\begin{align}
    0< \mu_2(D_2 ) - \mu_2(K_{1,j}) =\mu_2(D_2 \backslash (V_2 \cup \partial D_2)) - \mu_2(K_{1,j}) < \nu_{1,j}, \label{uscxi1}
\end{align}
from \RSnew{\Cref{defk2}} and $ \lim_{j \to \infty}\mu_2(K_{1,j}) = \mu_2(D_2 \backslash (V_2 \cup \partial D_2)) $ by inner regularity of $\mu_2$. For each $K_{1,j} $ there exist open covers $ O_1(\xi_{j}), O_2(\xi_{j}), O_3(\xi_{j})$ with the following properties: $ O_1(\xi_{j}), O_2(\xi_{j})$ are non-intersecting uniform open covers of the disjoint compact sets $K_{1,j}, D_2 \cap V_2 $ and $ O_1(\xi_{j}), O_3(\xi_{j})$ are non-intersecting uniform open covers of the disjoint compact sets $K_{1,j}, \partial D_2 $ from \eqref{eq:cover1}, \eqref{eq:cover2} respectively and Lemma \ref{suplem0} where $\xi_{j}>0$. Let $V_2 \cap \text{int}(D_2) \neq \emptyset$ without loss of generality. Otherwise, the set $K_{1,j}$ can be easily obtained by uniformly shrinking $D_2$ and then showing $\xi_{j} \to 0$ as $\nu_{1,j} \downarrow 0$ is trivial. Since $ K_{1,j} \subset O_1(\xi_{j}) \subseteq \text{int}(D_2) $ and $O_1(\xi_{j}) \cap O_2(\xi_{j})  = \emptyset  $, we have for any $j$ that the disjoint union $ (O_2(\xi_{j}) \cap \text{int}(D_2)) \cup K_{1,j}  \subseteq \text{int}(D_2) $ and therefore we have 
$$ \mu_2(O_2(\xi_{j}) \cap \text{int}(D_2)) + \mu_2( K_{1,j}) = \mu_2((O_2(\xi_{j}) \cap \text{int}(D_2)) \cup K_{1,j} ) \leq \mu_2(\text{int}(D_2)).$$
Hence
\begin{align}
    \mu_2(D_2 \backslash (V_2 \cup \partial D_2)) - \mu_2(K_{1,j}) - \mu_2(O_2(\xi_{j}) \cap \text{int}(D_2)) & = \mu_2(D_2) - \mu_2(K_{1,j}) - \mu_2(O_2(\xi_{j}) \cap \text{int}(D_2)) \nonumber \\ &\geq 0. \label{uscxi2}
\end{align}
Using \eqref{uscxi1}, \eqref{uscxi2} and taking $\liminf_{j \to \infty}$ we get:
\begin{align}
   0 \leq \mu_2(D_2 \backslash (V_2 \cup \partial D_2)) - \mu_2(K_{1,j}) - \mu_2(O_2(\xi_{j}) \cap \text{int}(D_2)) & < \nu_{1,j}  \nonumber\\
   \implies  0 \leq \mu_2(D_2 \backslash (V_2 \cup \partial D_2)) - \limsup_{j \to \infty}\mu_2(K_{1,j}) - \limsup_{j \to \infty}\mu_2(O_2(\xi_{j}) \cap \text{int}(D_2)) & \leq \liminf_{j \to \infty}\nu_{1,j} = 0 \nonumber \\
   \implies \lim_{j \to \infty}\mu_2(O_2(\xi_{j}) \cap \text{int}(D_2)) & = 0. \label{uscxi3}
\end{align}
Since $V_2 \cap \text{int}(D_2) \neq \emptyset$ there exists $\z \in V_2 \cap \text{int}(D_2) $ such that $ \mathcal{B}_{\xi_{j}} (\z) \cap \text{int}(D_2) \subset O_2(\xi_{j}) \cap \text{int}(D_2)$ and since $ \mathcal{B}_{\xi_{j}} (\z) \cap \text{int}(D_2)$, $ O_2(\xi_{j}) \cap \text{int}(D_2) $ are open sets, we have for any $j$ that
\begin{align}
    0 < \mu_2(\mathcal{B}_{\xi_{j}} (\z) \cap \text{int}(D_2)) &\leq \mu_2( O_2(\xi_{j}) \cap \text{int}(D_2)) \nonumber \\
    \implies 0 \leq \limsup_{j \to \infty} \mu_2(\mathcal{B}_{\xi_{j}} (\z) \cap \text{int}(D_2)) &\leq  \limsup_{j \to \infty}\mu_2( O_2(\xi_{j}) \cap \text{int}(D_2)) \underbrace{=}_{\eqref{uscxi3}} \lim_{j \to \infty}\mu_2( O_2(\xi_{j}) \cap \text{int}(D_2)) \nonumber \\ & = 0 \nonumber \\
    \implies \lim_{j \to \infty} \mu_2(\mathcal{B}_{\xi_{j}} (\z) \cap \text{int}(D_2)) &= 0 \nonumber
    \implies \lim_{j \to \infty} \xi_{j} = 0 \nonumber
\end{align}
where we used $\z \in \text{int}(D_2) $ in the last step. Since we started with an arbitrary decreasing sequence $ \{\nu_{1,j}\}_j$ with $ \nu_{1,j}\downarrow 0  $, the proof is complete.
\end{proof}
\subsection{Proof of Theorem \ref{aeforgethm1}}\label{sec6proof2}
\begin{proof}
    Let $O_1, O_2, O_3$ be as in Lemma \ref{coverlemma00}. Recall that for any $(\x, \z) \in D_1  \times (D_2 \backslash V_2)$, $f(\cdot,\cdot)$ is jointly $\mathcal{C}^2$ smooth for $\mu_1$ a.e. $\x$ and hence $f$ is jointly $\mathcal{C}^2$ smooth on $D_1  \times O_1(\xi)$ for $\mu_1$ a.e. $\x$ because $ O_1(\xi) \cap O_2(\xi) = \emptyset$. In particular, since $ (\text{int}(D_1) \times \text{int}(D_2)) \backslash V$ is open in $ \mathbb{R}^n \times \mathbb{R}^d$, for every $ (\x, \z) \in  (\text{int}(D_1) \times \text{int}(D_2)) \backslash V$ there exists an open neighborhood of $(\x,\z)$ where $f$ is jointly $\mathcal{C}^2$ smooth. \RSnew{For any $\epsilon>0$ where $\epsilon \leq \frac{L}{2}\xi^2$, define $\rho_{\epsilon}:=\sqrt{\frac{2 \epsilon}{L}}$, then $f$ is jointly $\mathcal{C}^2$ smooth on $D_1  \times O_1(\rho_{\epsilon})$ for $\mu_1$ a.e. $\x$ since $ O_1(\rho_{\epsilon}) \cap O_2(\rho_{\epsilon}) = \emptyset$. Next, observe that $ \bigcup_{\z \in S_{\epsilon}(\x,\z^*, K_1) \cap K_1} \mathcal{B}_{\rho_{\epsilon}}(\z)$ is a sub-cover of $ O_1(\rho_{\epsilon}) $ that covers $ S_{\epsilon}(\x,\z^*, K_1) \cap K_1$.
    Since $ S_{\epsilon}(\x,\z^*, K_1)$ is pre-compact, $ D_2 \cap V_2 $ is compact, by the Heine-Borel theorem, finite sub-covers for $S_{\epsilon}(\x,\z^*, K_1) \cap K_1$ and $ D_2 \cap V_2$ can be extracted respectively from the covering sets $ \bigcup_{\z \in S_{\epsilon}(\x,\z^*, K_1) \cap K_1} \mathcal{B}_{\rho_{\epsilon}}(\z) , O_2(\rho_{\epsilon})$ for any $\epsilon \leq \frac{L}{2}\xi^2$. \\
    \RSnew{Let $\{ \z_j^* \}_{j=1}^{N_1}\subset S_{\epsilon}(\x,\z^*,K_1) \cap K_1$ be a maximal finite $\rho_\epsilon$-separated family 
(i.e., $\|\z_i^*-\z_j^*\|\ge \rho_\epsilon$ for $i\neq j$) so that
\[
S_{\epsilon}(\x,\z^*,K_1) \cap K_1 \subset \bigcup_{j=1}^{N_1} \mathcal{B}_{\rho_\epsilon}(\z_j^*).
\]
Such cover exists, since \(S_{\epsilon}(\x,\z^*,K_1)\cap K_1\) is pre-compact.
}\\ 
\RSnew{For any $0<\epsilon \leq \frac{L}{2}\xi^2$, let $\tilde{O}_1(\epsilon) = \bigcup_{j=1}^{N_1} \mathcal{B}_{\rho_{\epsilon}}(\z^*_j) $. Let $N(D_2, \rho_{\epsilon})$ be the cardinality of the maximal finite $\rho_{\epsilon}$ separated cover of $D_2$.  
 Then for $0< \epsilon\leq \min \{ \frac{L\tau^2}{2} , \frac{L}{2}\xi^2 \}$, using the fact that $ \tilde{O}_1(\epsilon) \subseteq D_2$ we get 
 \begin{align*}
 N_1 \times \bigg(\frac{\rho_{\epsilon}}{2}\bigg)^d  \mathrm{vol}_{\mathbb{R}^d}(\mathcal{B}_{1}(\bm 0) ) \leq    \mathrm{vol}_{\mathbb{R}^d}(\tilde{O}_1(\epsilon) ) \leq    \mathrm{vol}_{\mathbb{R}^d}(D_2 ) 
 \end{align*}
 which implies that}
\begin{align}
    N_1\le\bigg( 2\sqrt{\frac{L}{ 2\epsilon}}\bigg)^{d} \frac{\text{vol}_{\mathbb{R}^d}(D_2) \Gamma(\frac{d}{2}+1)}{\pi^{\frac{d}{2}}}.
    \label{coveringnumberbd1}
\end{align}
We therefore have that 
$$ S_{\epsilon}(\x,\z^*,K_1) \cap K_1 \subset \tilde{O}_1(\epsilon) = \bigcup_{j=1}^{N_1} \mathcal{B}_{\rho_{\epsilon}}(\z^*_j) \quad , \quad \{\z_j^*\}_{j=1}^{N_1}\subset K_1 \cap S_{\epsilon}(\x,\z^*,K_1). $$
Hence the volume of  $ S_{\epsilon}(\x,\z^*, K_1) \cap K_1$ is upper bounded by the sum of volume of  sets of the form 
$$ S_{\epsilon}(\x,\z^*,\mathcal{B}_{\rho_{\epsilon}}(\z^*_j)) = \bigg\{\z \in  \mathcal{B}_{\rho_{\epsilon}}(\z^*_j)  : \norm{\nabla f(\x; \z) - \nabla f(\x; \z^*)} \leq \epsilon \bigg\} $$
where $ \mathcal{B}_{\rho_{\epsilon}}(\z^*_j) $ is the $j-$th covering ball for $S_{\epsilon}(\x,\z^*,K_1) \cap K_1$. Recall that we already have a bound on the volume of sets of the form $ S_{\epsilon}(\x,\z^*,\mathcal{B}_{\rho_{\epsilon}}(\z^*_j))$ from \eqref{eq:thm_proof_4}. In particular, let $ \M_0(\z^*_j) = \nabla_{\z} \nabla_{\x} f(\x; \z^*_j)$ for $\mu_1$ a.e. $\x \in D_1$. Then from \eqref{eq:thm_proof_4} we have that
\begin{align}
    \mu_2 \bigg(S_{\epsilon}(\x,\z^*,\mathcal{B}_{\rho_{\epsilon}}(\z^*_j))\bigg)\le \left(\frac{8 \epsilon}{\gamma}\right)^d
         \left( \frac{\gamma^2}{16 \epsilon} {\frac{4}{L}}\right)^{r/2}  
        \hspace{0.4cm}  \mu_1 \text{ a.e.} \label{genforge008ae}
\end{align}
where now $r$ is the parameter from \textbf{A3} for the compact set $\x \times K_1$. 
Using a union bound, the covering number bound  \eqref{coveringnumberbd1}, the $\epsilon-$forging volume bound \eqref{genforge008ae} over a  ball of radius $ \sqrt{\frac{2 \epsilon}{L}}$, the fact that $ S_{\epsilon}(\x, \z^*,K_1) \cap K_1 \subset D_2$, for any $0< \epsilon \le \min\{\frac{\gamma^2}{4L}, \frac{L\tau^2}{2}, \frac{L}{2}\xi^2\}$ we have 
\begin{align}
    \mu_2\bigg(  S_{\epsilon}(\x, \z^*, K_1) \cap K_1\bigg) & \leq \sum_{j=1}^{N_1} \text{vol}_{\mathbb{R}^d}\bigg(  (S_{\epsilon}(\x, \z^*,K_1) \cap K_1)  \bigcap \mathcal{B}_{\rho_{\epsilon}}(\z^*_j) \bigg) \nonumber \\
    & \leq {N_1} \times \left(\frac{8 \epsilon}{\gamma}\right)^d
         \left( \frac{\gamma^2}{16 \epsilon} {\frac{4}{L}}\right)^{r/2}   \nonumber \\
    & \leq  16^d  \,
        \frac{\Gamma(\frac{d}{2}+1)}{\pi^{d/2}}
        \,\mathrm{vol}_{\mathbb{R}^d}(D_2)\,
        \left(\frac{L}{2 \gamma^2}\right)^{d/2}
       \left( \frac{\gamma^2}{4L } \right)^{r/2} 
        \epsilon^{\frac{d-r}{2}} \hspace{0.4cm}  \mu_1 \text{ a.e.}  \label{forgegenae00}
\end{align}}
where $r \leq d-1$ from \textbf{Assumption A3}.
\end{proof}

\section{Supporting lemmas}

\begin{lemma}\label{suplem0}
    For any compact, disjoint sets $U, V$ in $\mathbb{R}^n$ there exist a $\delta >0$ such that $ U + \mathcal{B}_{\delta/3}(\mathbf{0})$,  $ V + \mathcal{B}_{\delta/3}(\mathbf{0})$ are non-intersecting uniform open covers of $U, V$ respectively.
\end{lemma}
\begin{proof}
    Since $U, V$ are compact, disjoint and $\mathbb{R}^n$ is a Hausdorff space, we get have that $$d(U,V) := \inf \{ \norm{\u-\v} : \u \in U, \v \in V\} >0 $$ Let $d(U,V) = \delta >0 $. Consider the uniform open covers $ U + \mathcal{B}_{\delta/3}(\mathbf{0})$,  $ V + \mathcal{B}_{\delta/3}(\mathbf{0})$ of $U,V$ respectively. Then for any arbitrary $\a_1 \in  U + \mathcal{B}_{\delta/3}(\mathbf{0})$, $\a_2 \in V + \mathcal{B}_{\delta/3}(\mathbf{0})$ and $\u \in U \cap \overline{\mathcal{B}_{\delta/3}(\a_1)} $, $\v \in V \cap \overline{\mathcal{B}_{\delta/3}(\a_2)} $: 
    \begin{align*}
    \norm{\u - \v} & \leq \norm{\u - \a_1} + \norm{\a_1 - \a_2} + \norm{\a_2 - \v} \leq \delta/3 + \norm{\a_1 - \a_2} +\delta/3  \\
    \implies \inf_{a_1 , a_2} \norm{\u - \v} & \leq \inf_{a_1 , a_2}2\delta/3 + {\inf_{a_1 , a_2}\norm{\a_1 - \a_2}} = 2\delta/3 +  d(U + \mathcal{B}_{\delta/3}(\mathbf{0}),V + \mathcal{B}_{\delta/3}(\mathbf{0}))   \\
    \implies \delta =  d(U,V) \leq \inf_{a_1 , a_2} \norm{\u - \v} & \leq 2\delta/3 +  d(U + \mathcal{B}_{\delta/3}(\mathbf{0}),V + \mathcal{B}_{\delta/3}(\mathbf{0})) \\
    \implies \delta / 3 & \leq  d(U + \mathcal{B}_{\delta/3}(\mathbf{0}),V + \mathcal{B}_{\delta/3}(\mathbf{0})).
    \end{align*}
\end{proof}

\begin{lemma}\label{suplem1} \cite{alberti1994structure, federer2014geometric}
    Let $A \subset \mathbb{R}^d$ be a compact convex set. Then $\partial A$ is a $(d-1)$-dimensional rectifiable set.
\end{lemma}

\begin{lemma}\label{suplem3} \cite{lee2003smooth}
    Let $A \subset \mathbb{R}^d$ be an algebraic variety. Then $A$ has zero Lebesgue measure in $\mathbb{R}^d$.
\end{lemma}

\begin{lemma}\label{suplem4}
    Consider the block matrix 
    $ \A = \bigg[  \A_1 \hspace{0.1cm} \vert \hspace{0.1cm} \A_2 \hspace{0.1cm} \vert \hspace{0.1cm} \cdots  \hspace{0.1cm} \vert \hspace{0.1cm} \A_m \bigg]$
    where $\A_i \in \mathbb{R}^{p \times q_i}$ for $i \in \{1, \cdots,m\}$. Then 
    $
        \norm{\A} \leq \sqrt{\sum_{i=1}^m \norm{\A_i}^2} .$
\end{lemma}
\begin{proof}
    Let $\v \in S^{p-1}$ be arbitrary. Since $\A$ is block matrix, $\A \A^T =  \sum_{i=1}^m \A_i \A_i^T$. Then,    
   $\norm{\A}^2 = \sup_{\v \in S^{p-1}}   \langle \v, \A \A^T\v \rangle  
   =  \sup_{\v \in S^{p-1}}\sum_{i=1}^m\langle \v, \A_i \A_i^T\v \rangle    
   \leq    \sum_{i=1}^m   \sup_{\v \in S^{p-1}} \langle \v, \A_i \A_i^T\v \rangle = \sum_{i=1}^m \norm{\A_i}^2.$ 
   Thus $\norm{\A}  \leq \sqrt{\sum_{i=1}^m \norm{\A_i}^2}$,
    which completes the proof.
\end{proof}

\section{Applicability of Assumption A1}\label{reluassumptionsec}
We now show that  \textbf{Assumption A1} is satisfied for loss functions arising in learning neural nets. Consider the empirical least squares loss function used for training an $M$ layer neural network,
\begin{align}
    f_{\texttt{ERM}}\bigg(\{\v,\W_0,\W_1,\cdots,\W_M \};\Z\bigg) &= \frac{1}{N} \sum_{j=1}^N \bigg(\v^T \rho(\W_M \hspace{0.1cm}\rho(\cdots\rho( \W_1 \rho(\W_0 \z_j))\cdots)) -y_j \bigg)^2. \label{ermforge}
\end{align}
Here, $ \{\v,\W_0,\W_1,\cdots,\W_M \}$ corresponds to the model variable $\x$ while $\Z$ is the dataset $\{\z_j\}_{j=1}^N$, 
 and $\rho$ is an activation function 
 as before. For smooth activations, \textbf{Assumption A1} holds trivially by composition of smooth functions. In addition, if $\rho \in \mathcal{C}^3(\mathbb{R})$ then \textbf{Assumption A2} holds as well. We now focus on the case when $\rho$ is leaky \texttt{ReLU} and therefore non-smooth. Formally,
\begin{align*}
    \rho(x) = \begin{cases}
        x \quad ; \quad x > 0 \\
        \alpha x \quad ; \quad x \leq 0
    \end{cases}
\end{align*}
where $\alpha \in (0,1)$ and usually $\alpha \ll 1$. We will write $ \rho(\langle \W, \y\rangle) = \langle \W, \y\rangle_{\alpha} $ and define $ \mathbb{R}_*^m \equiv \mathbb{R}^m \backslash \mathbf{0}$, $  \mathbb{R}_{**}^m \equiv \mathbb{R}^m \backslash \bigcup_{i=1}^m  \text{span}\{ \{\e_1, \cdots, \e_m \} \backslash \e_i \}$ where $\e_i$ is the $i-$th canonical basis vector of $ \mathbb{R}^m$.

\subsection{Almost everywhere smoothness of $f_{\texttt{ERM}} $ for leaky \texttt{ReLU} activation}

\subsubsection{Preliminaries}
Without loss of generality let us consider an individual summand on the right hand side of \eqref{ermforge}:
\begin{align}
    f\bigg(\{\v,\W_0,\W_1,\cdots,\W_M \};\z\bigg) = \bigg(\v^T \rho(\W_M \hspace{0.1cm}\rho(\cdots\rho( \W_1 \rho(\W_0 \z))\cdots)) -y \bigg)^2 . \label{ermforge1}
\end{align}
Then, due to the chain rule of derivatives, it suffices to show a.e. $\mathcal{C}^3$ smoothness of $\rho$ with respect to its arguments at every composition step. Suppose $ \W_i \in \mathbb{R}^{n_i \times n_{i-1}}$ for $i \in \{1,\cdots,M\}$, $n_{-1} = d$ and $\W_0 \in \mathbb{R}^{n_0 \times d}$.
We note that $f$ is $\mathcal{C}^{\infty}$ smooth in $\v \in \mathbb{R}^{n_{M}}$ since $f$ is the composition of square function and an affine function of $\v$. For any $i \geq 0$, using \eqref{ermforge1}, we define
\begin{align}
  \u_{i+1} &=  \rho(\W_{i}\cdots\rho( \W_1 \rho(\W_0 \z))\cdots)   
  \implies \u_{i+1} = \rho(\W_i \u_i) \hspace{0.3cm} \forall \hspace{0.2cm} i \geq 0, \label{reluinduct3}
\end{align}
where $\u_i \in \mathbb{R}^{n_{i-1}}$ and $\u_0 =\z \in \mathbb{R}^d$.
For any $i \geq 0$ let $U_i \subseteq  \mathbb{R}^{n_{i-1}} $ be the admissible set of $\u_i$, which we will specify later.
Then 
$$       [\rho(\W_i \u_i)]_j = \langle [\W_i]_j, \u_i \rangle_{\alpha} $$
where $ [\W_i]_j$ is the $j-th$ row vector of $\W_i$. Then $\rho$ is $\mathcal{C}^{\infty}$ smooth on the open set $\mathcal{R}_i$ given by 
\begin{align}
 \mathcal{R}_i &=    \bigg\{ (\W_i,\u_i) \in \mathbb{R}^{n_i \times n_{i-1}} \times  U_i: \langle [\W_i]_j, \u_i \rangle \neq 0 \hspace{0.2cm} \forall j \in \{1, \cdots, n_i\}   \bigg\} \nonumber \\
 &=  \bigg\{ (\W_i,\u_i) \in \mathbb{R}^{n_i \times n_{i-1}} \times  U_i  \bigg\} \backslash \overline{\mathcal{P}_i } \label{reluinduct1}
\end{align}
where
\begin{align}
     \mathcal{P}_i = \bigcup_{j=1}^{n_i} \bigg\{ (\W_i,\u_i) \in \mathbb{R}^{n_i \times n_{i-1}} \times  U_i: \langle [\W_i]_j, \u_i \rangle = 0   \bigg\} \label{pidefn}
\end{align}
and $ \overline{\mathcal{P}_i}$ is the closure of the set $ \mathcal{P}_i $. Further, if $U_i$ is open in $\mathbb{R}^{n_{i-1}}$ then $\mathcal{R}_i$ is an open set in $\mathbb{R}^{n_i \times n_{i-1}} \times U_i $. Observe that on  $\mathcal{R}_i$, the function $(\W_i, \u_i) \mapsto \rho(\W_i\u_i)$ is differentiable everywhere by the definition of $\rho$. Using the definition of the set $\mathcal{R}_i$ for any $i \geq 0$, we define $U_i$ recursively via
\begin{align}
    U_{i+1} = \rho(\mathcal{R}_{i}) \label{uidef1}
\end{align}
with $U_0 \cong \mathbb{R}^d$.
Equivalently, $U_{i+1}$ is the image of $\mathcal{R}_{i}$ under  $\rho$.

\begin{lemma}\label{algebraicvarlem1}
    The following hold for any $i>0$:
\begin{enumerate}
    \item The set $U_i \cong \mathbb{R}^{n_{i-1}}_{**}$ and hence $U_i$ is open in $ \mathbb{R}^{n_{i-1}}$, $U_i$ has full Lebesgue measure in $ \mathbb{R}^{n_{i-1}}$.
    \item The set $\mathcal{P}_i$ is a subset of the union of finitely many algebraic varieties in $ \mathbb{R}^{n_i \times n_{i-1}} \times \mathbb{R}^{n_{i-1}} $ for any $i \geq 0$ \footnote{Here, the set $ \bigg\{ (\W_i,\u_i) \in \mathbb{R}^{n_i \times n_{i-1}} \times  U_i: \langle [\W_i]_j, \u_i \rangle = 0   \bigg\} $ is a subset of an algebraic variety since both $[\W_i]_j, \u_i $ are variables in the equation $ \langle [\W_i]_j, \u_i \rangle = 0$.} and therefore has zero Lebesgue measure.
\end{enumerate}
\end{lemma}

 \begin{proof}
     We proceed with a proof by induction.
     
    \paragraph{Base Case. }For $i=0$ we have $\rho$ acting on $\W_0\z$  so $\u_1 = \rho(\W_0 \z)$ where $ \z \in U_0 \cong \mathbb{R}^d $, $\W_0 \in \mathbb{R}^{n_0 \times d}$. Hence $\rho$ is $\mathcal{C}^{\infty}$ smooth on 
    \begin{align}
        \mathcal{R}_0 &= \bigg\{ (\W_0,\z) \in \mathbb{R}^{n_0 \times d} \times  \mathbb{R}^d  \bigg\} \backslash  \overline{\mathcal{P}_0} \nonumber\\
        &= \bigg\{ (\W_0,\z) \in \mathbb{R}^{n_0 \times d} \times  \mathbb{R}^d  \bigg\} \backslash \overline{\bigcup_{j=1}^{n_0}  \bigg\{ (\W_0,\z) \in \mathbb{R}^{n_0 \times d} \times \mathbb{R}^d: \langle [\W_0]_j, \z \rangle = 0   \bigg\}} \nonumber\\
        &= \bigg\{ (\W_0,\z) \in \mathbb{R}^{n_0 \times d} \times  \mathbb{R}^d  \bigg\} \backslash \bigcup_{j=1}^{n_0}  \bigg\{ (\W_0,\z) \in \mathbb{R}^{n_0 \times d} \times \mathbb{R}^d: \langle [\W_0]_j, \z \rangle = 0   \bigg\}, \nonumber
    \end{align}
    and $\rho : \mathcal{R}_0 \mapsto \mathbb{R}^{n_0}$. Observe that 
    $ \mathcal{P}_0 = \bigcup_{j=1}^{n_0}  \bigg\{ (\W_0,\z) \in \mathbb{R}^{n_0 \times d} \times \mathbb{R}^d: \langle [\W_0]_j, \z \rangle = 0   \bigg\} $
    is a finite union of algebraic varieties in $\mathbb{R}^{n_0 \times d} \times \mathbb{R}^d $ hence of zero Lebesgue measure (Lemma \ref{suplem3}), is closed in $ \mathbb{R}^{n_0 \times d} \times \mathbb{R}^d$ so $ \mathcal{R}_0$ is open in $ \mathbb{R}^{n_0 \times d} \times \mathbb{R}^d$ and of full Lebesgue measure. Since $ U_1 = \rho(\mathcal{R}_0) $ then $ U_1 \subset \mathbb{R}_{*}^{n_0}$, in particular $U_1 \cong \mathbb{R}_{**}^{n_0}$ because the image of  $ \bigg\{ (\W_0,\z) \in \mathbb{R}^{n_0 \times d} \times  \mathbb{R}^d  \bigg\}$ under $\rho$ is $\mathbb{R}^{n_0} $ and the image of  $ \overline{\mathcal{P}_0}$ under $\rho$ is $ \bigcup_{j=1}^{n_0} \{ \u \in \mathbb{R}^{n_0} : [\u]_j =0 \} $ from the definition of $\rho$. Hence, $U_1$ has full Lebesgue measure in $ \mathbb{R}^{n_0}$. Thus, for the base case our hypothesis holds true.
    \paragraph{Induction. } Suppose 
    $U_i \cong \mathbb{R}^{n_{i-1}}_{**}$, $U_i$ has full Lebesgue measure in $ \mathbb{R}^{n_{i-1}}$ and  $ \mathcal{P}_i $ is a subset of the union of finitely many algebraic varieties in $ \mathbb{R}^{n_i \times n_{i-1}} \times \mathbb{R}^{n_{i-1}} $. Then for $i+1$,  $U_{i+1} = \rho(\mathcal{R}_i)$ where $\mathcal{R}_i$ is as in \eqref{reluinduct1}. The image of open set $\bigg\{ (\W_i,\u_i) \in \mathbb{R}^{n_i \times n_{i-1}} \times \mathbb{R}^{n_{i-1}}_{**} \bigg\} $ under $\rho$ is $\mathbb{R}^{n_i} $ since, for $\x \in \mathbb{R}^{ n_{i-1}}, \y \in \mathbb{R}^{ n_{i-1}}_{**}$, the map $g: \mathbb{R}^{ n_{i-1}} \times \mathbb{R}^{n_{i-1}}_{**} \to \mathbb{R}$, where $g(\x,\y) = \langle \x,\y\rangle_{\alpha}$, is surjective. Next, 
    \begin{align}
        \overline{\mathcal{P}_i} &= \overline{\bigcup_{j=1}^{n_i} \bigg\{ (\W_i,\u_i) \in \mathbb{R}^{n_i \times n_{i-1}} \times  U_i: \langle [\W_i]_j, \u_i \rangle = 0   \bigg\}} \nonumber \\
        &= \bigcup_{j=1}^{n_i} \overline{\bigg\{ (\W_i,\u_i) \in \mathbb{R}^{n_i \times n_{i-1}} \times \mathbb{R}^{n_{i-1}}_{**}: \langle [\W_i]_j, \u_i \rangle = 0   \bigg\}} \nonumber \\
        & = \bigcup_{j=1}^{n_i} \bigg\{ (\W_i,\u_i) \in \mathbb{R}^{n_i \times n_{i-1}} \times  \mathbb{R}^{n_{i-1}}: \langle [\W_i]_j, \u_i \rangle = 0   \bigg\}. \nonumber
    \end{align}
    Then the image of $\overline{\mathcal{P}_i}$ under $\rho$ is $\bigcup_{j=1}^{n_i} \{ \u \in \mathbb{R}^{n_i} : \u_j =0 \} $. Hence $U_{i+1} = \rho(\mathcal{R}_i) \cong \mathbb{R}^{n_i}_{**}$, so $U_{i+1}$ is open in $ \mathbb{R}^{n_i}$ and has full Lebesgue measure in $ \mathbb{R}^{n_i}$. It only remains to show that $\mathcal{P}_{i+1}$ is a subset of the union of finitely many algebraic varieties. Recall from \eqref{reluinduct1} that
    \begin{align}
        \mathcal{R}_{i+1} 
 &=  \bigg\{ (\W_{i+1},\u_{i+1}) \in \mathbb{R}^{n_{i+1} \times n_i} \times  \mathbb{R}^{n_i}_{**}  \bigg\} \backslash \overline{\mathcal{P}_{i+1}}, \nonumber
 \end{align}
 where  
 \(
 \mathcal{P}_{i+1}  = \bigcup_{j=1}^{n_{i+1}}  \bigg\{ (\W_{i+1},\u_{i+1}) \in \mathbb{R}^{n_{i+1} \times n_i} \times  \mathbb{R}^{n_i}_{**}: \langle [\W_{i+1}]_j, \u_{i+1} \rangle = 0   \bigg\}
   \)
    is therefore  a subset of the union of finitely many algebraic varieties in $ \mathbb{R}^{n_{i+1} \times n_i} \times \mathbb{R}^{n_i} $, hence of 0 measure.
 \end{proof}
In the following lemma we treat $\rho$ as a function from $ \mathbb{R}^{n_{i-1}} \times \mathbb{R}^{n_i \times n_{i-1}} \to \mathbb{R}^{n_{i}}$ with $\rho(\u, \V) = \langle \V, \u\rangle_{\alpha}$.
\begin{lemma}\label{algebraicvarlem2}
    For any $i>0$ let $X_i \cong \mathbb{R}^{n_i \times n_{i-1}}$, $Y_i \cong \mathbb{R}^{n_{i-1}}$ where $Y_i \supset U_i$ and $U_i$ is as in \eqref{uidef1}. Consider the  Cartesian product map
    $$ \rho \times \mathrm{id} : Y_{i-1} \times X_{i-1} \times X_i \to  Y_i \times X_i$$ where $ \rho : Y_{i-1} \times X_{i-1} \to Y_i$. Let $A_i$ be any subset of a finite union of algebraic varieties in $Y_i \times X_i$. Then the pre-image of $A_i$ under  $ \rho \times \mathrm{id}$, namely $ \bigg(\rho \times \mathrm{id}\bigg)^{-1}(A_i)$ is a subset of a finite union of algebraic varieties in $Y_{i-1} \times X_{i-1} \times X_i$.
\end{lemma}
\begin{proof}
    From the definition of an algebraic variety in $Y_i \times X_i$, we have that
$$ A_i \subseteq \bigcup_{j=1}^t \bigg\{  (\y,\X) \in Y_i \times X_i : p_{k_j,j}(\X,\y) = \mathbf{0} \bigg\} $$
where $p_{k_j,j}(\cdot)$ is a non-trivial degree $k_j$ vector polynomial function and $t$ is finite. That is, at least one coefficient of the polynomial in at least one entry of the vector $p_{k_j,j}(\X,\y)$ is non-zero. Define $B_i:=\bigg(\rho \times \mathrm{id}\bigg)^{-1}(A_i) \subset Y_{i-1} \times X_{i-1} \times X_i$. Noting that $\y = \rho( \u,\V) $ for $\V \in X_{i-1}$, $\u \in Y_{i-1}$ we get\footnote{Here $ [\V]_l$ denotes the vector corresponding to the $l-$th row of $\V$.}:
\begin{align}
   \bigg( \rho \times \mathrm{id}\bigg)^{-1}( A_i) &\subseteq \bigcup_{j=1}^t \bigg\{  (\u,\V,\X) \in Y_{i-1} \times X_{i-1} \times X_i : p_{k_j,j}(\X, \rho( \u,\V)) = \mathbf{0} \bigg\}  \nonumber \\
    \iff  B_i &\subseteq \bigcup_{j=1}^t \bigg\{  (\u,\V,\X) \in Y_{i-1} \times X_{i-1} \times X_i : p_{k_j,j}(\X,\langle\V ,\u \rangle_{\alpha}) = \mathbf{0} \bigg\} \nonumber \\
    \iff  B_i &\subseteq \bigcup_{j=1}^t\bigg( \bigg\{  (\u,\V,\X) \in Y_{i-1} \times X_{i-1} \times X_i : p_{k_j,j}(\X,\langle\V ,\u \rangle_{\alpha} ) = \mathbf{0}; \langle\V ,\u \rangle_{\alpha} \in \mathbb{R}^{n_{i-1}}_{**}  \bigg\} \nonumber \\ & \hspace{1.4cm} \bigcup_{l=1}^{n_{i-1}} \bigg\{  (\u,\V,\X) \in Y_{i-1} \times X_{i-1} \times X_i : p_{k_j,j}(\X,\langle\V ,\u \rangle_{\alpha}) = \mathbf{0}; \langle [\V]_l ,\u \rangle =0 \bigg\}  \bigg)  \nonumber
\end{align}
Further relaxing the last inclusion yields:
\begin{align}
    B_i &\subseteq \bigg( \bigcup_{j=1}^t \underbrace{\bigg\{  (\u,\V,\X) \in Y_{i-1} \times X_{i-1} \times X_i : p_{k_j,j}(\X,\langle\V ,\u \rangle_{\alpha}) = \mathbf{0}  \bigg\}}_{F_j} \nonumber \\ & \hspace{1.4cm} \bigcup_{l=1}^{n_{i-1}} \underbrace{\bigg\{  (\u,\V,\X) \in Y_{i-1} \times X_{i-1} \times X_i :  \langle [\V]_l ,\u \rangle =0 \bigg\} }_{G_l} \bigg) \nonumber
\end{align}
where for any $j, l$ the sets $F_j, G_l$ are algebraic varieties in $ Y_{i-1} \times X_{i-1} \times X_i$. We have for any $j$,
\begin{align*}
    \bigg\{  (\u,\V,\X) \in Y_{i-1} \times X_{i-1} \times X_i : p_{k_j,j}(\X,\langle\V ,\u \rangle_{\alpha}) = \mathbf{0}  \bigg\} & \nonumber \\ & \hspace{-5cm} \subseteq \bigcup_{q=1}^{2^{n_{i-1}}}  \underbrace{\bigg\{  (\u,\V,\X) \in Y_{i-1} \times X_{i-1} \times X_i : p_{k_j,j}(\X, \langle\V ,\u \rangle \odot \boldsymbol{\alpha}_q ) = \mathbf{0}  \bigg\}}_{H_q} \nonumber
\end{align*}
where $\boldsymbol{\alpha}_q \in \mathbb{R}^{n_{i-1}} $ is a vector of $1$'s and $\alpha$'s with $q$ indexing the $2^{{n_{i-1}}}$ such vector possibilities. $H_q$ is an algebraic variety for any permutation index $q$ provided $\alpha \neq 0$ (see \RSnew{\Cref{relucounterexrem1}} below). 
Hence $B_i = \bigg( \rho \times \mathrm{id}\bigg)^{-1}( A_i) $ is a subset of finite union of algebraic varieties in $ Y_{i-1} \times X_{i-1} \times X_i$. 
\RSnew{We note that Lemma \ref{algebraicvarlem2} can be easily generalized to the map $ \rho \times  \prod_{j=i+1}^{M}\mathrm{id}$ for arbitrary $M$ where
   $$ \rho \times  \prod_{j=i+1}^{M}\mathrm{id}   : Y_i \times X_i \times X_{i+1} \cdots \times X_M  \xrightarrow{}  Y_{i+1} \times X_{i+1} \times \cdots \times X_M \, . $$
   Then if $A \subset Y_{i+1} \times X_{i+1} \times \cdots \times X_M $ is a subset of finite union of algebraic varieties in $Y_{i+1} \times X_{i+1} \times \cdots \times X_M$, we will have that $ (\rho \times  \prod_{j=i+1}^{M}\mathrm{id})^{-1}(A) $ is a subset of finite union of algebraic varieties in $Y_i \times X_i \times X_{i+1} \times \cdots \times X_M$. We omit the generalization proof for brevity.
}
\end{proof}
\begin{remark}[On \texttt{ReLU} activations.]\label{relucounterexrem1}
    Note that when $\alpha = 0$, there exists a $q$ for which $ \boldsymbol{\alpha}_q  = \mathbf{0}$. In that case  $H_q = {\bigg\{  (\u,\V,\X) \in Y_{i-1} \times X_{i-1} \times X_i : p_{k_j,j}(\X, \langle\V ,\u \rangle \odot \mathbf{0} ) = \mathbf{0}  \bigg\}}. $ If the polynomial  $p_{k_j,j}$ is homogeneous then $ H_q = {\bigg\{  (\u,\V,\X) \in Y_{i-1} \times X_{i-1} \times X_i   \bigg\}} $, which is no longer an algebraic variety but is the entire set $ Y_{i-1} \times X_{i-1} \times X_i$ and thus has full Lebesgue measure. Note that $\alpha=0$ implies the \texttt{ReLU} activation function and Lemma \ref{algebraicvarlem2} does not hold for \texttt{ReLU} activation. It is easy to construct a simple two layer example with \texttt{ReLU} activation where the set of non-smoothness has positive measure. For instance, when $\rho$ is the \texttt{ReLU} activation, the function $\rho(\W_1 \rho (\W_0\z))$ is not smooth on the set $ \{ (\W_1,\W_0,\z) : \langle [\W_0]_j ,\z\rangle \leq 0 \quad \forall \ j\}$ which has a positive Lebesgue measure. 
\end{remark}
\begin{theorem}\label{aesmoothlem1}
    The function $f : \mathbb{R}^{n_{M}} \times \mathbb{R}^{n_{M} \times n_{M-1}} \times\cdots \times \mathbb{R}^{n_i \times n_{i-1}}  \times \cdots \times \mathbb{R}^{n_0 \times d} \times \mathbb{R}^d  \to \mathbb{R}$ defined in \eqref{ermforge1} for $\alpha>0$ is $\mathcal{C}^{\infty}$ smooth a.e. on its domain.
\end{theorem}
\begin{proof}
  From \eqref{ermforge1}, $f$ acts on $\{\v,\W_0,\W_1,\cdots,\W_M,\z \}$ 
where $\v \in \mathbb{R}^{n_{M}}$, $ \W_i  \in \mathbb{R}^{n_i \times n_{i-1}} $ for all $ 0 \leq i \leq M$, $n_{-1} = d$ and $ \z \in \mathbb{R}^d$. Let $X_i \cong \mathbb{R}^{n_i \times n_{i-1}}$ for all $0 \leq i \leq M$, let $Y_i \cong \mathbb{R}^{ n_{i-1}}$ for all $1 \leq i \leq M$ and $Y_0 \cong \mathbb{R}^d$. Hence, for any $i$ we have $U_i \subseteq Y_i$ where $\{U_i\}_{i=0}^M$ are the admissible sets defined in \eqref{uidef1} with $U_0 \cong \mathbb{R}^d$. Then for all $0 \leq i \leq M$, the  leaky \texttt{ReLU} activation function 
$$ \rho :   Y_i \times  X_i  \to Y_{i+1}$$
with $\rho(\u, \V) = \langle \V, \u\rangle_{\alpha}$ for $\V\in  X_i$, $\u \in Y_i$.
 For $Y_{M+1} \cong  \mathbb{R}^{n_{M}}$ where $\v \in Y_{M+1}$, consider the Cartesian product of maps for any $0 \leq i < M$
    $$ \rho \times \bigg( \prod_{j=i+1}^{M}\mathrm{id} \bigg) \times \mathrm{id}  : Y_i \times X_i \times X_{i+1} \cdots \times X_M \times Y_{M+1} \xrightarrow{}  Y_{i+1} \times X_{i+1} \times \cdots \times X_M \times Y_{M+1}.  $$ 
    Since the last identity map takes $Y_{M+1}$ to itself for all $i$, we can factor it out  to get, 
    for any $0 \leq i < M$,
     $$ \rho \times  \prod_{j=i+1}^{M}\mathrm{id}   : Y_i \times X_i \times X_{i+1} \cdots \times X_M  \xrightarrow{}  Y_{i+1} \times X_{i+1} \times \cdots \times X_M .  $$
  Next, consider the  chain of Cartesian product of maps
    \begin{align}
        Y_0 \times X_0 \times X_1 \times \cdots \times X_M &\xrightarrow{ \rho \times  \prod_{j=1}^M\mathrm{id} } Y_1 \times X_1 \times \cdots \times X_M \xrightarrow{ \rho \times  \prod_{j=2}^M\mathrm{id} }  Y_2 \times X_2  \times \cdots \times X_M \xrightarrow{ \rho \times  \prod_{j=3}^M\mathrm{id} } \cdots \nonumber \\ & \hspace{-2cm} \cdots \xrightarrow{\rho \times  \prod_{j=i}^M\mathrm{id} }Y_i \times X_i \times \cdots \times X_M \xrightarrow{ \rho \times  \prod_{j=i+1}^M\mathrm{id} } Y_{i+1} \times X_{i+1} \times \cdots \times X_M \xrightarrow{ \rho \times  \prod_{j=i+2}^M\mathrm{id} } \cdots \nonumber \\ & \cdots \xrightarrow{\rho \times \mathrm{id}} Y_M \times X_M \xrightarrow{\rho} \mathbb{R}^{n_{M}} .\label{chaina1}
    \end{align}
For any given $i>0$ we write a triple sequence with the Cartesian product of maps:
\begin{align}
   Y_{i-1} \times X_{i-1} \times \cdots \times X_M  \xrightarrow{\rho \times  \prod_{j=i}^M\mathrm{id} }Y_i \times X_i \times \cdots \times X_M \xrightarrow{ \rho \times  \prod_{j=i+1}^M\mathrm{id} } Y_{i+1} \times X_{i+1} \times \cdots \times X_M.  \label{shortchain1}
\end{align}
We know that on the set $ \mathcal{P}_i \subset Y_i \times X_i$ where $ \mathcal{P}_i$ is defined in \eqref{pidefn}, the map $\rho : Y_i \times X_i  \to Y_{i+1}$ is non-smooth. Hence, the second Cartesian product of maps given by $\bigg(\rho \times  \prod_{j=i+1}^M\mathrm{id}\bigg) $ in  \eqref{shortchain1} is non-smooth on the product set $ \mathcal{P}_i \times X_{i+1} \times \cdots \times X_M$. For $\rho \times \mathrm{id}: Y_{i-1} \times X_{i-1} \times X_i \to Y_{i} \times X_i$, let
\begin{align}
   \bigg(\rho \times \mathrm{id}\bigg)^{-1} (\overline{\mathcal{P}_i} ) \times  \bigg(\prod_{j=i+1}^M\mathrm{id}\bigg)^{-1}(  X_{i+1} \times \cdots \times X_M)= \bigg(\rho \times  \prod_{j=i}^M\mathrm{id}\bigg)^{-1}( \overline{\mathcal{P}_i}\times X_{i+1} \times \cdots \times X_M) \label{shortchain2}
\end{align} 
be the pre-image of $\overline{ \mathcal{P}_i} \times X_{i+1} \times \cdots \times X_M$ in the set $ Y_{i-1} \times X_{i-1} \times X_i \times \cdots \times X_M $ where the above equality holds by the bijection of identity maps.
Next, factoring out the Cartesian product $ \prod_{j=i+1}^M\mathrm{id}$ from \eqref{shortchain1} yields the  triple sequence
\begin{align}
   Y_{i-1} \times X_{i-1} \times X_i   \xrightarrow{\rho \times \mathrm{id}  }Y_i \times X_i \xrightarrow{ \rho } Y_{i+1} .  \label{shortchain3}
\end{align}
Recall from Lemma \ref{algebraicvarlem1} that for any $i>0$ , $U_i \cong \mathbb{R}^{n_{i-1}}_{**}$ and hence $ U_i = Y_i \backslash E_i$ where $E_i$ is the union of all $n_{i-1} -1$ dimensional canonical hyperplanes of $ \mathbb{R}^{n_{i-1}}$. As this  is a finite union of algebraic varieties,  $E_i$ has zero Lebesgue measure in $Y_i \cong \mathbb{R}^{n_{i-1}}$. Also, recall from Lemma \ref{algebraicvarlem1} that the sets $\overline{\mathcal{P}_i}$ are subsets of algebraic varieties in $Y_i \times X_i$ and therefore have zero Lebesgue measure in $Y_i \times X_i$. Then the sets $\overline{\mathcal{P}_i}$ have zero Lebesgue measure in $U_i \times X_i$ since $ U_i = Y_i \backslash E_i$ and $E_i$ has zero Lebesgue measure in $Y_i$.
For certain projection maps $\pi : Y_i \to U_i$ and $ \gamma : U_i \times X_i \to \mathcal{R}_i$ for any $i$, where the set $\mathcal{R}_i$ is defined from \eqref{reluinduct1} with $\mathcal{R}_i = ( U_i \times X_i )\backslash \overline{\mathcal{P}_i} $ , consider the following diagram of maps:
    \[
\begin{tikzcd}
    & Y_{i-1} \times X_{i-1} \times X_i \arrow[d, "\pi \times \mathrm{id} \times \mathrm{id} "] \arrow[r, "\rho \times \mathrm{id}" ] & Y_i \times X_i \arrow[d, "\pi \times \mathrm{id}"] \arrow[r, "\rho"] & Y_{i+1} \arrow[d, "\pi"]  \\
  & U_{i-1} \times X_{i-1} \times X_i \arrow[d, "\gamma \times \mathrm{id} "] \arrow[r, "\rho \times \mathrm{id}"] & U_i \times X_i \arrow[d, "\gamma "] \arrow[r, "\rho"] & U_{i+1} \arrow[d, "\mathrm{id} "]\\
   & \mathcal{R}_{i-1} \times X_i  \arrow[d, "\cong "] \arrow[r, "\rho \times \mathrm{id}"] & \mathcal{R}_i  \arrow[d, "\cong "] \arrow[r, "\rho"] & U_{i+1}  \arrow[d, "\cong "] \\
   & \bigg(\bigg((Y_{i-1} \backslash E_{i-1})\times X_{i-1}\bigg)\backslash \overline{\mathcal{P}_{i-1}}\bigg) \times X_i  \arrow[r, "\rho \times \mathrm{id}"] & \bigg( (Y_{i} \backslash E_{i}) \times X_i \bigg)\backslash \overline{\mathcal{P}_{i}}  \arrow[r, "\rho"] & Y_{i+1}\backslash E_{i+1}
\end{tikzcd}
\]

Then in the bottom most row of the above diagram, the map $\rho$ is $\mathcal{C}^{\infty}$ smooth a.e. on $Y_i \times X_i$ due to the fact that the sets $ (E_i \cap Y_i) \times X_i$, $ \overline{\mathcal{P}_{i}}$ are subsets of finite unions of algebraic varieties in $Y_i \times X_i$ and hence these sets have zero Lebesgue measure (Lemma \ref{suplem3}). 
Similarly, the map $\rho \times \mathrm{id}$ is $\mathcal{C}^{\infty}$ smooth a.e. on $Y_{i-1} \times X_{i-1} \times X_{i}$ due to the fact that the sets $ (E_{i-1} \cap Y_{i-1}) \times X_{i-1} \times X_i$, $ \overline{\mathcal{P}_{i-1}} \times X_i$ are subsets of finite union of algebraic varieties in $Y_i \times X_i$ and hence have zero Lebesgue measure. 
Moreover, the composition $  \rho\circ(\rho \times \mathrm{id}) : Y_{i-1} \times X_{i-1} \times X_i \to  Y_{i+1} $ is non-smooth on the sets $ (E_{i-1} \cap Y_{i-1}) \times X_{i-1} \times X_i$, $ \overline{\mathcal{P}_{i-1}} \times X_i$ and also on the sets $(\rho \times \mathrm{id})^{-1} ((E_i \cap Y_i) \times X_i) $, $ (\rho \times \mathrm{id})^{-1} (\overline{\mathcal{P}_{i}})$ which are pre-images of the sets $ (E_i \cap Y_i) \times X_i$, $ \overline{\mathcal{P}_{i}}$ under $\rho \times \mathrm{id}$. But since the sets $ (E_i \cap Y_i) \times X_i$, $ \overline{\mathcal{P}_{i}}$ are subsets of finite union of algebraic varieties in $Y_i \times X_i$, their pre-images $(\rho \times \mathrm{id})^{-1} ((E_i \cap Y_i) \times X_i) $, $ (\rho \times \mathrm{id})^{-1} (\overline{\mathcal{P}_{i}})$ are also subsets of finite union of algebraic varieties in $Y_{i-1} \times X_{i-1} \times X_i$ from Lemma \ref{algebraicvarlem2} and thus have zero Lebesgue measure. Hence, the non-smooth support of the composition $  \rho\circ(\rho \times \mathrm{id}) $ in $ Y_{i-1} \times X_{i-1} \times X_i $ is a subset of a finite union of algebraic varieties in $Y_{i-1} \times X_{i-1} \times X_i$ which has zero Lebesgue measure. Hence, the composition $  \rho\circ(\rho \times \mathrm{id}) $ is $\mathcal{C}^{\infty}$ smooth a.e. on $ Y_{i-1} \times X_{i-1} \times X_i $ with the set of non-smoothness contained in a finite union of algebraic varieties. Since $i$ was arbitrary, for any $i$ and using the complete chain \eqref{chaina1} we can take the pre-images of these non-smooth supports recursively up to the set $ Y_0 \times X_0 \times X_1 \times \cdots \times X_M$. Then by recursively applying \RSnew{a generalized version of} Lemma \ref{algebraicvarlem2} we get that all such pre-images will be a subset of finite union of algebraic varieties in $ Y_0 \times X_0 \times X_1 \times \cdots \times X_M$. Hence the composite map from the complete chain \eqref{chaina1} given by 
$$ Y_0 \times X_0 \times X_1 \times \cdots \times X_M  \xrightarrow[]{ \rho\circ (\rho \times \mathrm{id}) \circ \cdots\circ(\rho \times  \prod_{j=2}^M\mathrm{id})\circ(\rho \times  \prod_{j=1}^M\mathrm{id})} \mathbb{R}^{n_{M}} $$
is $\mathcal{C}^{\infty}$ smooth a.e. on $Y_0 \times X_0 \times X_1 \times \cdots \times X_M $.
 Applying chain rule  to $f\bigg(\{\v,\W_0,\W_1,\cdots,\W_M \};\z\bigg)$ from \eqref{ermforge1} then yields that $f \in \mathcal{C}^{\infty} \hspace{0.2cm} a.e. \hspace{0.2cm} \text{on } \mathbb{R}^{n_{M}} \times \mathbb{R}^{n_{M} \times n_{M-1}} \times\cdots \times \mathbb{R}^{n_i \times n_{i-1}}  \times \cdots \times \mathbb{R}^{n_0 \times d} \times \mathbb{R}^d  \hspace{0.2cm}.$ 
\end{proof}

Thus, \textbf{Assumptions A1 and A2} hold for the loss function in \eqref{ermforge}. Note that the set of points of non-smoothness, denoted by $A$, within the domain $Y_0 \times X_0 \times X_1 \times \cdots \times X_M$ need not be closed. However, since we have shown that $A$ is contained in the union of finitely many algebraic varieties in $Y_0 \times X_0 \times X_1 \times \cdots \times X_M$, we may instead take its closure $\bar{A}$ as the set of non-smoothness. The closure $\bar{A}$ remains a subset of a finite union of algebraic varieties, and hence \textbf{Assumption A1} is satisfied.

\section{Applicability of \textbf{Assumption A3.}}\label{degenerateassumptionsec}
We verify that \textbf{Assumption A3} holds in standard models.  
\paragraph{Linear regression.}  
For $f(\x;(\z,y)) = \tfrac{1}{2}(\x^T\z-y)^2$, the mixed derivative is
\[
\nabla_{(\z,y)} \nabla_{\x} f(\x;(\z,y)) 
= (\x^T \z - y)\begin{bmatrix}\mathbf{I}_{d\times d} & \mathbf{0_{d\times 1}}\end{bmatrix} 
+ \z \begin{bmatrix}\x \\ -1\end{bmatrix}^T.
\]
This is the sum of a rank-$d$ and a rank-$1$ matrix, and thus has rank at least $d-1$ whenever $\x^T\z \neq y$. Since $\x^T \z =y$ is an algebraic variety in $\mathbb{R}^d \times \mathbb{R}^d \times \mathbb{R} $, the set $ \{(\x; (\z,\y)) \in \mathbb{R}^d \times \mathbb{R}^d \times \mathbb{R}  : \x^T \z =y \}$ has zero Lebesgue measure in $ \mathbb{R}^d \times \mathbb{R}^d \times \mathbb{R}$. For any fixed $\x$, the set $ \{ (\z,\y) \in  \mathbb{R}^d \times \mathbb{R}  : \x^T \z =y \}$ is a hyperplane in $\mathbb{R}^{d+1}$ hence of zero Lebesgue measure in $ \mathbb{R}^d \times \mathbb{R}$. Thus $ \nabla_{(\z,y)} \nabla_{\x} f(\x;(\z,y))$, when defined for any $\x$, is a least of rank $d-1$ a.e. on the data slice $\mathbb{R}^d \times \mathbb{R}$. \RSnew{Taking $D_1 \times D_2$ to be any closed ball in $\mathbb{R}^d \times \mathbb{R}^d \times \mathbb{R} $ such that its elements satisfy \( |\x^T \z -y |\ge \gamma >0 \)
 implies that there exists at least $d-1$ singular values of $\nabla_{(\z,y)} \nabla_{\x} f(\x;(\z,y)) $ greater than or equal to $\gamma$,} thereby satisfying \textbf{Assumption A3}. \RSnew{Since\footnote{\RSnew{Recall that $E_{<\gamma}(\z,y) $, defined in Lemma~\ref{seconvarlem2}, is the vector space spanned by the right singular vectors of $\nabla_{(\z,y)} \nabla_{\x} f(\x;(\z,y))$ where the singular values are less than $\gamma$.} } for any $ \w, (\z,y) \in D_1 \times D_2$ we have $ \dim(E_{<\gamma}(\z,y)) + \dim(E_{<\gamma}(\z,y)^\perp ) = d+1$ and $\dim(E_{<\gamma}(\z,y)^\perp ) \ge d-1 $, it must be that $r_{\gamma} \le 2 =r$ }, and since $f$ is analytic, \textbf{A1–A2} also hold.
\paragraph{One-layer neural networks.}  
Consider $f(\bm W,\bm v;(\bm x,y)) = \tfrac{1}{2}(\bm v^T\rho(\bm W\bm x)-y)^2$ with analytic, non-constant activation $\rho$. From Proposition~\ref{prop:NNex}, 
\[
\nabla_{\bm v} f = \big(\bm v^T\rho(\bm W\bm x)-y\big)\,\rho(\bm W\bm x).
\]
Differentiating with respect to $y$ gives
\[
\frac{\partial}{\partial y}\big(\nabla_{\bm v} f\big) = -\,\rho(\bm W\bm x).
\]
If $\rho$ is strictly positive (e.g. sigmoid), then $\rho(\bm W\bm x)\neq \bm 0$ for all $\bm x$, so \RSnew{for every compact set $D_1 \times D_2$ there  exists a $\gamma >0$ such that  $\nabla_{(\bm x,y)} \nabla_{\bm v} f$ has at least one singular value greater than $\gamma$ on any point in $D_1 \times D_2$ by continuity and strict positivity of $ \rho $. This singular value is from the component $ \frac{\partial}{\partial y}\big(\nabla_{\bm v} f\big)$.}  
If $\rho$ can vanish (e.g. $\tanh$), the zero set $\{\z : \rho(\W \z)= \bm 0\}$ is a proper real-analytic subset of $\mathbb{R}^d$ \RSnew{due to the facts that the function $g(\z) := \rho  \, ( \W\z)  := \tanh(\W \z)$ is a composition of analytic and affine function hence analytic in $\z$, $g$ is not zero everywhere for $\W \neq \mathbf{0}$, and that the zero set of non-constant analytic function is an analytic variety. Hence the zero set $\{\z : \rho(\W \z)= \bm 0\}$ is of Lebesgue measure zero.} \RSnew{Then by taking compact set $D_1 \times D_2$ away from the zero set $\{\z : \rho(\W \z)= \bm 0\}$, for some $\gamma>0$ the matrix operator $\nabla_{(\bm x,y)} \nabla_{\bm v} f$ has at least one singular value greater than $\gamma$ at any point in $D_1 \times D_2$ by continuity and strict positivity/ negativity of $\rho $.} Thus in either case $\nabla_{(\bm x,y)} \nabla_{\bm v} f$ has at least $1$ \RSnew{right singular vector with singular value greater than $\gamma$ for every $ (\W, \v), (\bm x,y) \in D_1 \times D_2$.}  
Therefore, \textbf{Assumption A3} is satisfied, and the same reasoning should extend to deeper networks with analytic activations.

\subsection{\RSnew{On the scaling limit of volume estimates for linear regression}}\label{degenerateassumptionsec_aux}

\RSnew{For linear regression, and  any fixed compact set $D_1 \times D_2$ containing 
$(\x_1,(\z_1,y_1))$ and $(\x_2,(\z_2,y_2))$, one may take
\(
L
=
2\sup_{(\x,(\z,y))\in D_1\times D_2}
\max\!\left\{\|\z\|,\sqrt{1+\|\x\|^2}\right\}
\)
as a Lipschitz constant. Indeed,
\begin{align}
&\bigl\|\nabla_{(\z,y)} \nabla_{\x} f(\x_1;(\z_1,y_1))
      - \nabla_{(\z,y)} \nabla_{\x} f(\x_2;(\z_2,y_2))\bigr\|_2 \nonumber\\
&\qquad=
\left\|
\bigl((\x_1^T \z_1 - y_1) - (\x_2^T \z_2 - y_2)\bigr)
\begin{bmatrix}\mathbf{I}_{d\times d} & \mathbf{0}_{d\times 1}\end{bmatrix}
+
\z_1 \begin{bmatrix}\x_1\\-1\end{bmatrix}^T
-
\z_2 \begin{bmatrix}\x_2\\-1\end{bmatrix}^T
\right\|_2 \nonumber\\
&\qquad\le
\left|
(\x_1^T \z_1 - y_1) - (\x_2^T \z_2 - y_2)
\right|
+
\left\|
\z_1 \begin{bmatrix}\x_1\\-1\end{bmatrix}^T
-
\z_2 \begin{bmatrix}\x_2\\-1\end{bmatrix}^T
\right\|_2 \nonumber\\
&\qquad\le
\|\z_1\|\,\|\x_1-\x_2\|
+
\|\x_2\|\,\|\z_1-\z_2\|
+
|y_1-y_2|
+
\|\z_1\|\,\|\x_1-\x_2\|
+
\sqrt{1+\|\x_2\|^2}\,\|\z_1-\z_2\| \nonumber\\
&\qquad\le
2\biggl(
\sup_{\x_2,\z_1\in D_1\times (D_2/y)}
\max\!\left\{\|\z_1\|,\sqrt{1+\|\x_2\|^2}\right\}
\biggr)
\bigl(\|\x_1-\x_2\|+\|\z_1-\z_2\|+|y_1-y_2|\bigr).
\end{align}
\\
For simplicity, let $D_1$ and the projection $(D_2/y)$ onto the $\z$ coordinates be cubes of side length $1$ in $\mathbb{R}^d$, centered at the origin, and suppose that for any $\x,(\z,y)\in D_1\times D_2$ we have
\(
|\x^T\z-y|\ge \gamma
\)
for some $\gamma>0$ independent of $d$. This condition is satisfied whenever
\(
|y|\ge \frac{d}{4}+\gamma,
\)
since
\(
|\x^T\z|
\le \|\x\|\,\|\z\|
\le \frac{d}{4}
\)
for $\x,\z$ in side-length-one cubes centered at the origin. Taking cubes rather than unit balls ensures that the volume of the compact set does not collapse to $0$ as $d\to\infty$.
}

\RSnew{
If, in addition,
\(
\frac{d}{4}+\gamma \le y \le \frac{d}{4}+\gamma+1,
\)
then $\operatorname{diam}(D_2)=\sqrt{d+1}$. Moreover, from the preceding estimate we may take
\(
L\le 2\sqrt{d+1}.
\)
Using the simplified bound \eqref{thm2simple1} from Theorem~\ref{seconvarthm1} with target data point $(\z^*,y^*)$, with fixed $\gamma\in(0,4)$ independent of $d$, and replacing $d$ by $d+1$, we obtain
\[
\begin{aligned}
\mu_2\bigl(S_{\epsilon}(\x,(\z^*,y^*))\bigr)
&\lesssim_{C_1(d)}
\sqrt{\pi(d+1)}
\left(
\frac{144\,L\,(d+1)\,(\operatorname{diam}(D_2))^2}{\pi e\gamma^2}
\right)^{(d+1)/2}
\epsilon^{\frac{d+1-r}{2}}\\
&\lesssim_{C_1(d)}
\sqrt{\pi}
\left(
\frac{288\,(d+1)^{\frac52+\frac{1}{d+1}}
\,\epsilon^{\frac{d-1}{d+1}}}{\pi e\gamma^2}
\right)^{(d+1)/2},
\end{aligned}
\]
where $r=2$ for linear regression and $C_1(d)\to 1$ as $d\to\infty$.
Next, replacing $d$ by $d+1$ in \eqref{eq:eps_bigO} gives
\[
\epsilon
=
\mathcal{O}\!\left(
(d+1)^{-\frac{3(1+a)}{2}}
(d+1)^{-\frac{(1+a)(d+2)}{d+1}}
\right)
=
\mathcal{O}\!\left(
(d+1)^{-\frac{5(1+a)}{2}}
(d+1)^{-\frac{1+a}{d+1}}
\right).
\]
Fix $a>0$, and set
\(
\epsilon
=
(d+1)^{-\frac{5(1+a)}{2}}
(d+1)^{-\frac{1+a}{d+1}}.
\)
Then
\[
\epsilon^{\frac{d-1}{d+1}}
=
(d+1)^{-\frac{5(1+a)}{2}}
(d+1)^{-\frac{1+a}{d+1}}
(d+1)^{\frac{5(1+a)}{d+1}}
(d+1)^{\frac{2(1+a)}{(d+1)^2}},
\]
and therefore
\[
\mu_2\bigl(S_{\epsilon}(\x,(\z^*,y^*))\bigr)
\lesssim_{C_1(d)}
\sqrt{\pi}
\left(
\frac{288}{\pi e\gamma^2}
(d+1)^{
-\frac{5a}{2}
+\frac{5+4a}{d+1}
+\frac{2(1+a)}{(d+1)^2}
}
\right)^{(d+1)/2}.
\]
Since
\(
-\frac{5a}{2}
+\frac{5+4a}{d+1}
+\frac{2(1+a)}{(d+1)^2}
\longrightarrow
-\frac{5a}{2}<0\)
as \(d\to\infty,
\)
the quantity inside parentheses tends to $0$. Hence
\(
\lim_{d\to\infty}\mu_2\bigl(S_{\epsilon}(\x,(\z^*,y^*))\bigr)=0.
\)
}

\section{Geometry of the set $K_1$ for a two layer neural network}\label{sectiongeomk1}

Consider the loss function in $\v,\W_1,\W_0,\z$ with the leaky ReLU activation function:
$$ f(\v,\W_1,\W_0;\z)  = \bigg(\v^T \rho(\W_1 \rho(\W_0 \z) ) -y\bigg)^2  .$$
The function $f : \mathbb{R}^{n_1} \times \mathbb{R}^{n_1 \times n_0} \times \mathbb{R}^{n_0 \times d} \times \mathbb{R}^{d} \to \mathbb{R}$ is non-smooth on the set given by
$$ V = \bigg( \bigcup_{i=1}^{n_0}  \bigg\{  ( \v,\W_1,\W_0,\z):  \langle [\W_0]_i, \z\rangle = 0 \bigg\}    \bigcup  \bigg( \bigcup_{i=1}^{n_1}\bigg\{  ( \v,\W_1,\W_0,\z): \langle  [\W_1]_i, \rho(\W_0\z)\rangle = {0} \bigg\} \bigg) \bigg)$$
and for any non-zero $\tilde{\v},\tilde{\W}_1,\tilde{\W}_0$, the restriction of $f$ on the slice $$   J_{s} =\{ (\tilde{\v},\tilde{\W}_1,\tilde{\W}_0, \z) : \z \in \mathbb{R}^d \}$$ is non-smooth on the closed subset $V_2$ of this slice $J_s$ where
\begin{align}
    V_2 &=  \bigg( \bigcup_{i=1}^{n_0}  \bigg\{  \z:  \langle [\tilde{\W}_0]_i, \z\rangle = 0 \bigg\}    \bigcup  \bigg(\bigcup_{i=1}^{n_1} \bigg\{ \z : \langle  [\tilde{\W}_1]_i, \rho(\tilde{\W}_0\z)\rangle = {0} \bigg\} \bigg) \bigg)  \nonumber \\
    & \subseteq  \bigg( \bigcup_{i=1}^{n_0}  \bigg\{  \z:  \langle [\tilde{\W}_0]_i, \z\rangle = 0 \bigg\}    \bigcup  \bigg(\bigcup_{q=1}^{2^{n_0}} \bigcup_{i=1}^{n_1} \bigg\{ \z : \langle  [\tilde{\W}_1]_i, \tilde{\W}_0\z \odot \boldsymbol{\alpha}_q \rangle = {0} \bigg\} \bigg) \bigg)  \nonumber
\end{align}
where $\boldsymbol{\alpha}_q \in \mathbb{R}^{n_{0}}  $ is a vector of the permutations of $1$'s and $\alpha$'s with permutations ranging from all $1$'s to all $\alpha$'s.
Then for compact, convex $D_2$ with non-empty interior and for any $\xi \in (0,R)$ where\footnote{Here $d_H(\cdot,\cdot)$ is the Hausdorff distance and $\z_c$ is the center of $D_2$.} $ R = d_H(\z_c, \partial D_2) $, we have $K_1 =  D_2 \backslash (V_2+ \mathcal{B}_{\xi}(\mathbf{0})) $. 
Moreover,  $V_2 \subset \mathbb{R}^d$ is the subset of union of at most $n_0+n_1 2^{n_0}$ hyperplanes passing through origin so the set $K_1$ is the complement of $\xi$ thickening of these hyperplanes.
$K_1$ is thus a subset of disjoint union of at most $n_0 + n_1 2^{n_0}$ cones embedded in the compact, convex set $D_2$. When $D_2$ is a closed ball with center at origin we have 
\begin{align}
    \text{vol}_{\mathbb{R}^d}((V_2 \cap D_2) + \mathcal{B}_{\xi}(\mathbf{0})) &\leq 2\xi\sum_{j=1}^{n_0+ n_1 2^{n_0}} \text{vol}_{\mathbb{R}^{d-1}}(\mathcal{B}_{R}(\mathbf{0})) - (n_0 +n_1 2^{n_0}-1) \text{vol}_{\mathbb{R}^{d}}(\mathcal{B}_{\xi}(\mathbf{0}))\nonumber\\
    &= \frac{2\xi (n_0+n_1 2^{n_0})\pi^{\frac{d-1}{2}} R^{d-1}}{\Gamma(\frac{d+1}{2})} -  \frac{(n_0 +n_1 2^{n_0}-1)\pi^{\frac{d}{2}} {\xi}^d}{\Gamma(\frac{d}{2}+1)} .\nonumber
\end{align}
Then,
\begin{align}
    \text{vol}_{\mathbb{R}^d}(K_1) & = \text{vol}_{\mathbb{R}^{d}}(\mathcal{B}_{R}(\mathbf{0})) - \text{vol}_{\mathbb{R}^d}((V_2 \cap D_2) + \mathcal{B}_{\xi}(\mathbf{0})) \nonumber \\
 \implies  \frac{\pi^{\frac{d}{2}} R^d}{\Gamma(\frac{d}{2}+1)} \geq   \text{vol}_{\mathbb{R}^d}(K_1)  &    \geq  \frac{\pi^{\frac{d}{2}} (R^d  {+}(n_0 +n_1 2^{n_0}-1) {\xi}^d )}{\Gamma(\frac{d}{2}+1)} -\frac{2\xi (n_0+n_1 2^{n_0})\pi^{\frac{d-1}{2}} R^{d-1}}{\Gamma(\frac{d+1}{2})}. \label{geomk1}
\end{align}
Since $ 0\leq \nu_1 < \text{vol}_{\mathbb{R}^{d}}(\mathcal{B}_{R}(\mathbf{0})) - \text{vol}_{\mathbb{R}^d}(K_1) = \text{vol}_{\mathbb{R}^d}((V_2 \cap D_2) + \mathcal{B}_{\xi}(\mathbf{0}))   $ we have the bound:
$$ \nu_1 <  \frac{2\xi (n_0+n_1 2^{n_0})\pi^{\frac{d-1}{2}} R^{d-1}}{\Gamma(\frac{d+1}{2})} -  \frac{(n_0 +n_1 2^{n_0}-1)\pi^{\frac{d}{2}} {\xi}^d}{\Gamma(\frac{d}{2}+1)} .$$
\RSnew{Letting $ \frac{2\xi (n_0+n_1 2^{n_0})\pi^{\frac{d-1}{2}} R^{d-1}}{\Gamma(\frac{d+1}{2})} -  \frac{(n_0 +n_1 2^{n_0}-1)\pi^{\frac{d}{2}} {\xi}^d}{\Gamma(\frac{d}{2}+1)} = 2\nu_1$ then gives a polynomial equation in $\xi$ as a function of $\nu_1$ and the positive root of this equation describes the explicit relation between $\xi$ and $\nu_1$.   }

\bibliographystyle{plainnat} 
\bibliography{reference} 

\end{document}